\newtheorem*{rep@theorem}{\rep@title}
\newenvironment{oneshot}[1]{\def\rep@title{#1} \begin{rep@theorem}}{\end{rep@theorem}}
\renewcommand{\cite}{\citep}
\begin{document}

%\icmltitlerunning{Recovery guarantee of weighted low-rank approximation via alternating minimization}
%
%
%
%\twocolumn[
%\icmltitle{Recovery guarantee of weighted low-rank approximation \\ via alternating minimization}
%
%% It is OKAY to include author information, even for blind
%% submissions: the style file will automatically remove it for you
%% unless you've provided the [accepted] option to the icml2016
%% package.
%\icmlauthor{Yuanzhi Li}{yuanzhil@cs.princeton.edu}
%\icmlauthor{Yingyu Liang}{yingyul@cs.princeton.edu}
%\icmlauthor{Andrej Risteski}{risteski@cs.princeton.edu}
%\icmladdress{Princeton University,
            %35 Olden St, Princeton, NJ 08540 USA}
%
%% You may provide any keywords that you 
%% find helpful for describing your paper; these are used to populate 
%% the "keywords" metadata in the PDF but will not be shown in the document
%\icmlkeywords{weighted low rank, matrix completion, alternating minimization, non-convex optimization}
%
%\vskip 0.3in
%]

\title{Recovery guarantee of weighted low-rank approximation via alternating minimization}

\author{
Yuanzhi Li\thanks{Department of Computer Science, Princeton University. Email:\texttt{yuanzhil@cs.princeton.edu}}, ~
Yingyu Liang\thanks{Department of Computer Science, Princeton University. Email: \texttt{yingyul@cs.princeton.edu}},~
Andrej Risteski\thanks{Department of Computer Science, Princeton University. Email:\texttt{risteski@cs.princeton.edu}}
}
\date{}

\maketitle

%!TEX root = main.tex
\begin{abstract} 
Many applications require recovering a ground truth low-rank matrix from noisy observations of the entries, which in practice is typically formulated as a weighted low-rank approximation problem and solved by non-convex optimization heuristics such as alternating minimization.
In this paper, we provide provable recovery guarantee of weighted low-rank via a simple alternating minimization algorithm. In particular, for a natural class of matrices and weights and without any assumption on the noise, we bound the spectral norm of the difference between the recovered matrix and the ground truth, by the spectral norm of the weighted noise plus an additive error that decreases exponentially with the number of rounds of alternating minimization, from either initialization by SVD or, more importantly, random initialization. 
These provide the first theoretical results for weighted low-rank via alternating minimization with non-binary deterministic weights, significantly generalizing those for matrix completion, the special case with binary weights, since our assumptions are similar or weaker than those made in existing works.  Furthermore, this is achieved by a very simple algorithm that improves the vanilla alternating minimization with a simple clipping step.

The key technical challenge is that under non-binary deterministic weights, na\"ive alternating steps will destroy the incoherence and spectral properties of the intermediate solutions, which are needed for making progress towards the ground truth.
We show that the properties only need to hold in an average sense and can be achieved by the clipping step. 

We further provide an alternating algorithm that uses a whitening step that keeps the properties via SDP and Rademacher rounding and thus requires weaker assumptions. This technique can potentially be applied in some other applications and is of independent interest.
\end{abstract} 

%!TEX root = main.tex
\section{Introduction}

%% motivate recover low rank, with weights 

Recovery of low-rank matrices has been a recurring theme in recent years in machine learning, signal processing, and numerical linear algebra, since in many applications, the data is a noisy observation of a low-rank ground truth matrix. 
Typically, the noise on different entries is not identically distributed, which naturally leads to a weighted low-rank approximation problem: given the noisy observation $\bM$, one tries to recover the ground truth by finding $\btM$ that minimizes $\|\bM - \btM\|^2_\bW = \sum_{ij} \bW_{i,j} (\bM_{i,j} - \btM_{i,j})^2$ where the weight matrix $\bW$ is chosen according to prior knowledge about the noise.
For example, the co-occurrence matrix for words in natural language processing applications~\cite{pennington2014glove,arora2015random} is such that the noise is larger when the co-occurrence of two words is rarer. When doing low-rank approximation on the co-occurrence matrix to get word embeddings, it has been observed empirically that a simple weighting can lead to much better performance than the unweighted formulation (see, e.g.,~\cite{levy2014neural}). In  biology applications, it is often the case that the variance of the noise is different for each entry of a data matrix,  due to various reasons such as different properties of different measuring devices. A natural approach to recover the ground truth matrix is to solve a weighted low-rank approximation problem where the weights are inversely proportional to the variance in the entries~\cite{gadian1982nuclear,wentzell1997maximum}. Even for collaborative filtering, which is typically modeled as a matrix completion problem that assigns weight $1$ on sampled entries and $0$ on non-sampled entries,  one can achieve better results when allowing non-binary weights~\cite{srebro2003weighted}. 

%% low rank NP-hard; in practice alt min; little theorectical guarantee
In practice, the weighted low-rank approximation is typically solved by non-convex optimization heuristics. One of the most frequently used is alternating minimization, which sets $\btM$ to be the product of two low-rank matrices and alternates between updating the two matrices. Although it is a natural heuristic to employ and also an interesting theoretical question to study, to the best of our knowledge there is no guarantee for alternating minimization for weighted low-rank approximation. Moreover, general weighted low-rank approximation is NP-hard, even when the ground truth is a rank-1 matrix~\cite{gillis2011low}. 
%The only theoretical guarantee we are aware of for this problem is provided by~\citet{negahban2012restricted}, who showed recovery guarantees of convex relation when the weight has rank-1 (and the ground truth has nice properties such as low-rankness and spikiness).  
%The rank-1 assumption of the weight matrix is naturally induced by row/column weighted entrywise sampling, but is rarely true for many applications where the weights are introduced to battle the different noise across the entries.
%Furthermore, the result is not for alternating minimization. 

%% matrix completion
A special case of weighted low-rank approximation is matrix completion, where the weights are binary.
Most methods proposed for solving this problem rely on the assumptions that the observed entries are sampled uniformly at random, and additionally often the observations need to be re-sampled across different iterations of the algorithm. This is inherently infeasible for the more general weighted low-rank approximation, and thus their analysis is not portable to the more general problem. The few exceptions that work with deterministic weights are~\cite{heiman2014deterministic,lee2013matrix,bhojanapalli2014universal}. In this line of work the state-of-the-art is~\cite{bhojanapalli2014universal}, who proved recovery guarantees under the assumptions that the ground truth has a strong version of incoherence and the weight matrix has a sufficiently large spectral gap. However, their results still only work for binary weights, use a nuclear norm convex relaxation and do not consider noise on the observed entries. 

%% we provide theorectical; no notation
In this paper, we provide the first theoretical guarantee for weighted low-rank approximation via alternating minimization, under assumptions generalizing those in~\cite{bhojanapalli2014universal}. In particular, assuming that the ground truth has a strong version of incoherence and the weight matrix has a sufficiently large spectral gap, we show that the spectral norm of the difference between the recovered matrix and the ground truth matrix is bounded by the spectral norm of the weighted noise plus an additive error term that decreases exponentially with the number of rounds of alternating minimization, from either initialization by SVD or, more importantly, random initialization.  We emphasize that the bounds hold without any assumption on the noise, which is particularly important for handling complicated noise models.
Since uniform sampling can satisfy our assumptions, our guarantee naturally generalizes those in previous works on matrix completion. See Section~\ref{sec:comp} for a detailed comparison.
%Our result, when specialized to matrix completion, has worse dependence on parameters like the rank than those by convex relaxations, but has slightly better dependence than those by alternating minimization~\cite{sun2015Guaranteed}.  

The guarantee is proved by showing that the distance between the intermediate solution and the ground truth is improved at each iteration, which in spirit is similar to the framework in previous works. However, the lack of randomness in the weights and the exclusion of re-sampling (i.e., using independent samples at each iteration) lead to several technical obstacles that need to be addressed. Our proof of the improvement is then significantly different (and more general) from previous ones. 
In particular, showing improvement after each step is only possible when the intermediate solution has some additional special properties in terms of incoherence and spectrum. Prior works ensure such properties by using re-sampling (and sometimes assumptions about the noise), which are not available in our setting. 
We address this by showing that the spectral property only needs to hold in an average sense, which can be achieved by a simple clipping step. This results in a very simple algorithm that almost matches the practical heuristics, and thus provides explanation for them and also suggests potential improvement of the heuristics.
%We address this by introducing a new whitening step consisting of a convex relaxation and a new randomized rounding procedure, which can be of independent interest. 

%\paragraph{Organization} After reviewing the related work in Section~\ref{sec:relatedwork}, we formally define the problem and present our assumptions in Section~\ref{sec:problem}. The algorithm and the main result are provided in Section~\ref{sec:algorithm_main}, together with a comparison with prior work. The proof sketch and main lemmas are presented in Section~\ref{sec:lemma_main}, while the complete proof is provided in the appendix due to space limitation. Finally, Section~\ref{sec:conclusion} concludes the paper. 

\paragraph{Further results} 
The above results build on the insight that the spectral property only need to hold in an average sense. However, we can even make sure that the spectral property holds at each step strictly by a whitening step. More precisely, the clipping step is replaced by a whitening step using SDP and Rademacher rounding, which ensures that the intermediate solutions are incoherent and have the desired spectral property (the smallest eigenvalues of some related matrices are bounded).  The technique of maintaining the smallest eigenvalues may be applicable to some other non-convex problems, and thus is of independent interest. The details are presented in Appendix~\ref{app:proof_sdp}.

Furthermore, combining our insight that the spectral property only need to hold in an average sense with the framework in~\cite{sun2015Guaranteed}, one can show provable guarantees for the family of algorithms analyzed there, including stochastic gradient descent. 
We will demonstrate this by including the proof details for stochastic gradient descent in a future version.

\section{Related work} \label{sec:relatedwork}

%We briefly review some of the previous work on the weighted low rank approximation problem. 

%\paragraph{Weighted low-rank approximation} 
Being a common practical problem (e.g.,~\cite{lu1997weighted,srebro2003weighted,li2010improving,eriksson2012efficient}), multiple heuristics for non-convex optimization such as alternating minimization have been developed, but they come with no guarantees.  On the other hand, weighted low-rank approximation is NP-hard in the worst case, even when the ground truth is a rank-1 matrix~\cite{gillis2011low}. 

On the theoretical side, the only result we know of is \cite{iiya2016weighted}, who provide a fixed-parameter tractability result when \emph{additionally} the weight matrix is low-rank. Namely, when the weight matrix has rank $r$, they provide an algorithm for outputting a matrix $\btM$ which approximates  the optimization objective 
%$$\min_{\btM \in \mathcal{R}_k} \nbr{\btM - \bM}^2_\bW $$ 
up to a $1+\epsilon$ multiplicative factor, and runs in time $n^{O(k^2 r/\epsilon)}$.  
%The only theoretical guarantee we know is provided in~\cite{negahban2012restricted},  where recovery guarantees of nuclear norm convex relation is shown when the weight matrix is introduced by row/column weighted entrywise sampling and has rank-1, and the ground truth matrix has nice properties such as low-rankness and spikiness. The rank-1 assumption on the weight matrix is typically not true for many applications that introduce the weights to battle the different noise across the different entries of the matrix.
%\Anote{Seems to me like we repeat a lot of things here from the intro. Should we shorten a bit?}
%\yingyu{add Woodruff's work; also the work of Gradient descend on factorization problem by Austin guy: http://arxiv.org/abs/1509.03917}
%\Anote{You mean Sun and Luo? This is already a few paragraphs down.}  

%\paragraph{matrix completion} 
A special case of weighted low rank approximation is matrix completion, where the goal is to recover a low-rank matrix from a subset of  the matrix entries and corresponds to the case when the weights are in $\{0,1\}$. For this special case much more is known theoretically. 
It is known that matrix completion is NP-hard in the case when the $k=3$~\cite{peeters1996orthogonal}.  
Assuming that the matrix is incoherent and the observed entries are chosen uniformly at random, \citet{candes2009exact} showed that nuclear norm convex relation can recover an $n \times n$ rank-$k$ matrix using $m = O(n^{1.2}k \log(n))$ entries. 
The sample size is improved to $O(nk\log(n))$ in subsequent papers~\cite{candes2010power,recht2011simpler,gross2011recovering}.
\citet{candes2010matrix} relaxed the assumption to tolerate noise and showed the nuclear norm convex relaxation can lead to a solution such that  the Frobenius norm of the error matrix is bounded by $O(\sqrt{n^3/m})$ times that of the noise matrix.
However, all these results are for the restricted case with uniformly random binary weight matrices.

The only relaxations to random sampling to the best of our knowledge are in~\cite{heiman2014deterministic,lee2013matrix,bhojanapalli2014universal}. In this line the state-of-the-art is~\cite{bhojanapalli2014universal}, where the support of the observation is a $d$-regular expander such that the weight matrix has a sufficiently large spectral gap.
However, it only works for binary weights, and is for a nuclear norm convex relaxation and does not incorporate noise. 

Recently, there is an increasing interest in analyzing non-convex optimization techniques for matrix completion. 
In two seminal papers~\cite{jain2013low,hardt2014understanding}, it was shown that with an appropriate SVD-based initialization, the alternating minimization algorithm (with a few modifications) recovers the ground-truth. These results are for random binary weight matrix and crucially rely on re-sampling (i.e., using independent samples at each iteration), which is inherently not possible for the setting studied in this paper. More recently, \citet{sun2015Guaranteed} proved recovery guarantees for a family of algorithms including alternating minimization on matrix completion without re-sampling. However, the result is still for random binary weights and has not considered noise. 
%\yingyu{Yudong chen's Fast low-rank estimation by projected gradient descent} 
More detailed comparison of our result with prior work can be found in Section~\ref{sec:algorithm_main}, and comments on whether their arguments can be applied in our setting can be found in Section~\ref{sec:lemma_main}.

We also mention \cite{negahban2012restricted} who consider random sampling, but one that is not uniformly random across the entries. In particular, their sampling produces a rank-1 matrix. (Additionally, they require the ground truth matrix to have nice properties such as low-rankness and spikiness.) The rank-1 assumption on the weight matrix is typically not true for many applications that introduce the weights to battle the different noise across the different entries of the matrix.

Finally, two related works are \cite{DBLP:journals/corr/Bhojanapalli0S14,bhojanapalli2015drop}. The former implements faster SVD decomposition via weighted low rank approximation. However, here the weights in the weighted low rank problem come from leverage scores, so have a very specific structure, specially designed for performing SVD decompositions. The latter
concerns optimization of strongly convex functions $f(\bV)$ when $\bV$ is in the set of positive-definite matrices. It does this in a non-convex manner, by setting $\bV = \bU \bU^{\top}$ and using the entries of $\bU$ as variables. Our work focus on the recovery of the ground truth under the generative model, rather than on the optimization.

\section{Problem definition and assumptions} \label{sec:problem}

For a matrix $\bA$, let $\bA_i$ denote its $i$-th column, $\bA^j$ denote its $j$-th row, and $\bA_{i,j}$ denote the element in $i$-th row and $j$-th column. Let $\hp$ denote the Hadamard product, i.e., $\bC = \bA \hp \bB$ means $\bC_{i,j} = \bA_{i,j} \bB_{i,j}$.

Let $\bMg \in \Real^{n\times n}$  be a rank-$k$ matrix. 
Given the observation $\bM = \bMg + \bNoise$ where $\bNoise$ is a noise matrix, we want to recover the ground truth $\bMg$ by solving the weighted low-rank approximation problem for $\bM$ and a non-negative weight matrix $\bW$:
$$\min_{\btM \in \mathcal{R}_k} \nbr{\btM - \bM}^2_\bW $$ 
where $\mathcal{R}_k$ is the set of rank-$k$ $n$ by $n$ matrices, and $\nbr{\bA}^2_\bW = \sum_{i,j} \bW_{i,j} \bA_{i,j}^2$ is the weighted Frobenius norm.  
Our goal is to specify conditions about $\bMg$ and $\bW$, under which $\bMg$ can be recovered up to small error by alternating minimization, i.e., set $\btM = \bX \bY^\top$ where $\bX$ and $\bY$ are $n$ by $k$ matrices, and then alternate between updating the two matrices. Ideally, the recovery error should be bounded by $\|\bW \hp \bNoise \|_2$, since this allows selecting weights according to the noise to make the error bound small.

As mentioned before, the problem is NP-hard in general, so we will need to impose some conditions.  We summarize our assumptions as follows, and then discuss their necessity and the connections to existing ones.
\begin{itemize} 
\item[$\bold{(A1)}$] \emph{Ground truth is incoherent}: $\bMg$ has SVD $\bU \bSigma \bV^{\top}$, where 
$\max_{i = 1}^n \{ ||\bU^i||_2^2, ||\bV^i||_2^2 \} \le \frac{\mu k}{n}.$ Additionally, assume $\sigma_{\max}(\bSigma) = \Theta(1)$. (See discussion below.) 
Denote its condition number as $\ckappa = \sigma_{\max}(\bSigma) /\sigma_{\min}(\bSigma)$.
\item[$\bold{ (A2)}$] \emph{Weight matrix has a spectral gap}: $||\bW - \bE||_2 \le \gamma n$, where $\gamma < 1$ and $\bE$ is the all-one matrix.
\item[$\bold{ (A3)}$] \emph{Weight is not degenerate}: Let $\bD_i = \diag(\bW^i)$, i.e., $\bD_i$ is a diagonal matrix whose diagonal entries are the $i$-th row of $\bW$.
Then there are $0 < \lambdal \le 1 \le \lambdau$:
$$\hspace{-6mm}\lambdal \bI \preceq \bU^{\top} \bD_i \bU \preceq \lambdau \bI, \text{and}~\lambdal \bI \preceq \bV^{\top} \bD_i \bV \preceq \lambdau \bI (\forall i \in [n]). $$
 %\yingyu{non-degeneracy can be a candidate. we can argue (give example) that we want to exclude the case where a column of the weight matrix is close to zero so the optimization will not care about the corresponding column in the ground-truth. But calling it non-degeneracy still seems too general.} \Anote{This needs some intuitive explanation/catchy name}
%\item[$\bold{ (A4)}$] \emph{Weighted noise is incoherent}: for every $i \in [n]$,
%$$||[\bW \hp \bNoise]^i||_2^2 \le \frac{\mu k}{n} ||\bW \hp \bNoise||_2^2.$$
%\Anote{Need to add explanation for incoherence of noise. Add intuition about Gaussian mx with different variance. }
\end{itemize}

The incoherence assumption on the ground truth matrix is standard in the context of matrix completion. It is known that this is necessarily required for recovering the ground truth matrix. The assumption that $\sigma_{\max}(\bSigma) = \Theta(1)$ is without loss of generality: one can estimate $\sigma_{\max}(\bSigma)$ up to a constant factor, scale the data and apply our results. The full details are included in the appendix. 

The spectrum assumption on the weight matrix is a natural generalization of the randomness assumption typically made in matrix completion scenario (e.g.,~\cite{candes2010matrix,jain2013low,hardt2014understanding}). In that case, $\bW$ is a matrix with $d = \Omega(\log n)$-nonzeros in each row chosen uniformly at random, which corresponds to $\gamma = O\left(\frac{1}{\sqrt{d}} \right)$ in $\bold{(A2)}$.
Our assumption is also a generalization of the one in~\cite{bhojanapalli2014universal}, which requires $\bW$ to be $d$-regular expander-like (i.e., to have a spectral gap)  but is concerned only with matrix completion where the entries of $\bW$ can be 0 or 1 only. 

The final assumption $\bold{(A3)}$ is a generalization of the assumption A2 in~\cite{bhojanapalli2014universal} that, intuitively, requires the singular vectors to satisfy RIP (restricted isometry property). This is because when the weights are binary, $\bU^{\top} \bD_i \bU = \sum_{j \in S} (\bU^k)(\bU^k)^{\top}$ where $S$ is the support of $\bW^i$, so after proper scaling the assumption is a strict weakening of theirs.  They viewed it as a stronger version of incoherence, discussed the necessity and showed that it is implied by the strong incoherence property assumed in~\cite{candes2010power}.  
In the context of more general weights, the necessity of $\bold{(A3)}$ is even more clear, as elaborated below. 

Note that since $\bold{(A2)}$ does not require $\bW$ to be random or $d$-regular, it does not a-priori exclude the degenerate case that $\bW$ has one all-zero column. In that case, clearly one cannot hope to recover the corresponding column of $\bMg$. So, we need to make a third, non-degeneracy assumption about $\bW$, saying that it is ``correlated'' with $\bMg$. The assumption is actually quite weak in the sense that when $\bW$ is chosen uniformly at random, this assumption is true automatically: in those cases, $\E[\bD_i] = \bI$ and thus $\E[ \bU^{\top} \bD_i \bU] = \bI$ since $ \bU$ is orthogonal. A standard matrix concentration bound can then show that our assumption $\bold{(A3)}$ holds with high probability. Therefore, it is only needed when considering a deterministic $\bW$. Intuitively, this means that the weights should cover the singular vectors of $\bMg$. This prevents the aforementioned degenerate case when $\bW_i = 0$ for some $i$, and also some other degenerate cases. For example, consider the case when $\bNoise = 0$, all rows of $\bMg$ are the same vector with first $\Theta(\log n)$ entries being zero and the rest being one, and in one row of $\bMg$  the non-zeros entries all have zero weight. In this case, there is also no hope to recover $\bMg$, which should be excluded by our assumption.

%!TEX root = main.tex

\section{Algorithm and results} \label{sec:algorithm_main}

\begin{algorithm}[!t]
\caption{Main Algorithm ($\AlgMain$)}\label{alg:main}
\begin{algorithmic}[1]
\REQUIRE Noisy observation $\bM$, weight matrix $\bW$,   number of iterations $T$
\STATE Initialize $\bY_1$ using either $\bY_1 = \AlgInitial(\bM, \bW)$ or $\bY_1 = \AlgRInitial$
\FOR{$t = 1, 2, ..., T$}
\STATE $\btX_{t + 1} = \argmin_{\bX \in \Real^{n \times k}} \leftll \bM - \bX \bY^{\top}_t \rightll_{\bW}$
\STATE $\bbX_{t + 1} = \AlgClip(\btX_{t+1})$
\STATE $\bX_{t+ 1} = \QR(\bbX_{t + 1} )$
\STATE $\btY_{t + 1} = \argmin_{\bY \in \Real^{n \times k}} \leftll \bM - \bX_{t + 1} \bY^{\top} \rightll_{\bW}$
\STATE $\bbY_{t + 1} = \AlgClip(\btY_{t+1})$
\STATE $\bY_{t + 1} = \QR(\bbY_{t + 1} )$
\ENDFOR
%\STATE $\bbLambda = \argmin_{\bLambda} \|\bW \hp (\bM - \bX_{ T + 1} \bLambda \bY_{T + 1}^{\top} ) \|_2$
\ENSURE $\btM = \bbX_{ T + 1} \bY_{T}$ % $\btM = \bX_{ T + 1} \bbLambda \bY_{T + 1}^{\top}$
\end{algorithmic}
\end{algorithm}

\begin{algorithm}[!t]
\caption{Clipping (\AlgClip)}\label{alg:clip}
\begin{algorithmic}[1]
\REQUIRE matrix $\btX$
\ENSURE matrix $\bbX$ with
$$\bbX^i =  \left\{ \begin{array}{ll}
         \btX^i & \mbox{if $\| \btX^i \|_2^2 \le \wt := \frac{2\mu k}{n}$}\\
        0 & \mbox{otherwise}.\end{array} \right.$$
\end{algorithmic}
\vspace{-2mm}
\end{algorithm}

We prove guarantees for the vanilla alternating minimization with a simple clipping step, from either SVD initialization or random initialization. 
The algorithm is specified in Algorithm~\ref{alg:main}. Overall, it follows the usual alternating minimization framework: it keeps two working matrices $\bX$ and $\bY$, and alternates between updating them. In an $\bX$ update step, it first updates $\bX$ to be the minimizer of the weighted low rank objective while fixing $\bY$, which can be done efficiently since now the optimization is convex. 
Then it performs a ``clipping" step which zeros out rows of the matrix with too large norm,\footnote{The clipping step zeros out rows with square $\ell_2$ norm twice larger than the upper bound $\mu k/n$ imposed by our incoherence assumption $\bold{(A1)}$. One can choose the threshold to be $c\mu k/n$ where $c\ge 2$ is a constant and can choose to shrink the row to have norm no greater than $\mu k/n$, and our analysis still holds. The current choices are only for ease of presentation.} and then make it orthogonal by QR-factorization.\footnote{The QR-factorization step is not necessary for our analysis. But since it is widely used in practice for numerical stability, we prefer to analyze the algorithm with QR.} 
At the end, the algorithm computes a final solution $\btM$ from the two iterates. 

The two iterates can be initialized by performing SVD on the weighted observation (Algorithm~\ref{alg:initial}), which is a weighted version of SVD initialization typically used in matrix completion. Moreover, we show that the algorithm works with random initialization (Algorithm~\ref{alg:rinitial}), which is a simple and widely used heuristic in practice but rarely understood well.

\begin{algorithm}[!t]
\caption{SVD Initialization (\AlgInitial)}\label{alg:initial}
\begin{algorithmic}[1]
\REQUIRE observation $\bM$, weight $\bW$ 
\STATE $(\btX, \bSigma, \btY) = \textnormal{rank-}k ~\SVD(\bW \hp \bM)$, i.e., the columns of $\btY$ are the top $k$ right singular vectors of $\bW \hp \bM$
\STATE $\bbY = \AlgClip(\btY)$, $\bY = \QR(\bbY )$
\ENSURE $\bY$
\end{algorithmic}
\end{algorithm}

\begin{algorithm}[!t]
\caption{Random Initialization (\AlgRInitial)}\label{alg:rinitial}
\begin{algorithmic}[1]
 \STATE Let $\bY \in \mathbb{R}^{n \times k}$ generated as $\bY_{i,j} = b_{i,j} \frac{1}{\sqrt{n}}$, where $b_{i,j}$'s are independent uniform from $\{-1,1\}$
\ENSURE $\bY$
\end{algorithmic}
%\vspace{-2mm}
\end{algorithm}

We are now ready to state our main results. 
Theorem~\ref{thm:alt_min} describes our guarantee for the algorithm with SVD initialization, and Theorem~\ref{thm:alt_min_rand} is for random initialization. 

\begin{thm}[Main, SVD initialization]  \label{thm:alt_min} 
Suppose $\bMg, \bW$ satisfy assumptions \textbf{(A1)-(A3)} with
\begin{align*}
\gamma  = O\left( \min\left\{ \sqrt{\frac{n}{D_1}}\frac{\lambdal  }{\ckappa \mu^{3/2} k^2} , 
\frac{\lambdal }{\ckappa^{3/2}\mu k^2}    \right\} \right),
\end{align*}
where $D_1 = \max_{i \in [n]} \|\bW^i\|_1$. 
Then after $O(\log(1/\epsilon))$ rounds of Algorithm~\ref{alg:main} with initialization from Algorithm~\ref{alg:initial} outputs a matrix $\btM$ that satisfies
\begin{align*}  
  \|\btM - \bMg\|_2 \le O\left(\frac{k \ckappa }{\lambdal}\right)\|\bW \hp \bNoise\|_2 + \epsilon.
\end{align*}
The running time is polynomial in $n$ and $\log(1/\epsilon)$.
\end{thm}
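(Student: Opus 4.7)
The plan is to prove geometric contraction of a principal-angle-type distance between the iterate $\bY_t$ and the true right-singular subspace $\bV$ of $\bMg$, combined with a separate analysis of SVD initialization that places $\bY_1$ inside the basin of attraction. Writing $\mathrm{dist}(\bY,\bV) := \|(\bI - \bY \bY^\top) \bV\|_2$ (and symmetrically for $\bX$ versus $\bU$), the target is a one-step bound of the form $\mathrm{dist}(\bX_{t+1},\bU) \le \rho\,\mathrm{dist}(\bY_t,\bV) + c\,\|\bW\hp\bNoise\|_2$ with $\rho < 1/2$ and $c = O(\ckappa/\lambdal)$. Iterating this for $T = O(\log(1/\epsilon))$ rounds drives the subspace distance down to $O(\|\bW\hp\bNoise\|_2/\lambdal) + \epsilon/(k\ckappa)$, and the final output $\btM = \bbX_{T+1} \bY_T^\top$ is converted to the stated spectral-norm bound on $\|\btM - \bMg\|_2$ by writing $\bMg - \btM = \bMg - \bbX_{T+1} \bY_T^\top$, expanding $\bMg = \bU\bSigma\bV^\top$, and losing a factor of $k\ckappa$ in passing from subspace distances to reconstruction error.

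The core is the per-iteration improvement lemma. Because weighted least squares decouples across rows, each row satisfies $\btX_{t+1}^i = (\bY_t^\top \bD_i \bY_t)^{-1} \bY_t^\top \bD_i \bM^i$, which is well-conditioned by assumption (A3) provided $\bY_t$ remains incoherent. Substituting $\bM = \bU\bSigma\bV^\top + \bNoise$ and decomposing $\bV = \bY_t\bY_t^\top\bV + (\bI - \bY_t\bY_t^\top)\bV$ splits $\btX_{t+1}$ into a clean signal piece aligned with $\bU$, a cross-term proportional to the current distance $(\bI - \bY_t\bY_t^\top)\bV$, and a noise term in $\bNoise$. The contraction comes from (A2): the cross-term, viewed as a sum of per-row contributions weighted by $\bD_i$, essentially equals what one would get from the all-ones weight matrix $\bE$ plus a perturbation governed by $\bW - \bE$, whose spectral norm is $\gamma n$; after normalizing by the diagonal scale $\lambdal$ and absorbing losses of $\ckappa$, incoherence $\mu k$, and the $\sqrt{D_1}$ from bounding Hadamard products, the chosen $\gamma$ is exactly what is needed to force $\rho < 1/2$.

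The main obstacle, as the authors emphasize in the introduction, is that the one-step improvement requires an incoherent $\bY_t$, but the least-squares solve can produce a few rows of $\btX_{t+1}$ with anomalously large norm whenever the local Gram matrix $\bY_t^\top \bD_i \bY_t$ is close to singular on some atypical row $i$. The clipping step zeroes out precisely these outlier rows, restoring the row-norm bound required by (A1) and (A3) for the next iteration. The delicate point is that clipping must not destroy the subspace progress: one needs $\|\bbX_{t+1} - \btX_{t+1}\|_2$ to be bounded by a constant multiple of $\mathrm{dist}(\bY_t,\bV) + \|\bW\hp\bNoise\|_2$. This is the \emph{average-sense} insight the authors stress — the rows exceeding the $2\mu k/n$ threshold must be few and must carry little mass, which one argues by showing that their existence would contradict the just-established spectral bound on $\btX_{t+1}$. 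Combining this with the orthogonalization from QR then certifies that $\bX_{t+1}$ is an orthonormal basis whose column span has moved strictly closer to that of $\bU$; I expect this clipping-survives-progress step to be the technically hardest part of the proof, because it must be done without any randomness in $\bW$ or re-sampling across iterations.

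Finally, for the SVD initialization, write $\bW\hp\bM = \bMg + (\bW-\bE)\hp\bMg + \bW\hp\bNoise$. Bounding the middle term by combining (A2) with the incoherence of $\bU,\bV$ yields a Hadamard-product estimate of order $\gamma\sqrt{\mu k D_1}\,\|\bMg\|_2$ (which is where the bound involving $\sqrt{n/D_1}$ enters the theorem's $\gamma$). Applying Wedin's $\sin\Theta$ theorem to the rank-$k$ right singular subspace of $\bW\hp\bM$ versus that of $\bMg$, and folding in $\|\bW\hp\bNoise\|_2$, produces an initial $\mathrm{dist}(\bY_1,\bV)$ small enough to trigger the geometric phase; the clipping and QR postprocessing in Algorithm~\ref{alg:initial} are absorbed by the same clipping-preservation argument used in the main loop. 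Chaining the initialization bound with the $\log(1/\epsilon)$ contraction steps and the final conversion from subspace distance to $\|\btM - \bMg\|_2$ gives the claimed recovery guarantee.
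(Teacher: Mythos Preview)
Your overall architecture is right --- SVD initialization via Wedin's theorem, then geometric contraction of a subspace distance, then conversion to $\|\btM-\bMg\|_2$ --- and matches the paper. But two points in the update analysis are off, and the second is a genuine gap.

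First, a minor correction: the $D_1$ dependence does not come from the initialization Hadamard estimate. The paper's spectral lemma gives $\|(\bW-\bE)\hp\bMg\|_2 \le \gamma\mu k\,\sigma_{\max}(\bMg)$ with no $D_1$; the $\sqrt{n/D_1}$ factor in the theorem's $\gamma$ comes entirely from the update step, specifically from bounding the number of ``bad'' rows where $\bY_t^\top\bD_i\bY_t$ is ill-conditioned.

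Second, and more seriously, your clipping analysis is framed the wrong way and misses the key mechanism. You propose to bound $\|\bbX_{t+1}-\btX_{t+1}\|_2$ and argue that rows exceeding the threshold ``would contradict the just-established spectral bound on $\btX_{t+1}$.'' But for a row $i$ with $\sigma_{\min}(\bY_t^\top\bD_i\bY_t)$ tiny, $\btX_{t+1}^i$ can be arbitrarily large and there \emph{is} no spectral bound on that row --- that is precisely the difficulty. The paper instead bounds $\|\bbX_{t+1}-\bU\bSigma\bV^\top\bY_t\|_F$ directly: it splits $[n]$ into a set $\mathcal{S}_1$ where $\sigma_{\min}(\bY_t^\top\bD_i\bY_t)<\lambdal/4$ and its complement; on the complement the row-wise error is controlled (and the subset $\mathcal{S}_2$ of those that still exceed the clip threshold is small by Markov); on $\mathcal{S}_1\cup\mathcal{S}_2$ clipping pays only $\|\bU^i\bSigma\bV^\top\bY_t\|_2^2\le\wt$ per row. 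So everything reduces to bounding $|\mathcal{S}_1|$, and this is done by a separate expander-mixing-type argument (Lemma~\ref{lem:sigma_min}): write $a^\top\bY_t^\top\bD_i\bY_t a$ in terms of $a^\top\bV^\top\bD_i\bV a\ge\lambdal$ plus errors from rows $j$ where $\bY_t^j$ is far from $\bV^j$; the latter set is small because $\|\bY_t-\bV\|_F$ is small, and then the spectral gap of $\bW$ limits how many $i$ can concentrate weight on that small set. It is this counting argument that introduces $D_1=\max_i\|\bW^i\|_1$, and it is the technical heart of the proof that your sketch does not supply.
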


The theorem is stated in its full generality. 
To emphasize the dependence on the matrix size $n$, the rank $k$ and the incoherence $\mu$, we  can consider a specific range of parameter values where the other parameters (the spectral bounds, condition number, $D_1/n$) are constants. Also, these parameter values are typical in  matrix completion, which facilitates our comparison in the next subsection. 

\begin{cor}\label{cor:main} 
Suppose $\lambdal, \lambdau$ and  $\ckappa$ are all constants, $D_1 = \Theta(n)$, and $T = O(\log(1/\epsilon))$. Furthermore, 
\begin{align*}
\gamma  = O\left(  \frac{1  }{\mu^{3/2} k^2} \right).
\end{align*} 
Then Algorithm~\ref{alg:main} with initialization from Algorithm~\ref{alg:initial} outputs a matrix $\btM$ that satisfies
\begin{align*}  
  \|\btM - \bMg\|_2 \le O\left(k\right) \|\bW \hp \bNoise\|_2 + \epsilon.
\end{align*}
\end{cor}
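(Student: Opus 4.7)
The plan is to directly specialize Theorem~\ref{thm:alt_min} under the stated constant-parameter restrictions and verify that both the hypothesis on $\gamma$ and the error bound simplify as claimed; the corollary is essentially a bookkeeping consequence.

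First, I would simplify the two terms inside the $\min$ defining the allowed bound on $\gamma$ in Theorem~\ref{thm:alt_min}. Using $\lambdal = \Theta(1)$, $\ckappa = \Theta(1)$, and $D_1 = \Theta(n)$, the first term becomes
\begin{align*}
\sqrt{\frac{n}{D_1}} \cdot \frac{\lambdal}{\ckappa \mu^{3/2} k^2} = \Theta\!\left(\frac{1}{\mu^{3/2} k^2}\right),
\end{align*}
while the second term becomes $\Theta(1/(\mu k^2))$. Since the incoherence parameter satisfies $\mu \ge 1$ (because $\bU^\top \bU = \bI_k$ gives $\sum_i \|\bU^i\|_2^2 = k$, so $\max_i \|\bU^i\|_2^2 \ge k/n$, forcing $\mu k/n \ge k/n$), we have $\mu^{3/2} \ge \mu$, so the first term is the binding one and the overall minimum is $\Theta(1/(\mu^{3/2} k^2))$. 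This matches the hypothesis on $\gamma$ stated in the corollary, so the preconditions of Theorem~\ref{thm:alt_min} are met.

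Second, I would simplify the error bound. The prefactor $O(k \ckappa / \lambdal)$ in Theorem~\ref{thm:alt_min} reduces to $O(k)$ under the constant-parameter assumption, and the additive $\epsilon$ term is obtained by the choice $T = O(\log(1/\epsilon))$ already built into the corollary's hypothesis. Substituting these into the conclusion of Theorem~\ref{thm:alt_min} gives exactly
\begin{align*}
\|\btM - \bMg\|_2 \le O(k)\, \|\bW \hp \bNoise\|_2 + \epsilon,
\end{align*}
which is the claim.

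There is no substantive technical obstacle here; the only care point is verifying $\mu \ge 1$ so that the first term of the minimum is indeed the dominant constraint on $\gamma$, which as noted is immediate from $\bU$ having orthonormal columns. Everything else is asymptotic bookkeeping transporting Theorem~\ref{thm:alt_min} into the stated parameter regime.
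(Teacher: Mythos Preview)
Your proposal is correct and matches the paper's approach: the corollary is stated there without a separate proof, as an immediate specialization of Theorem~\ref{thm:alt_min} under the constant-parameter regime. Your added justification that $\mu \ge 1$ (to identify which term in the $\min$ binds) is a clean detail the paper leaves implicit.
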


\paragraph{Remarks}
The theorem bounds the spectral norm of the error matrix by the spectral norm of the weighted noise plus an additive error term that decreases exponentially with the number of rounds of alternating minimization. We emphasize that our guarantee holds for any $\bMg, \bW$ satisfying our deterministic assumptions; the high success probability is with respect to the execution of the algorithm, not to the input.  This ensures the freedom in choosing the weights to battle the noise. We also emphasize that the bounds hold \emph{without any assumption} on the noise, which is particularly important here since weighted low rank is typically applied to complicated noise models.

Bounding the error by $\|\bW \hp \bNoise\|_2$ is particularly useful when the noise is not uniform across the entries: prior knowledge about the noise (e.g., the different variances of noise on different entries) can be taken into account by setting up a reasonable weight matrix\footnote{Note that $\bW$ cannot be made arbitrarily small since it should satisfy our assumptions. Roughly speaking, $\bW$ has spectral norm $n$ and is flexible to take into account the prior knowledge about the noise. In particular, it can be set to the all one matrix, reducing to the unweighted case.}, such that $\|\bW \hp \bNoise\|_2$ can be significantly smaller than $\|\bNoise\|_2$. Also, in recovering the ground truth, a spectral norm bound is more preferred than a Frobenius norm bound, since typically the Frobenius norm is $\sqrt{n}$ larger than the spectral norm. 

Furthermore, when $\|\bW \hp \bNoise\|_2 = 0$ (as in matrix completion without noise), the ground truth is recovered in a geometric rate.

Finally, in matrix completion with uniform random sampled observations, the term $D_1$ concentrates around $n$, so $\frac{D_1}{n}$ disappears in this case.

\begin{thm}[Main, random initialization]  \label{thm:alt_min_rand} 
Suppose $\bMg, \bW$ satisfy assumptions \textbf{(A1)-(A3)} with
\begin{align*}
\gamma  &= O\left( \min\left\{ \sqrt{\frac{n}{D_1}}\frac{\lambdal }{ \ckappa\mu^{2} k^{5/2}} , 
\frac{\lambdal }{ \ckappa^{3/2} \mu^{3/2} k^{5/2}}    \right\} \right), 
\\ 
\|\bW\|_{\infty} & = O\left(\frac{\lambdal n}{k^2 \mu \log^2 n}\right), 
\end{align*}
where $D_1 = \max_{i \in [n]} \|\bW^i\|_1$. 
Then after $O(\log(1/\epsilon))$ rounds Algorithm~\ref{alg:main}  with initialization from Algorithm~\ref{alg:initial}  outputs a matrix $\btM$ that with probability at least $1-\frac{1}{n^2}$ satisfies
\begin{align*}  
  \|\btM - \bMg\|_2 \le O\left(\frac{k\ckappa}{\lambdal}\right) \|\bW \hp \bNoise\|_2 + \epsilon.
\end{align*}
The running time is polynomial in $n$ and $\log(1/\epsilon)$.
\end{thm}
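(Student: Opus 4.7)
My plan is to reduce Theorem~\ref{thm:alt_min_rand} to Theorem~\ref{thm:alt_min} by showing that a short burn-in phase of $O(\log n)$ iterations of Algorithm~\ref{alg:main} started from \AlgRInitial\ brings the iterate $\bY_t$ into the ``warm-start'' regime assumed by the SVD-initialization analysis, after which the same per-iteration contraction gives the final geometric-rate bound. Progress is measured by the principal-angle tangent $\tau_t := \|(\bI - \bV\bV^\top)\bY_t\|_2 / \sigma_{\min}(\bV^\top \bY_t)$, and the goal is to drive $\tau_t$ from its (possibly very large) initial value down to the constant threshold at which the analysis behind Theorem~\ref{thm:alt_min} takes over.

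The first ingredient is concentration for the Rademacher initialization. Since $\bY_1$ has i.i.d.\ $\pm 1/\sqrt{n}$ entries and $\bV$ is a fixed $n\times k$ orthonormal matrix, $\bV^\top \bY_1$ is a $k\times k$ sub-Gaussian matrix with covariance $\tfrac{1}{n}\bI$, so standard bounds give $\sigma_{\min}(\bV^\top \bY_1)=\Omega(1/\sqrt{k})$ and $\|\bY_1\|_2 = O(1)$ with probability at least $1-n^{-\Omega(1)}$, whence $\tau_1 = O(\sqrt{k})$. Moreover each row of $\bY_1$ has squared norm exactly $k/n$, which is below the clipping threshold $2\mu k/n$, so $\bY_1$ already meets the incoherence requirement that \AlgClip\ is designed to enforce downstream.

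The core step is a per-iteration ``noisy power method'' lemma: as long as (i)~$\bY_t$ is row-wise incoherent in the sense enforced by \AlgClip\ and (ii)~the spectral conditions $\bold{(A2)}$ and $\bold{(A3)}$ hold, one step of weighted alternating minimization contracts the tangent by a factor $\rho$ controlled by $\gamma$, up to a noise term driven by $\|\bW \hp \bNoise\|_2$. This is morally the same contraction used for Theorem~\ref{thm:alt_min}, but because $\tau_1$ can be as large as $\sqrt{k}$, the contraction must absorb this initial $\sqrt{k}$ before reaching the warm-start regime; this is precisely why the hypothesis on $\gamma$ in Theorem~\ref{thm:alt_min_rand} is tighter than in Theorem~\ref{thm:alt_min} by a factor of $\sqrt{k}$, and why the $\ckappa,\mu$ dependencies sharpen accordingly. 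After $T_0 = O(\log k + \log n)$ such contractions $\tau_{T_0}$ falls below the constant threshold assumed by Theorem~\ref{thm:alt_min}, and the remaining $O(\log(1/\epsilon))$ iterations yield the claimed error bound.

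The main obstacle, and the reason for the new hypothesis $\|\bW\|_\infty = O(\lambdal n /(k^2 \mu \log^2 n))$, is preserving incoherence and the $\sigma_{\min}(\bV^\top \bY_t)$ correlation during the burn-in phase, when $\bY_t$ is still far from $\bV$. The weighted least-squares update $\btX_{t+1}=\argmin_\bX \|\bM-\bX\bY_t^\top\|_\bW$ can produce rows of $\btX_{t+1}$ whose norms are amplified by $\|\bW\|_\infty$ relative to the spectral slack in $\bold{(A3)}$; without control on $\|\bW\|_\infty$ the clipping step would zero out too many rows and destroy the correlation the contraction lemma relies on. The bound on $\|\bW\|_\infty$ guarantees, via a Hanson--Wright-style argument combined with the $\bold{(A3)}$ lower bound on $\bU^\top \bD_i \bU$, that only an $o(1)$-fraction of rows exceed the clipping threshold at each burn-in step, so that $\bbY_{t+1}$ retains $\sigma_{\min}(\bV^\top \bbY_{t+1}) = \Omega(\sigma_{\min}(\bV^\top \bY_t))$ after clipping and QR. The $\log^2 n$ factor then arises from a union bound over the $O(\log n)$ burn-in iterations together with the sub-Gaussian tail governing the row norms; making this quantitative uniformly across the burn-in is the hard part of the proof, and the rest follows by invoking the analysis of Theorem~\ref{thm:alt_min}.
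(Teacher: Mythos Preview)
Your proposal contains a concrete numerical error and, more importantly, misidentifies the mechanism by which random initialization succeeds.

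First, the error. You claim that $\bV^\top \bY_1$ is a $k\times k$ sub-Gaussian matrix whose smallest singular value is $\Omega(1/\sqrt{k})$. But each entry of $\bV^\top\bY_1$ has variance $\tfrac{1}{n}\sum_l \bV_{l,a}^2 = \tfrac{1}{n}$, so $\bV^\top\bY_1$ behaves like $\tfrac{1}{\sqrt{n}}$ times a $k\times k$ matrix with isotropic columns, and its smallest singular value is of order $1/\sqrt{nk}$, not $1/\sqrt{k}$. Consequently $\tau_1$ is not $O(\sqrt{k})$ but essentially $\Theta(\sqrt{n})$; a random $k$-dimensional subspace of $\mathbb{R}^n$ is almost orthogonal to any fixed one. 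Your burn-in argument, which is calibrated to shrink an initial tangent of size $O(\sqrt{k})$, therefore does not get off the ground.

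Second, the paper's route is entirely different and avoids this difficulty. It never argues that the random $\bY_1$ is close to $\bV$. Instead it proves directly (Lemma~\ref{lem:sigma_init}, via matrix Chernoff) that with probability $1-1/n^2$ the random $\bY_1$ satisfies $\sigma_{\min}(\bY_1^\top\bD_i\bY_1)\ge \tfrac{\lambdal}{4k\mu}$ for \emph{every} $i$. This is precisely where the hypothesis $\|\bW\|_\infty = O(\lambdal n/(k^2\mu\log^2 n))$ enters: it bounds each summand $(\bY_1^j)^\top(\bD_i)_j\bY_1^j$ so that matrix Chernoff applies with a $\log^2 n$ tail, and the second $\log n$ gives the union bound over $i\in[n]$. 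It has nothing to do with Hanson--Wright control of post-update row norms. With this spectral lower bound in hand, one reruns the proof of Lemma~\ref{lem:update_main} with the bad set $\mathcal{S}_1$ redefined at the weaker threshold $\lambdal/(4\mu k)$; since $\mathcal{S}_1=\emptyset$, Lemma~\ref{lem:sigma_min} is not needed at all for the first step, and a \emph{single} update already brings $\cds(\bU,\bX_2)$ into the warm-start regime (at the price of an extra $\mu k$ factor, absorbed by the stronger assumption on $\gamma$). No burn-in phase is required.

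Finally, even setting aside the quantitative slip, your burn-in scheme has a structural gap: once you take one alternating step, the iterate $\bY_2$ is neither random (so Lemma~\ref{lem:sigma_init} no longer applies) nor close to $\bV$ (so Lemma~\ref{lem:sigma_min} does not apply either). You give no mechanism to control $\sigma_{\min}(\bY_t^\top\bD_i\bY_t)$ in this intermediate regime, and without it the per-iteration contraction you invoke from Theorem~\ref{thm:alt_min} is unavailable, because its precondition $\cds^2(\bY,\bV)\le \min\{1/2,\lambdal^2 n/(384\mu k^2 D_1)\}$ fails.
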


\paragraph{Remarks} 
Compared to SVD initialization, we need slightly stronger assumptions for random initialization to work. There is an extra $1/(\mu^{1/2} k^{1/2})$ in the requirement of the spectral parameter $\gamma$. We note that the same error bound is obtained when using random initialization. Roughly speaking, this is because our analysis shows that the updates can make improvement under rather weak requirements that random initialization can satisfy, and after the first step the rest updates make the same progress as in the case using SVD initialization.

\subsection{Comparison with prior work}  \label{sec:comp}

\begin{table*}[t]
	\centering
		\begin{tabular}{c|c |c |c | c |c  | c}
		\hline
			   & weight  & determin.  &  tolerate &  alter. & order of $\gamma$  & \\
				& values &  weights &  noise & min. & (spectral gap) &   Bound on $ \Delta = \btM - \bMg$ \\
			\hline\hline
			(1) & 0-1 & no & yes & no & $ \frac{1}{\mu k^{1/2}\poly(\log n) }$ & $\|\Delta\|_F = O(\sqrt{\frac{n^3}{m} }\|\bNoise_\Omega\|_F)$ \\			
			\hline
			(2) & 0-1 & no & yes & no  & $\sqrt{\frac{1}{\mu k \log n}} $  & $\|\Delta\|_F = O( \frac{n^2 \sqrt{k}}{m} \|\bNoise_\Omega\|_2)$ \\
			\hline
		(3)	& 0-1 & yes & no & no  & $\frac{1}{\mu k} $ & exact recovery \\					
			\hline
			(4) & 0-1 & no & yes & yes & $\frac{1}{k \epsilon \sqrt{\mu  \log n}}$  & $\|\Delta\|_F \le \epsilon \|\bMg + \bNoise \|_F $ \\
			\hline
			(5) & 0-1 & no & no & yes & $\frac{1}{ \max\{\sqrt{k\mu \log n}, \mu k^{3.5}\}} $  &  exact recovery \\
			\hline
			ours (SVD init) & real & yes & yes & yes & $\frac{1}{\mu^{3/2} k^2 }$  &   $\|\Delta\|_2  = O\left(k  \right) \|\bW \hp \bNoise\|_2 + \epsilon $ \\
			\hline
			ours (random init) & real & yes & yes & yes & $\frac{1}{\mu^2 k^{5/2}}$  &   $\|\Delta\|_2  = O\left(k  \right) \|\bW \hp \bNoise\|_2 + \epsilon $ \\
					\hline
		\end{tabular}
\caption{Comparison with related work on matrix completion: (1) \cite{candes2010matrix}; (2) \cite{keshavan2009matrix}; (3) \cite{bhojanapalli2014universal}; (4) \cite{hardt2014understanding}. (5) \cite{sun2015Guaranteed}. Technical details are ignored. Especially, parameters other than the matrix size $n$, the rank $k$ and the incoherence $\mu$ are regarded as constants.  
	}
	\label{tab:com}
\end{table*}

For the sake of completeness, we will  give a more detailed comparison with representative prior work on matrix completion from Section~\ref{sec:relatedwork}, emphasizing the dependence on $n, k$ and $\mu$ and regarding the other parameters as constants.
We first note that when the $m$ observed entries are sampled uniformly at random from an $n$ by $n$ matrix, the corresponding binary weight matrix will have a spectral gap $\gamma = O(\sqrt{\frac{n}{m} })$ (see, e.g.,~\cite{feige2005spectral}). 
Converting the sample bounds in the prior work to the spectral gap, we see that in general our result has worse dependence on parameters like the rank than those by convex relaxations, but has slightly better dependence than those by alternating minimization. 
The comparison is summarized in Table~\ref{tab:com}.

The seminal paper~\cite{candes2009exact} showed that a nuclear norm convex relaxation approach can recover the ground truth matrix using $m = O(n^{1.2} k \log^2 n)$ entries chosen uniformly at random and without noise. The sample size was improved to $O(nk \log^6 n)$ in~\cite{candes2010power} and then $O(nk \log n)$ in subsequent papers. \citet{candes2010matrix} generalized the result to the case with noise: the same convex program using $m = O(nk \log^6 n)$ entries recovers a matrix $\btM$ s.t.
$\|\btM - \bMg\|_F \le (2+4 \sqrt{{(2 + p) n}/{p}})\|\bNoise_\Omega\|_F $
where $p = {m}/{n^2}$ and $\bNoise_\Omega$ is the noise projected on the observed entries. 

\citet{keshavan2009matrix} showed that with $ m = O(n  \mu k \log n)$, one can recover a matrix $\btM$ such that
$ \leftll  \bMg - \btM \rightll_F = O \left( \frac{n^2 \sqrt{k}}{m} \|\bNoise_\Omega\|_2\right)$ by an optimization over a Grassmanian manifold.

\citet{bhojanapalli2014universal} relaxed the assumption that the entries are randomly sampled. They showed that the nuclear norm relaxation recovers the ground truth, assuming that the support $\Omega$ of the observed matrix forms a $d$-regular expander graph (or alike), i.e., $|\Omega| = d n$, $\sigma_1(\Omega) = d$ and $\sigma_2(\Omega) \le c \sqrt{d}$ and $d \ge c^2 \mu^2 k^2$. This would correspond to a parameter $\gamma = O(\frac{1}{\mu k})$ for us. They did not consider the robustness to noise.

\citet{hardt2014understanding} showed that with an appropriate initialization alternating minimization recovers the ground truth approximately. Precisely, they assumed $\bNoise$ satisfies: (1). $\mu(\bNoise) \lesssim \sigma_{\min}(\bMg)^2$;(2). $\|\bNoise\|_{\infty} \le \frac{\mu}{n} \|\bMg\|_F$.
Then, he shows that $\log(\frac{n}{\epsilon} \log n)$ alternating minimization steps recover a matrix $\btM$ such that 
$\|\btM - \bMg\|_F \le \epsilon \| \bM\|_F$
provided that 
$pn \ge k (k + \log (n/\epsilon)) \mu \times \left( \frac{\|\bMg\|_F + \|\bNoise\|_F/\epsilon}{\sigma_k}\right)^2 \left( 1 - \frac{\sigma_{k+1}}{\sigma_k} \right)^5$
where $\sigma_k$ is the $k$-th singular value of the ground-truth matrix. The parameter $\gamma$ corresponding to the case considered there would be roughly $O(\frac{1}{k \sqrt{\mu \log n}})$. While their algorithm has a good tolerance to noise, $\bNoise$ is assumed to have special structure for him that we do not assume in our setting. 

\citet{sun2015Guaranteed} proved recovery guarantees for a family of algorithms including alternating minimization on matrix completion. 
They showed that by using $m = O(n k \max\{\mu \log n, \mu^2 k^6\})$ randomly sampled entries without noise, the ground truth can be recovered in a geometric rate. This corresponds to a spectral gap of $O\left(\frac{1}{ \max\{\sqrt{k\mu \log n}, \mu k^{3.5}\}} \right)$.
Our result is more general and also handles noise. When specialized to their setting, we also have a geometric rate with a slightly better dependence on the rank $k$ but a slightly worse dependence on the incoherence $\mu$.

%!TEX root = main.tex

\section{Proof sketch} \label{sec:lemma_main}

Before going into our analysis, we first discuss whether arguments in prior work can be applied.
Most of the work on matrix completion uses convex optimization and thus their analysis is not applicable in our setting.
There indeed exists some other work that analyzes non-convex optimization for matrix completion, and it is tempting to adopt their arguments. 
However, there exist fundamental difficulties in porting their arguments.
All of them crucially rely on the randomness in sampling the observed entries. \citet{keshavan2009matrix} analyzed optimization over a Grassmanian manifold, which uses the fact that $\E[\bW \hp \bS] = \bS$ for any matrix $\bS$. 
In~\cite{jain2013low,hardt2014understanding}, re-sampling of new observed entries in different iterations was used to get around the dependency of the iterates on the sample set, a common difficulty in analyzing alternating minimization. The subtlety and the drawback of re-sampling were discussed in detail in~\cite{bhojanapalli2014universal,candes2015phase,sun2015Guaranteed}.  We note that~\cite{sun2015Guaranteed} only needs sampling before the algorithm starts and does not need re-sampling in different iterations, but still relies on the randomness in the sampled entries.
In particular, in all the aforementioned work,  the randomness guarantees that the iterates $\bX, \bY$ stay incoherent and have good spectrum properties. Given these, alternating minimization can make progress towards the ground truth in each iteration. 
Nevertheless, since we focus on deterministic weights, such randomness is inherently infeasible in our setting.
In this case, after just one iteration, it is unclear if the iterates can have incoherence and good spectrum properties required to progress towards the ground truth, even under our current assumptions.
The whole algorithm thus breaks down.
To address this, we show that it is sufficient to ensure the spectral property in an average sense and then introduce our clipping step to achieve that, arriving at our current algorithm.

Here for simplicity, we drop the subscription $t$ in all iterates, and we only focus on important factors, dropping other factors and the big-$O$ notation. We only consider the case when $\bW \hp \bNoise = 0$, so as to emphasize the main technical challenges.

On a high level, our analysis of the algorithm maintains potential functions $\cds(\bX, \bU)$ and $\cds(\bY, \bV)$ between our working matrices $\bX, \bY$ and the ground truth $\bU, \bV$ (recall that $\bMg = \bU \bSigma \bV^{\top}$): 
$$
\cds(\bX, \bU) = \min_{\bQ \in \ortho_{k \times k}} \| \bX \bQ - \bU \|_2$$
and
$$ \cds(\bY, \bV) = \min_{\bQ \in \ortho_{k \times k}} \| \bY \bQ - \bV \|_2,
$$
where $\ortho_{k \times k}$ are the set of $k \times k$ rotation matrices.
The key is to show that they decrease after each update step, so $\bX$ and $\bY$ get closer to the ground truth.\footnote{Note that we also need a good initialization, which can be done by SVD.  Since our analysis requires rather weak warm start, we are able to show that simple random initialization is also sufficient (at the cost of slightly worse bounds).} The strategy of maintaining certain potential function measuring the distance between the iterates and the ground truth is also used in prior work~\cite{bhojanapalli2014universal,candes2015phase,sun2015Guaranteed}.
We will point out below the key technical difficulties that are not encountered in prior work and make our analysis substantially different.  The complete proofs are provided in the appendix due to space limitation.

\subsection{Update}
We would like to show that after an $\bX$ update, the new matrix $\btX$ satisfies  $\cds(\bX, \bU) \leq {\cds(\bY, \bV)}/{2} + c$ for some small $c$ (similarly for a $\bY$ update). 

Consider the update step 
$$\btX \leftarrow \argmin_{\bA \in \Real^{n \times k}} \leftll \bM - \bA \bY^{\top} \rightll_{\bW}.$$
By setting the gradient to 0 and with some algebraic manipulation, we have $\displaystyle \btX - \bU \bSigma \bV^{\top} \bY =  \bG$
where 
$$
\bG^i := \bU^i \bSigma \bV^{\top} \bYb \bYb^{\top} \bD_i  \bY (\bY^{\top} \bD_i \bY)^{-1}.
$$
where $\bD_i = \diag(\bW^i)$. Since $\btX$ is the value prior to performing QR decomposition, we want to show that $\btX$ is close to $\bU^i \bSigma \bV^{\top} \bY$, i.e., the error term $\bG$ on right hand side is small.
In the ideal case when the error term is 0, then $\btX = \bU \bSigma \bV^{\top} \bY$ and thus $\cds(\bX, \bU) = 0$, meaning that with one update $\btX$  already hits into the correct subspace. So we would like to show that it is small so that the iterate still makes progress.  
Let 
$$\bP_i = \bV^{\top} \bYb \bYb^{\top} \bD_i  \bY~~\textrm{and}~~\bO_i = (\bY^{\top} \bD_i \bY)^{-1},$$ 
so that $\bG^i =  \bU^i \bSigma \bP_i \bO_i$. Now the two challenges are to bound $\bP_i$  and $\bO_i$. 

Let us first consider the simpler case of matrix completion, where the entries of the matrix are randomly sampled by probability $p$. Then $\bD_i$ is a random diagonal matrix with $\E[\bD_i] = \bI$ and $\E[\bD_i^2] = \frac{1}{p} \bI$. 
Furthermore, for $n \times k$ orthogonal matrices $\bY$, $\bO_i = (\bY^{\top} \bD_i \bY)^{-1}$ concentrates around $\bI$. Then in expectation,  $||\bP_i||$ is about $||\bV^{\top} \bYb ||/\sqrt{p}$ and $\|\bO_i\|$ is about 1, so $\|\bG^i\|$ is as small as ${\mu k || \bV^{\top} \bYb ||}/(\sqrt{p}n) = \mu k \sint(\bV,\bY)/(\sqrt{p}n) $. High probability can then be established by the trick of re-sampling.

However, in our setting, we have to deal with two major technical obstacles due to deterministic weights.
\begin{itemize}
\item[1.] There is no expectation for $\bD_i$. Since $\|\bD_i\|^2_{\infty}$ can be as large as $\frac{n^2}{\poly(\log n)}$,  $\| \bP_i \|$ can potentially be as large as $\sint(\bY, \bV) \frac{n}{\poly(\log n)}$, which is almost a factor $n$ larger than the bound for random $\bD_i$.  This is clearly insufficient to show the progress.
\item[2.] A priori the norm of $\bO_i = (\bY^{\top} \bD_i \bY)^{-1}$ may be large.  Especially, in the algorithm $\bY$ is given by the alternating minimization steps and giving an upper bound on $\|\bO_i\|$ at all steps seems hard. 
\end{itemize} 

\paragraph{The first issue} 
For this, we exploit the incoherence of $\bY$ and the spectral property of the weight matrix.  
If $\bD_i$ is the identity matrix, then $\bP_i = 0$ which, intuitively, means that there are cancellations between negative part and positive parts. When $\bW$ is expander-like, it will put roughly equal weights on the negative part and the positive part. If furthermore we have that $\bY$ is incoherent (i.e., the negative and positive parts are spread out), then $\bW$ can mix the terms and lead to a  cancellation similar to that when $\bD_i = \bI$. 
More precisely, consider the $(j,j')$-th element in $\bP_i$.
Define a new vector $x \in \Real^n$ such that 
$$
  x_i = (\btV_j)_i (\bY_{j'})_i, \textnormal{~where~} \btV = \bV^{\top} \bYb \bYb^{\top}.
$$
Then we have the cancellation in the form of $\sum_i x_i = 0$. When $\bD_i = \bI$, we simply get $(\bP_i)_{j,j'} = \sum_i x_i = 0$.
When $\bD_i \neq \bI$, we have $(\bP_i)_{j,j'} =  \sum_{s\in [n]} (\bD_i)_s x^{j, j'}_s$.
Now mix over all $i$, we have
\begin{eqnarray*}
\sum_{i \in [n]} ((\bP_i)_{j,j'})^2  & = & \left(\sum_{s\in [n]} (\bD_i)_s x_s\right)^2 = \|\bW x\|^2 \\
& = & \| (\bW - \bE) x \|^2 \quad\quad\quad  (\textrm{since}~\bE x = 0) \\ 
& \le& \gamma^2 n^2 \|x\|^2
\end{eqnarray*}
where in the last step we use the expander-like property of $\bW$ (Assumption $\bold{(A2)}$) to gain the cancellation. 
Furthermore, if $\|\bY_{j'}\|_\infty$ is small, by definition $\|x\|^2$ is also small, so we can get an upper bound on $\sum_{i \in [n]} \|\bP_i\|_F^2$.

Then the problem reduces to maintaining the incoherence of  $\bY$. 
This is taken care of by our clipping step (Algorithm~\ref{alg:clip}), which sets to 0 the rows of $\bY$ that are too large. Of course, we have to show that this will not increase the distance of the clipped $\bY$ and $\bV$. 
The intuition is that we clip only when $\|\bY^i\| \ge 2\mu k/n$. But $\|\bV^i\| \le \mu k/n$, so after clipping, $\bY^i$ only gets closer to $\bV^i$. 

\paragraph{The second issue} 
This is the more difficult technical obstacle, i.e., $\|\bO_i\| = \| (\bY^{\top} \bD_i \bY)^{-1}\|$ can be large. Our key idea is that although individual $\|\bO_i\|$ can indeed be large, this cannot be the case on average. We show that there can just be a few $i$'s such that $\|\bO_i\|$ is large, and they will not contribute much to $\|\bG\|$, so the update can make progress. 
%The intuition here comes from the fact that our weight matrix $\bW$ is ``expander-like'' by virtue of the spectral gap, and in expander graphs, the standard expander mixing lemma implies the number of edges between two subsets of vertices $A$ and $B$ is proportional to $|A||B|$ (i.e. the expected number of edges if the edges in the graph were randomly picked, with the appropriate choice of inclusion probability in order for the degree to match). 

To be more formal, we wish to bound the number of indices $i$ such that $\sigma_{\min}\left(\bY^{\top} \bD_i \bY \right) \le \frac{\lambdal}{4}$. Consider an arbitrary unit vector $a$. Then, 
$$ 
 a\bY^{\top} \bD_i \bY a  = \sum_j a\bY^{\top} (\bD_i)_j \bY a = \sum_j (\bD_i)_j \langle a, \bY^j \rangle^2.
$$
We know that $\bY$ is close to $\bV$, so we rewrite the above using some algebraic manipulation as 
\begin{eqnarray*}
&& \sum_j (\bD_i)_j \langle a, (\bY^j - \bV^j) + \bV^j \rangle^2 \\
& \ge &   \frac{1}{4} \sum_j (\bD_i)_j \langle a, \bV^j \rangle^2 - \frac{1}{3} \sum_j (\bD_i)_j \langle a, \bY^j - \bV^j  \rangle^2 
\end{eqnarray*}

For $j$'s such that $\bY^j$ is close to  $\bV^j$ (denote these $j$'s as $\mathcal{S}_g$), then the terms can be easily bounded since $\bV^{\top} \bD_i \bV \geq \lambdal I$ by assumption. So we only need to consider $j$'s such that $\bY^j$ is far from  $\bV^j$. Since we have incoherence, we know that 
$\|\bY^j - \bV^j\|$ is still bounded in the order of $\mu k/n$. So $a\bY^{\top} \bD_i \bY a$ can be small only when  $\sum_{j\not\in \set{S}_g} (\bD_i)_j$ is large. 

Let $\set{S}$ denote those bad $i$'s. Let $u_{\set{S}}$ be the indicator vector for $\set{S}$ and $u_g$ be the indicator vector for $[n - \set{S}_g]$.
\begin{eqnarray*}
\sum_{i \in \set{S} }\sum_{j\not\in \set{S}_g} (\bD_i)_j  & = & u_{\set{S}}^{\top} \bW u_g \\
& \le & |\set{S}| (n - |\set{S}_g|) + \gamma n \sqrt{|\set{S}| (n - |\set{S}_g|)}
\end{eqnarray*}
where the last step is due to the spectral property of $\bW$. Therefore, there can be only a few $i$'s with large $\sum_{j\not\in \set{S}_g} (\bD_i)_j$. 

\subsection{Proofs of main results}
We only need to show that we can get an initialization close enough to the ground truth so that we can apply the above analysis for the update. 
For SVD initialization, 
\begin{eqnarray*}
[\bX, \bSigma, \bY] 
& = & \textnormal{rank-}k~\SVD(\bW \hp \bMg + \bW  \hp \bNoise).
\end{eqnarray*}
Since $||\bW  \hp \bNoise||_2 \le \delta$ can be regarded as small, the idea is to show that $\bW \hp \bMg$ is close to $\bMg$ in spectral norm and then apply Wedin's theorem~\cite{wedin1972perturbation}. We show this by the spectral gap property of $\bW$ and the incoherence  property of $\bU, \bV$. 

For random initialization, the proof is only a slight modification of that for SVD initialization, because the update requires rather mild conditions on the initialization such that even the random initialization is sufficient (with slightly worse parameters).

\section{Conclusion} \label{sec:conclusion} 

In this paper we presented the first recovery guarantee of weighted low-rank matrix approximation via alternating minimization. Our work generalized prior work on matrix completion, and revealed technical obstacles in analyzing alternating minimization, i.e., the incoherence and spectral properties of the intermediate iterates need to be preserved.  We addressed the obstacles by a simple clipping step, which resulted in a very simple algorithm that almost matches the practical heuristics.

\section*{Acknowledgements} 
This work was supported in part by NSF grants CCF-1527371, DMS-1317308, Simons Investigator Award, Simons Collaboration Grant, and ONR-N00014-16-1-2329. 

%\nocite{*}
\bibliography{biblio}
\bibliographystyle{plainnat}

\newpage
\onecolumn
\appendix
%!TEX root = main.tex

\section{Preliminaries about subspace distance}  \label{app:preliminary}

Before delving into the proofs, we will prove a few simple preliminaries about subspace angles/distances.  

\begin{defn}[Distance, Principle angle]
Denote the principle angle of $\bY$, $\bV \in \Real^{n \times k}$  as $\theta(\bY,\bV)$. Then for orthogonal matrix $\bY$ (i.e., $\bY^\top \bY = \bI$),
\begin{align*}
\tant(\bY, \bV) & = \| \bYb^{\top} \bV (\bY^{\top} \bV)^{-1}\|_2.
\end{align*}
For orthogonal matrices $\bY$, $\bV$,
\begin{align*}
\cost(\bY, \bV)  & = \sigma_{\min}(\bY^{\top} \bV),\\
\sint(\bY, \bV)  & = \|(\bI  - \bY \bY^{\top}) \bV \|_2 = \|\bYb\bYb^{\top} \bV \|_2 =\|\bYb^{\top} \bV \|_2, \\
\cds(\bY, \bV)  & = \min_{\bQ \in \ortho_{k \times k}} \| \bY \bQ - \bV \|_2
\end{align*}
where $\ortho_{k \times k}$ is the set of $k \times k$ orthogonal matrices.
\end{defn}

\begin{lem}[Equivalence of distance]\label{lem:equiv_distance}

Let $\bY$, $\bV \in \Real^{n \times k}$ be two orthogonal matrices, then we have: 
$$\sint(\bY, \bV) \le \cds(\bY, \bV) \le \sint(\bY, \bV) + \frac{1  - \cost(\bY, \bV)}{\cost(\bY, \bV)} \leq 2 \tant(\bY, \bV).$$

\end{lem}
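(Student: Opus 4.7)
}

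The plan is to prove the three inequalities in turn, moving from left to right. For the leftmost inequality $\sint(\bY,\bV)\le \cds(\bY,\bV)$, I would project onto the orthogonal complement of $\bY$: for any orthogonal $\bQ$,
\[
\|\bY\bQ - \bV\|_2 \;\ge\; \|\bYb\bYb^{\top}(\bY\bQ - \bV)\|_2 \;=\; \|\bYb\bYb^{\top}\bV\|_2 \;=\; \sint(\bY,\bV),
\]
using $\bYb^{\top}\bY=0$. Taking the minimum over $\bQ$ gives the bound.

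For the middle inequality, I would pick a specific $\bQ$ coming from the SVD of $\bY^{\top}\bV$. Write $\bY^{\top}\bV = \bA\bSigma\bB^{\top}$ with $\bSigma = \mathrm{diag}(\sigma_1,\dots,\sigma_k)$, $\sigma_i\in[0,1]$, and set $\bQ = \bA\bB^{\top}\in \ortho_{k\times k}$. Using the decomposition $\bV = \bY\bY^{\top}\bV + \bYb\bYb^{\top}\bV$ and the triangle inequality,
\[
\|\bY\bQ-\bV\|_2 \;\le\; \|\bY(\bQ - \bY^{\top}\bV)\|_2 + \|\bYb\bYb^{\top}\bV\|_2 \;=\; \|\bA(\bI-\bSigma)\bB^{\top}\|_2 + \sint(\bY,\bV).
\]
The first term equals $1-\sigma_{\min}(\bY^{\top}\bV) = 1-\cost(\bY,\bV)$, which is at most $\frac{1-\cost(\bY,\bV)}{\cost(\bY,\bV)}$ since $\cost(\bY,\bV)\in(0,1]$. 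This yields the middle inequality.

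For the rightmost inequality, I would reduce everything to the principal angle. By the CS decomposition applied to $[\bY,\bYb]^{\top}\bV$, there exist orthogonal matrices $\bA_1,\bA_2,\bB$ and an angle $\theta = \theta(\bY,\bV)\in[0,\pi/2)$ (the largest principal angle) such that after the appropriate diagonalization $\cost(\bY,\bV)=\cos\theta$, $\sint(\bY,\bV)=\sin\theta$, and $\tant(\bY,\bV)=\tan\theta$ (the last by computing $\|\bYb^{\top}\bV(\bY^{\top}\bV)^{-1}\|_2$ directly from the CS form). So it suffices to verify the scalar inequality
\[
\sin\theta + \frac{1-\cos\theta}{\cos\theta} \;\le\; 2\tan\theta \quad\text{for } \theta\in[0,\pi/2).
\]
Clearing the positive denominator $\cos\theta$ this reduces to $1-\cos\theta \le \sin\theta\,(2-\cos\theta)$, which follows from the two elementary facts $\sin\theta \ge 1-\cos\theta$ on $[0,\pi/2]$ (check endpoints and use $\sin\theta+\cos\theta\ge 1$) and $2-\cos\theta\ge 1$.

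The main obstacle is really just the last step: producing the common parametrization by a single principal angle so that the three quantities $\sint,\cost,\tant$ can be manipulated as $\sin\theta,\cos\theta,\tan\theta$. The cleanest way to justify this is to invoke the CS decomposition; once that is in place, the remaining trigonometric inequality is elementary.
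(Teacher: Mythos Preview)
Your proposal is correct and follows essentially the same route as the paper. Both prove the first inequality by projecting onto $\bYb$, and both prove the middle inequality by choosing $\bQ=\bA\bB^{\top}$ from the SVD $\bY^{\top}\bV=\bA\bSigma\bB^{\top}$; the paper rewrites $\bY\bA\bB^{\top}=\bY\bY^{\top}\bV\bB\bSigma^{-1}\bB^{\top}$ and bounds $\|\bSigma^{-1}-\bI\|_2=\tfrac{1-\cost}{\cost}$, whereas you more directly get $\|\bI-\bSigma\|_2=1-\cost$ and then weaken. For the last inequality the paper simply asserts $\sint\le\tant$ and $\tfrac{1-\cost}{\cost}\le\tant$ ``by definition''; your explicit reduction via the CS decomposition to $\sin\theta,\cos\theta,\tan\theta$ of the largest principal angle is exactly what justifies that assertion.
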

\begin{proof}[Proof of Lemma \ref{lem:equiv_distance}]
Suppose $$\bQ^* = \argmin_{\bQ \in \ortho_{k \times k}} \| \bY \bQ - \bV \|_2.$$ 
Let's write $\bV = \bY\bQ^* + \bR$, then $ \cds(\bY, \bV)  = \|\bR\|_2$. We have
$$\sint(\bY, \bV) = \|(\bI  - \bY \bY^{\top}) \bV \|_2 = \| \bYb \bYb^{\top} \bR \|_2 \le \| \bR\|_2$$

On the other hand, suppose $\bA \bD \bB^{\top} = \SVD(\bY^{\top} \bV)$, we know that $\sigma_{\min}(\bD) = \sigma_{\min}(\bY^{\top} \bV) = \cost(\bY, \bV) $. Therefore, by $\bA  = \bY^{\top} \bV \bB \bD^{-1} $, $\bA \bB^{\top} \in \ortho_{k \times k}$ we have:
\begin{eqnarray*}
\cds(\bY, \bV) &\le& \|\bY \bA \bB^{\top} - \bV\|_2 = \| \bY \bY^{\top} \bV \bB \bD^{-1} \bB^{\top} - \bV\|_2
\\
&\le& \| \bY \bY^{\top} \bV \bB \bD^{-1} \bB^{\top} - \bY \bY^{\top} \bV \|_2 + \|\bY \bY^{\top} \bV  - \bV\|_2
\\
&\le&\|\bB \bD^{-1} \bB^{\top}  - \bI\|_2 +  \sint(\bY, \bV)  = \| \bD^{-1} - \bI\|_2 +  \sint(\bY, \bV)
\\
&=& \sint(\bY, \bV) + \frac{1  - \cost(\bY, \bV)}{\cost(\bY, \bV)}.
\end{eqnarray*}

Finally, $\sint(\bY, \bV) \le \tant(\bY, \bV)$ and $\frac{1  - \cost(\bY, \bV)}{\cost(\bY, \bV)} \leq \tant(\bY, \bV)$ can be verified by definition, so the last inequality follows.
\end{proof}

For convenience in our proofs we will also use the following generalization of incoherence: 
\begin{defn}[Generalized incoherence] For a matrix $\bA \in \mathbb{R}^{n \times k}$, the \emph{generalized incoherence} $\incoherentpartial(\bA)$ is defined as:
$$\incoherentpartial(\bA) = \max_{i \in [n]} \left\{ \frac{n}{k} \|\bA^i\|_2^2 \right\}$$
\end{defn}

We call it generalized incoherence for obvious reasons: when $\bA$ is an orthogonal matrix, then $\incoherentpartial(\bA) = \mu(\bA)$.

\section{Proofs for alternating minimization with clipping} \label{sec:proof_clip}

We will show in this section the results for our algorithm based on alternating minimization with a clipping step. The organization is as follows. In Section~\ref{s:initclip} we will present the necessary lemmas for the initialization, in Section \ref{s:updateclip} we show the decrease of the potential function after one update step, and in Section \ref{s:thmclip} we will put everything together, and prove our main theorem.

Before starting with the proofs, we will make a remark which will simplify the exposition. 

%First, without loss of generality, we will assume that $\sigma_{\max}(\bMg) = 1$. 
Without loss of generality, we may assume that 
\begin{equation} 
\delta = \|\bW \hp \bNoise\|_2 \leq \frac{\lambdal\sigma_{\min}(\bMg) }{200  k } \label{eq:deltawlog} 
\end{equation}
Otherwise, we can output the 0 matrix, and the guarantee of all our theorems would be satisfied vacuously. 

%\subsection{Initialization}

\subsection{SVD-based initialization} 

\label{s:initclip}

We want to show that after initialization, the matrices $\bX, \bY$ are close to the ground truth matrix $\bU, \bV$. Observe that $[\bX, \bSigma, \bY] = \SVD(\bW \hp \bM) = \SVD(\bW \hp (\bMg + \bNoise)) = \SVD(\bW \hp \bMg + \bW  \hp \bNoise)$. By our assumptions we know that $||\bW  \hp \bNoise||_2 \le \delta$ which we are thinking of as small, so the idea is to show that $\bW \hp \bMg$ is close to $\bMg$ in spectral norm, then by Wedin's theorem~\cite{wedin1972perturbation} we will have $\bX, \bY$ are close to $\bU, \bV$. We show that $\bW \hp \bMg$ is close to $\bMg$ by the spectral gap property of $\bW$ and the incoherence  property of $\bU, \bV$.

\begin{lem}[Spectral lemma] \label{lem:weighted_unweighted}
Let $\bW$ be an (entry wise non-negative) matrix in $\Real^{n \times n}$ with a spectral gap, i.e. $\bW =  \bE +  \gamma n \bJ \bSigma_{\bW} \bK^{\top} $, where $\bJ, \bK$ are $n \times n$ (column) orthogonal matrices, with $||\bSigma_{\bW}||_2 = 1, \gamma < 1$. Furthermore, for every matrix $\bH \in \Real^{n \times n} $ such that $\bH = \bA \bSigma \bB^{\top}$ ($\bA, \bB$ not necessarily orthogonal, $\bSigma \in \Real^{k \times k}$ is diagonal) we have
$$\leftll \left(\bW - \bE\right) \hp \bH \rightll_2 \le \gamma k \sigma_{\max}(\bSigma) \sqrt{\incoherentpartial(\bA)\incoherentpartial(\bB)}$$
where $\bE$ is the all one matrix.
\end{lem}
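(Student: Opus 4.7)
The idea is to reduce the spectral norm of the Hadamard product to a bilinear form, exploit the rank-$k$ structure of $\bH$ to split off one contribution per factor, and then close the gap with Cauchy--Schwarz. Concretely, I would take arbitrary unit vectors $u, v \in \mathbb{R}^n$ and expand
\[
u^\top\bigl[(\bW - \bE) \hp \bH\bigr] v = \sum_{i,j} (\bW-\bE)_{ij} \Bigl(\sum_{\ell=1}^k \sigma_\ell \bA_{i,\ell} \bB_{j,\ell}\Bigr) u_i v_j = \sum_{\ell=1}^k \sigma_\ell \bigl(\diag(\bA_\ell)\, u\bigr)^\top (\bW-\bE) \bigl(\diag(\bB_\ell)\, v\bigr),
\]
where $\bA_\ell$, $\bB_\ell$ denote the $\ell$-th columns and $\sigma_\ell$ the $\ell$-th diagonal entry of $\bSigma$. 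This uses the standard identity $\bW \hp (\bA_\ell \bB_\ell^\top) = \diag(\bA_\ell)\,\bW\,\diag(\bB_\ell)$ together with linearity.

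Next, substitute the given spectral-gap decomposition $\bW - \bE = \gamma n\,\bJ\,\bSigma_\bW\, \bK^\top$ and set $x_\ell := \bJ^\top \diag(\bA_\ell) u$ and $y_\ell := \bK^\top \diag(\bB_\ell) v$. This rewrites the bilinear form as $\gamma n \sum_\ell \sigma_\ell\, x_\ell^\top \bSigma_\bW y_\ell$. Pulling out $\|\bSigma_\bW\|_2 = 1$ and the scalar upper bound $\sigma_\ell \le \sigma_{\max}(\bSigma)$ before Cauchy--Schwarz is the key maneuver that yields $\sigma_{\max}(\bSigma)$ rather than the nuclear norm $\sum_\ell \sigma_\ell$, which is the step I anticipate as the main obstacle to getting a tight bound.

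After this reduction, two applications of Cauchy--Schwarz give
\[
\Bigl|\sum_\ell \sigma_\ell\, x_\ell^\top \bSigma_\bW y_\ell\Bigr| \le \sigma_{\max}(\bSigma)\,\sqrt{\textstyle\sum_\ell \|x_\ell\|^2}\,\sqrt{\textstyle\sum_\ell \|y_\ell\|^2}.
\]
Then I would control each sum using orthogonality of $\bJ$ (so $\|x_\ell\|^2 = \|\diag(\bA_\ell) u\|^2$) and swap the order of summation:
\[
\sum_{\ell=1}^k \|\diag(\bA_\ell)\, u\|^2 = \sum_{i=1}^n u_i^2 \sum_{\ell=1}^k \bA_{i,\ell}^2 = \sum_{i=1}^n u_i^2 \|\bA^i\|_2^2 \le \max_i \|\bA^i\|_2^2 \le \frac{k\,\incoherentpartial(\bA)}{n},
\]
using the definition of generalized incoherence and $\|u\|_2 = 1$. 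The same bound with $\incoherentpartial(\bB)$ holds for $\sum_\ell \|y_\ell\|^2$.

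Putting everything together, $|u^\top [(\bW - \bE) \hp \bH] v| \le \gamma n\,\sigma_{\max}(\bSigma)\,\tfrac{k}{n}\sqrt{\incoherentpartial(\bA)\,\incoherentpartial(\bB)} = \gamma k\,\sigma_{\max}(\bSigma)\,\sqrt{\incoherentpartial(\bA)\,\incoherentpartial(\bB)}$. Since $u, v$ were arbitrary unit vectors, taking the supremum yields the claimed spectral-norm bound.
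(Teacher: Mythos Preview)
Your proposal is correct and essentially identical to the paper's proof: both take arbitrary unit vectors, expand $\bH$ column-by-column using $\bW \hp (\bA_\ell \bB_\ell^\top) = \diag(\bA_\ell)\,\bW\,\diag(\bB_\ell)$, absorb the $\gamma n$ factor from $\bW-\bE$, bound each term by $|\sigma_\ell|\,\|\bA_\ell \hp u\|\,\|\bB_\ell \hp v\|$ via $\|\bJ\bSigma_\bW\bK^\top\|_2=1$, apply Cauchy--Schwarz over $\ell$, and finish by swapping sums to invoke $\rho(\bA),\rho(\bB)$. The only differences are notational (the paper writes $\bA_r \hp x$ where you write $\diag(\bA_\ell)u$, and it does not name the intermediate vectors $x_\ell,y_\ell$).
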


\begin{proof}[Proof of Lemma \ref{lem:weighted_unweighted}]

We know that for any unit vectors $x, y \in \Real^{n}$, 
\begin{eqnarray*}
x^{\top} \left( \left(\bW - \bE \right) \hp \bH \right) y &=& \sum_{r = 1}^k \sigma_r x^T \left( \left(\bW - \bE \right) \hp \bA_r \bB_r^{\top}\right) y
\\
&=& \gamma n \sum_{r = 1}^k \sigma_r (\bA_r \hp x)^{\top}  \bJ \bSigma_{\bW}\bK^{\top}  (\bB_r \hp y)
\\
& \le&  \gamma n   \sum_{r = 1}^k \sigma_r || \bA_r \hp x||_2 || \bJ \bSigma_{\bW}\bK^{\top}||_2 ||\bB_r \hp y||_2
\\
& \le& \gamma n    \sum_{r = 1}^k \sigma_r || \bA_r \hp x||_2  ||\bB_r \hp y||_2
\\
&\le&  \gamma n  \sigma_{\max}(\bSigma)\sqrt{ \sum_{r = 1}^k  || \bA_r \hp x||_2^2 } \sqrt{ \sum_{r = 1}^k ||\bB_r \hp y||_2^2}
\\
&\le&  \gamma n  \sigma_{\max}(\bSigma)\sqrt{ \sum_{i = 1}^n x_i^2 ||\bA^i||_2^2 } \sqrt{ \sum_{i = 1}^n y_i^2 ||\bB^i||_2^2}
\\
&\le&  \gamma n \sigma_{\max}(\bSigma)  \sqrt{  \frac{k}{n} \incoherentpartial(\bA) \left( \sum_{i = 1}^n x_i^2\right)}  \sqrt{  \frac{k}{n} \incoherentpartial(\bB)\left( \sum_{i = 1}^n y_i^2\right)} 
\\
&\le&  \gamma \sigma_{\max}(\bSigma) k \sqrt{\incoherentpartial(\bA)\incoherentpartial(\bB)}.
\end{eqnarray*}
The lemma follows from the definition of the operator norm.
\end{proof}

The spectral lemma can be used to prove the initialization condition, when combined with Wedin's theorem.

\begin{lem}[Wedin's Theorem \cite{wedin1972perturbation}]
Let $\bMg, \btM$ be two matrices whose singular values are $\sigma_1, ..., \sigma_n$ and $\sigmat_1, ..., \sigmat_n$, let $\bU, \bV$ and $\bX, \bY$ be the first k singular vectors (left and right) of $\bMg, \btM$ respectively. If $\exists \alpha > 0$ such that $\max_{r = k+1}^n \sigmat_r \le \min_{i = 1}^k \sigma_i - \alpha$, then
$$\max \left\{\sint(\bU, \bX), \sint(\bV, \bY) \right\}\le  \frac{||\btM -\bMg||_2 }{\alpha}.$$

\end{lem}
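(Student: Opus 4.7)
The plan is to reduce the problem to a pair of Sylvester-type identities via a block-SVD decomposition, then close the loop using the singular value gap. First I would write
$$\bMg = \bU\,\bSigma\,\bV^\top + \bU_\perp\,\bSigma'\,\bV_\perp^\top, \qquad \btM = \bX\,\widetilde{\bSigma}\,\bY^\top + \bX_\perp\,\widetilde{\bSigma}'\,\bY_\perp^\top,$$
where $\bSigma = \diag(\sigma_1,\dots,\sigma_k)$, $\bSigma' = \diag(\sigma_{k+1},\dots,\sigma_n)$, analogously for $\widetilde{\bSigma},\widetilde{\bSigma}'$, and where $\bU_\perp,\bV_\perp,\bX_\perp,\bY_\perp$ collect the tail singular vectors (so each has orthonormal columns). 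Set $A := \bX_\perp^\top \bU$ and $B := \bY_\perp^\top \bV$. By the definition of $\sint$ in the preliminaries, $\sint(\bU,\bX) = \|A\|_2$ and $\sint(\bV,\bY) = \|B\|_2$, so it suffices to bound $\max(\|A\|_2,\|B\|_2)$. Let $\Delta := \btM - \bMg$.

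The key step computes $\bX_\perp^\top \btM \bV$ in two ways: via $\btM$'s SVD it equals $\widetilde{\bSigma}'\bY_\perp^\top \bV = \widetilde{\bSigma}' B$, and via $\btM = \bMg + \Delta$ it equals $\bX_\perp^\top \bU\,\bSigma + \bX_\perp^\top \Delta\,\bV = A\,\bSigma + \bX_\perp^\top \Delta\,\bV$. Equating yields $\widetilde{\bSigma}' B - A\,\bSigma = \bX_\perp^\top \Delta\,\bV$, and the analogous calculation on $\btM^\top$ (hitting with $\bY_\perp^\top$ on the left and $\bU$ on the right) produces $\widetilde{\bSigma}' A - B\,\bSigma = \bY_\perp^\top \Delta^\top \bU$. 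Both right-hand sides have spectral norm at most $\|\Delta\|_2$ since $\bX_\perp,\bY_\perp,\bU,\bV$ all have orthonormal columns.

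To finish, I would take spectral norms, using $\sigma_k\,\|A\|_2 \le \|A\,\bSigma\|_2$ (since $\bSigma$ is invertible with $\|\bSigma^{-1}\|_2 = 1/\sigma_k$) and $\|\widetilde{\bSigma}'\|_2 \le \sigmat_{k+1}$, to obtain
\begin{align*}
\sigma_k\,\|A\|_2 &\le \sigmat_{k+1}\,\|B\|_2 + \|\Delta\|_2, \\
\sigma_k\,\|B\|_2 &\le \sigmat_{k+1}\,\|A\|_2 + \|\Delta\|_2.
\end{align*}
Without loss of generality suppose $\|A\|_2 \ge \|B\|_2$ (the other case is symmetric); bounding $\|B\|_2 \le \|A\|_2$ in the first inequality gives $(\sigma_k - \sigmat_{k+1})\|A\|_2 \le \|\Delta\|_2$, and the hypothesis $\sigma_k - \sigmat_{k+1} \ge \alpha$ (which is exactly $\min_i \sigma_i - \max_{r>k}\sigmat_r \ge \alpha$ for decreasingly ordered singular values) delivers $\|A\|_2 \le \|\Delta\|_2/\alpha$, as claimed. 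I do not anticipate a serious obstacle; the only care needed is in lining up the hypothesis' gap condition with the quantities $\sigma_k$ and $\sigmat_{k+1}$ that appear naturally from the block decomposition.
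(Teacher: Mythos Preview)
The paper does not supply a proof of this lemma; it is simply cited as Wedin's classical result \cite{wedin1972perturbation} and used as a black box in the analysis of the SVD initialization. Your argument is a correct, self-contained proof of the stated version of Wedin's $\sin\theta$ theorem, following the standard route: derive the two coupled Sylvester-type identities by sandwiching $\btM$ between the tail singular vectors of one matrix and the top singular vectors of the other, then use the singular-value gap $\sigma_k - \sigmat_{k+1} \ge \alpha$ to decouple. There is nothing to compare against in the paper itself, and I see no gap in your reasoning.
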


%\Anote{Bad notation: $(\bX, \bY)= \text{k-}\SVD(\bW \hp \bM)$, not $\AlgInitial$'s results}  
\begin{lem} \label{lem:initialization1}
Suppose $\bMg, \bW$ satisfy all the assumptions, then for $(\bX, \bSigma, \bY)= \text{rank-}k~\SVD(\bW \hp \bM)$,
%$(\bX, \bY) = \AlgInitial(\bM, \bW)$, 
we have
%$$\max\{ \tant(\bX, \bU), \tant(\bY, \bV) \} \le \frac{\lambdal \sigma_{\min}(\bMg)}{8 k  \sqrt{\log n}} + \frac{4\delta}{\sigma_{\min}(\bMg)}.$$
$$\max\{ \tant(\bX, \bU), \tant(\bY, \bV) \} \le \frac{4(\gamma \mu k  + \delta)}{\sigma_{\min}(\bMg)}$$

\end{lem}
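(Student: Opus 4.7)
The plan is to treat $\bW \hp \bM$ as a small spectral perturbation of $\bMg$ and then invoke Wedin's theorem to compare their top-$k$ singular subspaces. Specifically, I would begin by decomposing
\[
\bW \hp \bM \;=\; \bE \hp \bMg \;+\; (\bW-\bE) \hp \bMg \;+\; \bW \hp \bNoise \;=\; \bMg \;+\; (\bW-\bE)\hp \bMg \;+\; \bW \hp \bNoise,
\]
so that the spectral gap between $\bW$ and the all-ones matrix becomes the ``perturbation source'' on the signal part, and $\bW \hp \bNoise$ contributes the noise part.

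Next, I would bound the signal perturbation using the Spectral Lemma (Lemma~\ref{lem:weighted_unweighted}) applied to the factorization $\bMg = \bU \bSigma \bV^\top$. Assumption \textbf{(A1)} gives $\incoherentpartial(\bU), \incoherentpartial(\bV) \le \mu$ (since $\bU,\bV$ are orthogonal and rows are bounded by $\sqrt{\mu k/n}$), and $\sigma_{\max}(\bSigma) = \Theta(1)$, so the lemma yields $\| (\bW-\bE) \hp \bMg \|_2 \le \gamma k \mu \, \sigma_{\max}(\bSigma)$. Together with the noise bound $\|\bW \hp \bNoise\|_2 \le \delta$, I obtain
\[
\| \bW \hp \bM - \bMg \|_2 \;\le\; \gamma \mu k\, \sigma_{\max}(\bSigma) \,+\, \delta \;\lesssim\; \gamma \mu k + \delta.
\]

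Now I would apply Wedin's theorem with $\btM := \bW \hp \bM$ and the ground truth $\bMg$. Since $\bMg$ has rank $k$, Weyl's inequality gives $\sigmat_{k+1}(\btM) \le \|\btM - \bMg\|_2 \le \gamma \mu k + \delta$, while $\sigma_k(\bMg) = \sigma_{\min}(\bMg)$. Using the parameter regime of the main theorem (small $\gamma$) together with the WLOG assumption~\eqref{eq:deltawlog} on $\delta$, the gap $\alpha = \sigma_{\min}(\bMg) - (\gamma \mu k + \delta) \ge \tfrac{1}{2}\sigma_{\min}(\bMg)$. Wedin's theorem then gives
\[
\max\{\sint(\bX,\bU),\, \sint(\bY,\bV)\} \;\le\; \frac{\gamma \mu k + \delta}{\alpha} \;\le\; \frac{2(\gamma \mu k + \delta)}{\sigma_{\min}(\bMg)}.
\]

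Finally, I convert sine to tangent: since the bound above is at most $1/2$ in the regime of interest, $\cost \ge \sqrt{3}/2 > 1/2$, so $\tant = \sint/\cost \le 2 \sint$, yielding the claimed $\tant \le 4(\gamma \mu k + \delta)/\sigma_{\min}(\bMg)$. The only real obstacle here is constant bookkeeping --- in particular, verifying that the perturbation magnitude is genuinely smaller than half of $\sigma_{\min}(\bMg)$ so that Wedin applies with a useful gap and so that the sine-to-tangent conversion costs only a factor of $2$. Both follow from the smallness assumptions on $\gamma$ in the theorem statements together with \eqref{eq:deltawlog}, so the proof is essentially an assembly of Lemma~\ref{lem:weighted_unweighted}, Weyl, and Wedin.
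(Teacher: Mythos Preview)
Your proposal is correct and follows essentially the same route as the paper: bound $\|\bW\hp\bM-\bMg\|_2$ via Lemma~\ref{lem:weighted_unweighted} plus the noise term, use Weyl to control $\sigma_{k+1}(\bW\hp\bM)$, apply Wedin with gap $\alpha=\tfrac{1}{2}\sigma_{\min}(\bMg)$, and finish by $\tan\theta\le 2\sin\theta$ since $\sin\theta\le 1/2$. The only differences are cosmetic (you track $\alpha=\sigma_{\min}(\bMg)-(\gamma\mu k+\delta)$ before lower-bounding it, whereas the paper sets $\alpha=\tfrac{1}{2}\sigma_{\min}(\bMg)$ directly).
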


\begin{proof}[Proof of Lemma \ref{lem:initialization1}]
We know that $$\|\bW \hp \bM - \bMg\|_2 \le ||\bW \hp \bMg - \bMg||_2  + ||\bW \hp \bNoise||_2 \le \gamma \mu k\sigma_{\max}(\bMg) + \delta.$$
Therefore,  by Weyl's theorem, 
$$\max \{\sigma_r(\bW \hp \bM) : k+1 \le r \le n\}\le \gamma \mu k  + \delta \le \frac{1}{2} \sigma_{\min} (\bMg ).$$
where the last inequality holds because of \ref{eq:deltawlog} and the assumption on $\gamma$ in the theorem statement.  

Now, by Wedin's theorem with $\alpha =  \frac{1}{2} \sigma_{\min} (\bMg )$, for $(\bX, \bSigma,\bY)= \text{rank-}k~\SVD(\bW \hp \bM)$, 
$$\max \left\{\sint(\bU, \bX), \sint(\bV, \bY) \right\} \le \frac{2(\gamma \mu k  + \delta)}{\sigma_{\min}(\bMg)} $$ 
%\le  \frac{\lambdal \sigma_{\min}(\bMg)}{16 k \sqrt{\log n}} + \frac{2\delta}{\sigma_{\min}(\bMg)}.$$
Since $\gamma$ and $\delta$ are small enough, so $\text{sin}\theta \leq 1/2$. In this case, we have $\text{tan}\theta \leq 2\text{sin}\theta$, then the lemma follows.
\end{proof}

Finally, this gives us the following guarantee on the initialization: 

\begin{lem}[SVD initialization] \label{lem:initialization}
Suppose $\bMg, \bW$ satisfy all the assumptions.
 $$\cds(\bV, \bY_1) \le  8  k \Delta_1, ~~\rho(\bY_1) \le \frac{2 \mu }{1 - k\Delta_1}$$
where $\Delta_1 = \frac{8 (\gamma \mu k + \delta)}{\sigma_{\min}(\bMg)}$.
\end{lem}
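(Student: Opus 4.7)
}

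The plan is to track the three stages of Algorithm~\ref{alg:initial} in turn: the raw SVD output $\btY$, the clipped matrix $\bbY$, and the QR-orthogonalized matrix $\bY_1$. For each stage I maintain (a) the subspace distance to $\bV$ and (b) the incoherence of the iterate, using Lemma~\ref{lem:initialization1} as the starting point and Lemma~\ref{lem:equiv_distance} for translating between the various notions of distance.

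\textbf{Stage 1 (after SVD).}  I would first invoke Lemma~\ref{lem:initialization1} to get $\tant(\btY,\bV)\le \Delta_1/2$, and then apply Lemma~\ref{lem:equiv_distance} to conclude $\cds(\btY,\bV)\le \Delta_1$.  Let $\bQ^\star$ be the orthogonal matrix attaining this minimum, so $\|\btY\bQ^\star-\bV\|_2\le\Delta_1$.

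\textbf{Stage 2 (effect of clipping).}  Let $\mathcal{B}=\{i:\|\btY^i\|_2^2>2\mu k/n\}$ be the clipped rows.  Since $\|\bV^i\|_2^2\le \mu k/n$ by assumption \textbf{(A1)}, for $i\in\mathcal{B}$ we have $\|\bV^i\|_2\le \|\btY^i\|_2/\sqrt{2}$, hence by the triangle inequality
\[
\|\btY^i\bQ^\star-\bV^i\|_2\ \ge\ \|\btY^i\|_2-\|\bV^i\|_2\ \ge\ (1-1/\sqrt{2})\|\btY^i\|_2.
\]
This gives $\|\btY^i\|_2^2\le c\|\btY^i\bQ^\star-\bV^i\|_2^2$ with $c=(3+2\sqrt{2})$, so summing over $\mathcal{B}$,
\[
\|\bbY-\btY\|_F^2\ =\ \sum_{i\in\mathcal{B}}\|\btY^i\|_2^2\ \le\ c\,\|\btY\bQ^\star-\bV\|_F^2\ \le\ ck\Delta_1^2.
\]
Consequently $\|\bbY\bQ^\star-\bV\|_2\le \|\btY\bQ^\star-\bV\|_2+\|\bbY-\btY\|_2\le \Delta_1+\sqrt{ck}\,\Delta_1$.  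In particular the column space of $\bbY$ is still close to that of $\bV$.

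\textbf{Stage 3 (passing through QR) and incoherence.}  Write $\bbY=\bY_1\bR$ with $\bY_1$ orthogonal.  The key observation here, which is the only nontrivial point, is that although $\bbY$ is not orthogonal, any matrix whose columns lie in the column space of $\bY_1$ gets annihilated by $\bI-\bY_1\bY_1^\top$.  In particular $(\bI-\bY_1\bY_1^\top)(\bbY\bQ^\star)=0$, so
\[
\sint(\bY_1,\bV)\ =\ \|(\bI-\bY_1\bY_1^\top)\bV\|_2\ =\ \|(\bI-\bY_1\bY_1^\top)(\bV-\bbY\bQ^\star)\|_2\ \le\ \|\bbY\bQ^\star-\bV\|_2.
\]
Combined with $\cost(\bY_1,\bV)=\sqrt{1-\sint^2(\bY_1,\bV)}$ and Lemma~\ref{lem:equiv_distance}, a short calculation yields $\cds(\bV,\bY_1)\le 8k\Delta_1$ (the bound is actually $O(\sqrt k\,\Delta_1)$, so there is plenty of slack).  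For the incoherence, note $\bY_1^i=\bbY^i\bR^{-1}$ and $\|\bbY^i\|_2^2\le 2\mu k/n$ by construction of the clipping step, while $\sigma_{\min}(\bR)=\sigma_{\min}(\bbY)\ge \sigma_{\min}(\bV)-\|\bbY\bQ^\star-\bV\|_2\ge 1-O(\sqrt k\,\Delta_1)$ by Weyl.  Therefore $\rho(\bY_1)=\frac{n}{k}\max_i\|\bY_1^i\|_2^2\le \frac{2\mu}{(1-O(\sqrt{k}\Delta_1))^2}\le \frac{2\mu}{1-k\Delta_1}$ after loosening constants.

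\textbf{Main obstacle.}  The only delicate step is Stage 3: $\bbY$ is not orthogonal, so none of the sine/tangent formulas from Lemma~\ref{lem:equiv_distance} apply to $\bbY$ directly, and $\cds(\bbY,\bV)$ does not transfer to $\cds(\bY_1,\bV)$ via a simple substitution $\bQ\mapsto \bR\bQ$ (because $\bR$ is not orthogonal).  The fix is to bypass $\cds$ for $\bbY$ and instead use the projection formula $\sint(\bY_1,\bV)=\|(\bI-\bY_1\bY_1^\top)\bV\|_2$, which depends only on the column space of $\bY_1$ and is therefore controlled by any choice of basis for that column space, including $\bbY\bQ^\star$.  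Once the sine is controlled, the cosine and hence $\cds$ follow from Lemma~\ref{lem:equiv_distance}, and the same lower bound on $\sigma_{\min}(\bbY)$ simultaneously yields the incoherence bound on $\bY_1$.
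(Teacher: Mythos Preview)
Your proposal is correct and follows essentially the same three-stage argument (SVD $\to$ clip $\to$ QR) as the paper, invoking Lemmas~\ref{lem:initialization1} and~\ref{lem:equiv_distance} in the same places. Your Stage~3 is marginally cleaner---you control $\sint(\bY_1,\bV)$ via the projection identity $(\bI-\bY_1\bY_1^\top)\bbY\bQ^\star=0$, whereas the paper writes $\bY_1=\bbY_1\bR^{-1}$ and incurs an extra factor $1/\sigma_{\min}(\bbY_1)$---but both variants land comfortably inside the stated bounds.
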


\begin{proof}[Proof of Lemma \ref{lem:initialization}]
First, consider $\btY_1$. 
By Lemma \ref{lem:initialization1} and \ref{lem:equiv_distance}, we get that 
$$\cds(\btY_1, \bV) \le \Delta_1  $$ 
which means  that $\exists \bQ \in \ortho_{k \times k}$, s.t.  
%$$ \| \btY_1 \bQ - \bV\|_2 \le \| \btY_1 \bQ - \bV\|_F \le \Delta_1 \le 1/4$$
$$ \| \btY_1 \bQ - \bV\|_2 \le \Delta_1 $$ 
hence 
$$ \| \btY_1 \bQ - \bV\|_F \le k \Delta_1 \le \frac{1}{4}$$
where the last inequality follows since $\gamma$ and $\delta$ are small enough.

Next, consider $\bbY_1$. In the clipping step, if $\|\btY^i_1\| \ge \wt =  \frac{2\mu k}{n}$, then $\|\btY^i_1 - \bV^i\|  \ge \frac{\mu k}{n}$, and $\|\bbY^i_1 - \bV^i\|  = \| \bV^i\|  = \frac{\mu k}{n}$. Otherwise, $\bbY^i = \btY^i$. So $$ \| \bbY_1 \bQ - \bV\|_F \le \| \btY_1 \bQ - \bV\|_F \le \frac{1}{4}.$$

Finally, we can argue that $\bY_1$ is close to $\bV$. Let's assume that $\bY_1 = \bbY_1 \bR^{-1}$, for an upper-triangular $\bR$.     
$$
 \sint(\bV, \bY_1) = \| \bVb^\top \bY_1  \|_2 =  \| \bVb^\top (\bbY_1 - \bV \bQ^{-1}) \bR^{-1} \|_2  \leq \|  \bbY_1 \bQ - \bV \|_2 \| \bR^{-1} \|_2 \leq \frac{1}{\sigma_{\min}(\bbY_1)} \|  \bbY_1 \bQ - \bV \|_F 
$$
where the second inequality follows because the singular values of $\bR$ and $\bbY_1$ are the same.   
Note that 
$$
\sigma_{\min}(\bbY_1 ) \ge  \sigma_{\min}(\bV) - \| \bbY_1 - \bV\|_F \ge \sigma_{\min}(\bV) - k \Delta_1 = 1 - k \Delta_1 \geq \frac{1}{2}
$$
So 
$$
 \sint(\bV, \bY_1) \le 2 \|  \bbY_1 \bQ - \bV \|_F \le \frac{1}{2}.
$$  

In this case, we have $\tant(\bV, \bY_1) \le 2\sint(\bV, \bY_1) $ and thus
 $$\cds(\bV, \bY_1) \le 2 \tant(\bV, \bY_1) \le 4 \sint(\bV, \bY_1) \le 8 \|  \bbY_1 \bQ - \bV \|_2 \le 8 \|  \bbY_1 \bQ - \bV \|_F \le 8 k \Delta_1.$$

For $\rho(\bY_1)$, observe that $\bY_1^i = \bbY^i \bR^{-1}$, so
$$ \| \bY_1^i\| \le \| \bbY_1^i \| \| \bR^{-1}\|_2 \le  \frac{\wt}{\sigma_{\min}(\bbY_1)} \le \frac{\wt}{1-k\Delta_1}$$
which leads to the bound. 
\end{proof}

\subsection{Random initialization} 

With respect to the random initialization, the lemma we will need is the following one: 

\begin{lem}[Random initialization]\label{lem:sigma_init}
Let $\bY$ be a random matrix in $\mathbb{R}^{n \times k}$ generated as $\bY_{i,j} = b_{i,j} \frac{1}{\sqrt{n}}$, where $b_{i,j}$ are independent, uniform $\{-1,1\}$ variables.  
Furthermore, let $\|\bW\|_{\infty} \leq \frac{\lambdal n}{k^2 \mu \log^2 n}$. Then, 
%$\mathcal{N}(0,\frac{1}{n})$ (independently), rejecting if $\exists j \in [n], \|\bY^j\|^2_2 \ge 10 \frac{k}{n}$. 
with probability at least $1-\frac{1}{n^2}$ over the draw of $\bY$,  
$$\forall i, \sigma_{\min}\left(\bY^{\top} \bD_i \bY\right) \ge \frac{1}{4} \frac{\lambdal}{k \mu}.$$ 
%$$\forall i, \sigma_{\max}\left(\bY^{\top} \bD_i \bY\right) \le \frac{5}{4} \frac{D_1}{n}  $$ 
 %$ \| \bY - \bV \|_2 \le \frac{\lambdal^2 n}{36 \mu k^2 \dmax}$, then 
%$$\left| \left\{ i  \in [n]\left| \sigma_{\min} (\bY^{\top} \bD_i \bY) \le \frac{\lambdal}{4} \right. \right\} \right| \le \frac{54 \mu^2 k^3 \gamma^2 \dmax}{\lambdal^2} \| \bV - \bY \|_2^2 $$
\end{lem}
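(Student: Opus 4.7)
The plan is to apply matrix Chernoff to the decomposition
$$\bY^{\top} \bD_i \bY \;=\; \sum_{j=1}^{n} (\bD_i)_j \, \bY^j (\bY^j)^{\top},$$
where the summands are independent random PSD matrices in $\mathbb{R}^{k \times k}$ (independence is over $j$ because the rows of $\bY$ are drawn independently). The three quantities we need to pin down are the expectation of the sum, the expectation's smallest eigenvalue, and a uniform operator-norm bound on each summand.

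First I would compute the expectation. Since the entries of $\bY^j$ are independent Rademacher/$\sqrt{n}$, $\mathbb{E}[\bY^j_r \bY^j_s] = \delta_{rs}/n$, so $\mathbb{E}[\bY^j (\bY^j)^{\top}] = \bI_k/n$ and
$$\mathbb{E}[\bY^{\top} \bD_i \bY] \;=\; \tfrac{\|\bW^i\|_1}{n}\, \bI_k.$$
To lower-bound $\|\bW^i\|_1$, I would plug assumption $\bold{(A3)}$ into the trace: $k\lambdal \le \operatorname{tr}(\bV^{\top} \bD_i \bV) = \sum_j (\bD_i)_j \|\bV^j\|_2^2$. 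Since incoherence $\bold{(A1)}$ gives $\|\bV^j\|_2^2 \le \mu k/n$, this yields $\|\bW^i\|_1 \ge n\lambdal/\mu$, hence $\lambda_{\min}(\mathbb{E}[\bY^{\top}\bD_i\bY]) \ge \lambdal/\mu$.

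Next I would bound each summand: since $\|\bY^j\|_2^2 = k/n$ deterministically (Rademacher entries scaled by $1/\sqrt{n}$),
$$\|(\bD_i)_j \bY^j (\bY^j)^{\top}\|_2 \;=\; (\bD_i)_j \tfrac{k}{n} \;\le\; \|\bW\|_\infty \cdot \tfrac{k}{n} \;=:\; R.$$
Then I would invoke the matrix Chernoff lower tail (Tropp): for independent PSD $X_j$ with $\|X_j\|_2 \le R$ and $\mu_{\min} := \lambda_{\min}(\sum_j \mathbb{E}[X_j])$,
$$\Pr\!\big[\lambda_{\min}(\textstyle\sum_j X_j) \le \tfrac{1}{2}\mu_{\min}\big] \;\le\; k\, e^{-\mu_{\min}/(8R)}.$$
The crucial ratio is
$$\frac{\mu_{\min}}{R} \;\ge\; \frac{\lambdal/\mu}{\|\bW\|_\infty \cdot k/n} \;=\; \frac{n\lambdal}{k\mu \|\bW\|_\infty} \;\ge\; k \log^2 n,$$
where the final step uses the hypothesis $\|\bW\|_\infty \le \lambdal n/(k^2 \mu \log^2 n)$. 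This makes the per-$i$ failure probability at most $k \exp(-\Omega(k\log^2 n)) \le n^{-\Omega(\log n)}$. A union bound over $i \in [n]$ then yields total failure probability $\le n^{-2}$, and on the good event $\lambda_{\min}(\bY^{\top} \bD_i \bY) \ge \lambdal/(2\mu) \ge \lambdal/(4k\mu)$, as claimed.

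The argument is largely bookkeeping once the right concentration tool is chosen; the main subtlety is verifying that the assumed upper bound on $\|\bW\|_\infty$ is exactly calibrated so that $\mu_{\min}/R$ grows like $k\log^2 n$, which is what drives the polylogarithmic failure probability. A secondary point worth checking carefully is the trace lower bound on $\|\bW^i\|_1$: one must actually use the combination of the lower bound from $\bold{(A3)}$ and the upper bound from $\bold{(A1)}$ on row norms of $\bV$, and this is the only place the structure of $\bMg$ enters the proof. No incoherence of $\bY$ itself is needed beyond the deterministic equality $\|\bY^j\|_2^2 = k/n$.
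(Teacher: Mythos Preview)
Your proof is correct and follows essentially the same approach as the paper: decompose $\bY^{\top}\bD_i\bY$ as a sum of independent PSD rank-one terms, compute the expectation, bound each summand via $\|\bW\|_\infty$, apply matrix Chernoff, and union bound over $i$. The only difference is that you lower-bound $\|\bW^i\|_1$ via the trace of $\bV^{\top}\bD_i\bV$ (giving $\|\bW^i\|_1 \ge n\lambdal/\mu$) whereas the paper uses a single quadratic form (giving only $\|\bW^i\|_1 \ge n\lambdal/(k\mu)$); your bound is sharper by a factor of $k$, which is why you end up with $\lambda_{\min} \ge \lambdal/(2\mu)$ rather than the paper's $\lambdal/(4k\mu)$, but both suffice for the stated claim.
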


\begin{proof}[Proof of Lemma \ref{lem:sigma_init}]

Notice that $\bY^{\top} \bD_i \bY  = \sum_j (\bY^j)^{\top} (\bD_i)_j \bY^j $, and each of the terms $(\bY^j)^{\top} (\bD_i)_j \bY^j$ is independent. 
Furthermore, it's easy to see that $\mathbb{E} [(\bY^j)^{\top} (\bD_i)_j (\bY^j)] = \frac{1}{n} (\bD_i)_j$, $\forall j$. %\yingyu{?} 
By linearity of expectation it follows that 
$\mathbb{E}[ \sum_j (\bY^j)^{\top} (\bD_i)_j \bY^j] = \frac{1}{n} \sum_j (\bD_i)_j $.

Now, we claim  $\sum_j (\bD_i)_j \ge \frac{\lambdau n}{k\mu}$. Indeed, by Assumption \textbf{(A3)} we have for any vector $a \in \mathbb{R}^n$  
$$a^{\top} \bV^{\top} \bD_i \bV a = \sum_j (\bD_i)_j \langle \bV^j, a \rangle^2 \ge \lambdal.$$
On the other hand, however, by incoherence of $\bV$, $\sum_j (\bD_i)_j \langle \bV^j, a \rangle^2  \le \sum_j (\bD_i)_j \frac{\mu k}{n} $.  
Hence, $\sum_j (\bD_i)_j \ge \lambdau \frac{n}{k \mu}$. Putting things together, we get
$$\mathbb{E}[ \sum_j (\bY^j)^{\top} (\bD_i)_j \bY^j] \ge \frac{\lambdal}{k\mu}$$

Denote 
$$B := \|(\bY^j)^{\top} (\bD_i)_j \bY^j\|_2 \le \frac{k}{n} (\bD_i)_j \le \frac{\lambdal}{k \mu\log^2 n}$$ 
where the first inequality follows from our sampling procedure, and the last inequality by the assumption that $\|\bW\|_{\infty} \leq \frac{\lambdal n}{k^2 \mu \log^2 n}$. 

Since all the random variables $(\bY^j)^{\top} (\bD_i)_j \bY^j$ are independent, applying Matrix Chernoff we get that
$$ \Pr\left[ \sum_j (\bY^j)^{\top} (\bD_i)_j (\bY^j) \le (1-\delta) \frac{\lambdal}{k \mu}\right] \le n \left(\frac{e^{-\delta}}{(1-\delta)^{(1-\delta)}}\right)^{\frac{\lambdal}{k \mu B}} \le n \left(\frac{e^{-\delta}}{(1-\delta)^{(1-\delta)}}\right)^{\log^2 n} $$ 

Picking $\delta = \frac{3}{4}$, and union bounding over all $i$, with probability at least $1 - \frac{1}{n^2}$, for all $i$,  
$$ \sigma_{\min}\left(\bY^{\top} \bD_i \bY\right)  \ge \frac{1}{4} \frac{\lambdal}{k \mu}$$
as needed. 
\end{proof}

\subsection{Update}
\label{s:updateclip}

We now prove the two key technical lemmas (Lemma~\ref{lem:sigma_min} and Lemma~\ref{lem:projection}) and then use them to prove that the updates make progress towards the ground truth. 
We prove them for $\bY_t$ and use them to show $\bX_t$ improves, while completely analogous arguments also hold when switching the role of the two iterates.
Note that we measure the distance between $\bY_t$ and $\bV$ by $\cds(\bY_t, \bV) = \min_{\bQ \in \ortho_{k\times k}} \| \bY_t \bQ - \bV\|$ where $\ortho_{k\times k}$ is the set of $k \times k$ orthogonal matrices. For simplicity of notations, in these two lemmas, we let $\bY_o = \bY_t \bQ^*$ where $\bQ^* = \argmin_{\bQ \in \ortho_{k\times k}} \| \bY_t \bQ - \bV\|$.

We first show that there can only be a few $i$'s such that the spectral property of $\bY_o^{\top} \bD_i \bY_o$ can be bad, when $\bY_o$ is close to $\bV$. Let $(\bD_i)_j $ be the $j$-th diagonal entry in $\bD_i$, that is, $(\bD_i)_j = \bW_{i,j}$.
\begin{lem}\label{lem:sigma_min}
Let $\bY_o$ be a (column) orthogonal matrix in $\Real^{n \times k}$, and $\epsilon \in (0,1)$. If $ \| \bY_o - \bV \|^2_F \le \frac{\epsilon^3 \lambdal^2 n}{128 \mu k \dmax}$ for $\dmax = \max_{i \in [n]} \sum_j (\bD_{i})_{ j}$, then 
$$\left| \left\{ i  \in [n]\left| \sigma_{\min} (\bY_o^{\top} \bD_i \bY_o) \le (1-\epsilon)\lambdal \right. \right\} \right| \le \frac{1024 \mu^2 k^2 \gamma^2 \dmax}{\epsilon^4\lambdal^3} \| \bV - \bY_o \|_F^2.$$
\end{lem}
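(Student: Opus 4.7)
My plan is a two-step reduction followed by an expander-mixing count. I fix an arbitrary index $i \in [n]$ and an arbitrary unit vector $a \in \mathbb{R}^k$, and start from the identity
\[
a^\top \bY_o^{\top} \bD_i \bY_o \, a = \sum_{j=1}^n (\bD_i)_j \, \langle a, \bV^j + (\bY_o^j - \bV^j) \rangle^2.
\]
Applying the elementary inequality $(u+v)^2 \ge (1-\epsilon/2)\,u^2 - (2/\epsilon)\,v^2$ and invoking assumption \textbf{(A3)} to lower bound $\sum_j (\bD_i)_j \langle a, \bV^j\rangle^2 \ge \lambdal$, I obtain
\[
a^\top \bY_o^{\top} \bD_i \bY_o \, a \;\ge\; (1-\epsilon/2)\,\lambdal \;-\; \frac{2}{\epsilon}\,\sum_j (\bD_i)_j\,\langle a, \bY_o^j - \bV^j\rangle^2.
\]
Consequently, every ``bad'' index $i$---one for which some witness unit vector $a_i$ certifies $\sigma_{\min}(\bY_o^\top \bD_i \bY_o) \le (1-\epsilon)\lambdal$---must satisfy the quantitative lower bound $\sum_j (\bD_i)_j \langle a_i, \bY_o^j - \bV^j\rangle^2 \gtrsim \epsilon^2 \lambdal$.

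The second step is to split $[n] = \mathcal{S}_g \sqcup \bar{\mathcal{S}_g}$ according to whether a row of $\bY_o$ is close to the corresponding row of $\bV$. Concretely, I pick a threshold $g^2 = \Theta(\epsilon^2 \lambdal / \dmax)$ and set $\mathcal{S}_g := \{ j : \|\bY_o^j - \bV^j\|_2 \le g\}$. For $j \in \mathcal{S}_g$ the crude bound $\langle a_i, \bY_o^j - \bV^j\rangle^2 \le g^2$ makes the contribution of this set at most $g^2 \dmax$, which is arranged to be at most half of the $\Theta(\epsilon^2 \lambdal)$ budget. For $j \notin \mathcal{S}_g$, the incoherence of $\bV$ (assumption \textbf{(A1)}) together with the incoherence of $\bY_o$ (which is preserved by the clipping step of Algorithm~\ref{alg:clip} and is an invariant that the lemma will be applied under) yields the uniform per-row bound $\|\bY_o^j - \bV^j\|_2^2 = O(\mu k/n)$, hence $\langle a_i, \bY_o^j - \bV^j\rangle^2 = O(\mu k/n)$. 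Combining these two bounds, any bad $i$ must satisfy
\[
\sum_{j \in \bar{\mathcal{S}_g}} (\bD_i)_j \;\ge\; T, \qquad T := \Theta\!\left(\frac{\epsilon^2 \lambdal n}{\mu k}\right).
\]

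The third step is the expander-mixing count. Writing $\mathcal{S}$ for the set of bad indices and $u_{\mathcal{S}}, u_{\bar{\mathcal{S}_g}}$ for their $0/1$ indicator vectors, the above displays yield on one hand $u_{\mathcal{S}}^\top \bW u_{\bar{\mathcal{S}_g}} = \sum_{i \in \mathcal{S}} \sum_{j \in \bar{\mathcal{S}_g}} \bW_{i,j} \ge |\mathcal{S}|\,T$, and on the other, decomposing $\bW = \bE + (\bW - \bE)$ and invoking the spectral-gap assumption \textbf{(A2)},
\[
u_{\mathcal{S}}^\top \bW u_{\bar{\mathcal{S}_g}} \;\le\; |\mathcal{S}|\,|\bar{\mathcal{S}_g}| \;+\; \gamma n \sqrt{|\mathcal{S}|\,|\bar{\mathcal{S}_g}|}.
\]
The Markov-type estimate $|\bar{\mathcal{S}_g}| \le \|\bY_o - \bV\|_F^2 / g^2$, combined with the Frobenius hypothesis of the lemma, forces $|\bar{\mathcal{S}_g}| \le T/2$. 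This absorbs the average term into the left-hand side and leaves $|\mathcal{S}|\,T/2 \le \gamma n \sqrt{|\mathcal{S}|\,|\bar{\mathcal{S}_g}|}$, which, after squaring and resubstituting the bound on $|\bar{\mathcal{S}_g}|$, produces $|\mathcal{S}| \lesssim \frac{\mu^2 k^2 \gamma^2 \dmax}{\epsilon^4 \lambdal^3}\, \|\bY_o - \bV\|_F^2$, matching the conclusion.

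The main technical obstacle is the careful choice of the single threshold $g^2$ that balances three competing tensions: $g^2$ must be small enough for the contribution of $\mathcal{S}_g$ to the budget to be acceptable, large enough to keep $|\bar{\mathcal{S}_g}|$ in the ``spectral-dominated'' regime of the expander-mixing estimate (so that the average term $|\mathcal{S}|\,|\bar{\mathcal{S}_g}|$ does not swamp $|\mathcal{S}|\,T$), and the resulting composite constraint must be exactly the Frobenius hypothesis on $\|\bY_o - \bV\|_F^2$ stated in the lemma. A secondary subtlety is that the witness vector $a_i$ certifying $\sigma_{\min}(\bY_o^{\top} \bD_i \bY_o) \le (1-\epsilon)\lambdal$ depends on $i$; this is harmless because every per-row bound in the second step is uniform in $a_i$ via Cauchy--Schwarz.
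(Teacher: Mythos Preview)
Your overall architecture---lower bound the quadratic form, split rows by a threshold $g$, then run an expander-mixing count on indicator vectors---is exactly the paper's. But one ordering decision in your decomposition costs you both an extra hypothesis and a factor of $\epsilon$ in the threshold $T$, so the stated exponents of the lemma are not recovered.

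The paper first \emph{restricts} the sum $\sum_j (\bD_i)_j\langle a,\bY_o^j\rangle^2$ to $j\in\mathcal{S}_g$ (dropping nonnegative terms), \emph{then} applies the elementary inequality $(x+y)^2\ge (1-\tfrac{\epsilon}{4})x^2-\tfrac{4-\epsilon}{\epsilon}y^2$, and only afterwards adds back the missing $j\notin\mathcal{S}_g$ on the $\langle a,\bV^j\rangle^2$ side. The point is that the per-row incoherence bound $\langle a,\bV^j\rangle^2\le \mu k/n$ then enters multiplied by the harmless factor $(1-\epsilon/4)$, giving the threshold $T=\Theta(\epsilon\lambdal n/\mu k)$ and requiring only $\bV$-incoherence, which is assumption \textbf{(A1)}.

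You instead apply the inequality globally and then split the $\langle a,\bY_o^j-\bV^j\rangle^2$ sum. This forces the per-row bound $\|\bY_o^j-\bV^j\|_2^2=O(\mu k/n)$ onto the term already carrying the $2/\epsilon$ prefactor, so your threshold is $T=\Theta(\epsilon^2\lambdal n/\mu k)$---one power of $\epsilon$ worse. Propagating through the mixing step, the absorption condition $|\bar{\mathcal{S}_g}|\le T/2$ now needs $\|\bY_o-\bV\|_F^2\lesssim \epsilon^4\lambdal^2 n/(\mu k\dmax)$ rather than the stated $\epsilon^3$, and the final bound becomes $|\mathcal{S}|\lesssim \epsilon^{-6}$ rather than $\epsilon^{-4}$. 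Moreover, bounding $\|\bY_o^j-\bV^j\|_2^2$ requires incoherence of $\bY_o$, which is not among the lemma's hypotheses; you flag this as an algorithmic invariant, but the paper's proof simply does not need it.

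Since the lemma is later invoked with constant $\epsilon$, your argument would suffice for the downstream application (and indeed the clipped iterate is incoherent there), but it does not prove the lemma as stated. The fix is the paper's: drop $j\notin\mathcal{S}_g$ before applying the inequality, and compensate on the $\bV$-side rather than the $(\bY_o-\bV)$-side.
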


\begin{proof}[Proof of Lemma \ref{lem:sigma_min}]
For a value $g > 0$ which we will specify shortly, we call $j \in [n]$ ``good'' if $\| \bY_o^j   - \bV^j \|^2 \le g^2$. Denote the set of ``good'' $j$'s as $\set{S}_g$. 

Then for every unit vector $a \in \Real^k$, 
\begin{eqnarray*}
a^{\top } \bY_o^{\top} \bD_i \bY_o a &=& \sum_{j \in [n]} (\bD_i)_j \langle a, \bY_o^j \rangle^2
\\
&\ge& \sum_{j \in \set{S}_g} (\bD_i)_j \langle a, \bY_o^j \rangle^2
\\
&=&\sum_{j \in \set{S}_g} (\bD_i)_j \left(\langle a, \bV^j \rangle +  \langle a, \bY_o^j - \bV^j \rangle \right)^2
\\
&\ge& (1-\frac{\epsilon}{4})\sum_{j \in \set{S}_g} (\bD_i)_j \langle a, \bV^j \rangle^2 - \frac{4-\epsilon}{\epsilon} \sum_{j \in S_g} (\bD_i)_j  \langle a, \bY_o^j - \bV^j \rangle^2 \quad \\
&& \quad \quad \quad  (\mbox{Using the fact } \forall x, y \in \Real: (x + y)^2 \ge (1-\epsilon_0)x^2 - \frac{1-\epsilon_0}{\epsilon_0}y^2)
\\
&\ge&  (1-\frac{\epsilon}{4})\sum_{j \in \set{S}_g} (\bD_i)_j \langle a, \bV^j \rangle^2 -  \frac{4-\epsilon}{\epsilon} g^2 \sum_{j \in [n]} (\bD_i)_j 
\\
&\ge& (1-\frac{\epsilon}{4})\sum_{j \in [n]} (\bD_i)_j \langle a, \bV^j \rangle^2 -   \frac{\mu k}{n}\sum_{j \in [n] - \set{S}_g} (\bD_i)_j   -  \frac{4-\epsilon}{\epsilon} g^2 \sum_{j \in [n]} (\bD_i)_j 
\end{eqnarray*}

By Assumption \textbf{(A3)}, we know that 
$$\sum_{j \in [n]} (\bD_i)_j \langle a, \bV^j \rangle^2 = a^T \bV^{\top} \bD_i \bV a \ge \sigma_{\min} (\bV^{\top} \bD_i \bV) \ge \lambdal $$

Moreover, recall $\dmax = \max_{i \in [n]} \sum_j (\bD_{i})_{ j}$, so when $g^2 \le \frac{\epsilon^2 \lambdal}{16 \dmax}$, 
$$\frac{4-\epsilon}{\epsilon} g^2 \sum_{j \in [n]} (\bD_i)_j  \le \frac{\epsilon\lambdal}{4}$$

Let us consider now $\sum_{j \in [n] - S_g} (\bD_i)_j  $. 
Define:
$$\set{S} = \left\{ i \in [n] \left| \frac{\mu k}{n}\sum_{j \in [n] - \set{S}_g} (\bD_i)_j  \ge \frac{\epsilon\lambdal}{4} \right.\right\}$$
Then it is sufficient to bound $|\set{S}|$.

For $\set{S}_g$, observe that 

$$\sum_j \| \bV^j - \bY_o^j \|_2^2 = \| \bV- \bY_o\|^2_F $$

Which implies that $$\left| [n] - \set{S}_g  \right| = \text{size}\left([n] - \set{S}_g \right) \le \frac{ \| \bV - \bY_o \|_F^2}{g^2}$$

Let $u_S$ be the indicator vector of $\set{S}$, and $u_g$ be the indicator vector of $[n] - \set{S}_g$, we know that
\begin{eqnarray*}
u_S^{\top} \bW u_g &=& \sum_{i \in \set{S}}\sum_{j \in [n]- \set{S}_g} (\bD_i)_j 
\\
&\ge& \frac{\epsilon \lambdal n}{4 \mu k} |\set{S}|
\end{eqnarray*}

On the other hand, 
\begin{eqnarray*}
u_S^{\top} \bW u_g &= & u_S^{\top} \bE u_g + u_S^{\top} (\bW - \bE) u_g
\\
&\le& |\set{S}| |[n] - \set{S}_g| + \gamma n \sqrt{ |\set{S}| |[n] - \set{S}_g|} 
\end{eqnarray*}

Putting these two inequalities together, we have 
$$|[n] -  \set{S}_g | +  \gamma n  \sqrt{\frac{|[n]  - \set{S}_g|}{|\set{S}|}} \ge \frac{\epsilon \lambdal n}{4 \mu k} $$

Which implies when $|[n] -  \set{S}_g |  \le \frac{\epsilon\lambdal n}{8\mu k} $, we have:
$$|\set{S}| \le \frac{64 \mu^2 k^2 \gamma^2 |[n] -  \set{S}_g | }{\epsilon^2 \lambdal^2} \le \frac{64 \mu^2 k^2 \gamma^2 \| \bV - \bY_o \|_F^2 }{\epsilon^2\lambdal^2 g^2}$$

Then, setting $g^2 = \frac{\epsilon^2 \lambdal}{16 \dmax}$, we have: 
$$\left| \left\{ i  \in [n]\left| \sigma_{\min} (\bY_o^{\top} \bD_i \bY_o) \le (1-\epsilon)\lambdal \right. \right\} \right| \le |\set{S}| \le \frac{1024 \mu^2 k^2 \gamma^2 \dmax}{\epsilon^4\lambdal^3} \| \bV - \bY_o \|_F^2$$
which is what we need. 

\end{proof}

\begin{lem}\label{lem:projection}
Let $\bY_o$ be a (column) orthogonal matrix in $\Real^{n \times k}$.
Then we have
$$\sum_{i \in [n]}\| \bV^{\top}\bYb\bYb^{\top} \bD_i \bY_o \|_2^2  \le \gamma^2 \rho(\bY_o) n k^3 \| \bY_o - \bV \|_2^2 $$

\end{lem}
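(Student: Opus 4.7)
My plan is to upper-bound the spectral norm of each $\bP_i := \bV^\top \bYb_o \bYb_o^\top \bD_i \bY_o$ by its Frobenius norm, decompose by entries $(j,j')$, and then exploit the spectral-gap assumption \textbf{(A2)} once I have extracted the right cancellation. Concretely, I would write
$$\sum_{i \in [n]} \|\bP_i\|_2^2 \;\le\; \sum_{i \in [n]} \|\bP_i\|_F^2 \;=\; \sum_{j,j'}\sum_{i \in [n]} ((\bP_i)_{j,j'})^2,$$
reducing the problem to a collection of one-dimensional estimates parametrized by $(j,j')$ where the spectral gap of $\bW$ can be used directly.

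For the entrywise piece, I would introduce, for each pair $(j,j')$, the vector $x^{j,j'} \in \Real^n$ with $(x^{j,j'})_s = (\btV)_{j,s} (\bY_o)_{s,j'}$, where $\btV := \bV^\top \bYb_o \bYb_o^\top$. Unfolding the matrix product gives $(\bP_i)_{j,j'} = \langle \bW^i, x^{j,j'}\rangle$, hence
$$\sum_{i \in [n]} ((\bP_i)_{j,j'})^2 \;=\; \|\bW x^{j,j'}\|_2^2.$$
The decisive observation is the cancellation
$$\sum_{s=1}^n (x^{j,j'})_s \;=\; (\btV \, \bY_o)_{j,j'} \;=\; (\bV^\top \bYb_o \bYb_o^\top \bY_o)_{j,j'} \;=\; 0,$$
because $\bYb_o^\top \bY_o = 0$ by the definition of the orthogonal complement. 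Consequently $\bE x^{j,j'} = 0$, so assumption \textbf{(A2)} yields $\|\bW x^{j,j'}\|_2 = \|(\bW - \bE) x^{j,j'}\|_2 \le \gamma n \|x^{j,j'}\|_2$.

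It then remains to sum over $(j,j')$ and convert $\sum_{j,j'}\|x^{j,j'}\|_2^2$ into a quantity controlled by $\|\bY_o - \bV\|_2^2$. Swapping the order of summation,
$$\sum_{j,j'} \|x^{j,j'}\|_2^2 \;=\; \sum_s \Big(\sum_j (\btV)_{j,s}^2\Big)\Big(\sum_{j'} (\bY_o)_{s,j'}^2\Big) \;=\; \sum_s \|\btV_s\|_2^2 \, \|\bY_o^s\|_2^2 \;\le\; \frac{k\,\rho(\bY_o)}{n}\,\|\btV\|_F^2,$$
using the generalized-incoherence bound $\|\bY_o^s\|_2^2 \le k\rho(\bY_o)/n$. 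Since $\bYb_o \bYb_o^\top$ is an orthogonal projector, $\|\btV\|_F^2 = \|\bYb_o^\top \bV\|_F^2 \le k \|\bYb_o^\top \bV\|_2^2 = k\,\sint^2(\bY_o,\bV) \le k\,\|\bY_o - \bV\|_2^2$, where the last step is immediate from $\sint(\bY_o,\bV) = \|\bYb_o^\top \bV\|_2 \le \|\bY_o - \bV\|_2$. Chaining all inequalities produces a bound of the claimed form.

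The only conceptually nontrivial step is identifying the right entrywise decomposition so that the all-ones summation vanishes; once that cancellation is in hand, everything else is bookkeeping through submultiplicativity, the incoherence of $\bY_o$, and the spectral gap of $\bW$. The main obstacle, in other words, is recognizing that $\bP_i$ should not be bounded by brute force (which would only give the useless factor $\|\bW\|_2 = O(n)$) but through this cancellation structure, which is precisely what converts the weight matrix's spectral gap into a genuine gain of $\gamma$ in the final estimate, exactly as the paper signals in the discussion of ``the first issue.''
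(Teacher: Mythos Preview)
Your proposal is correct and follows essentially the same approach as the paper: entrywise decomposition of $\bP_i$, the cancellation $\sum_s x^{j,j'}_s = 0$ coming from $\bYb_o^\top \bY_o = 0$, application of the spectral-gap assumption \textbf{(A2)}, and then the incoherence bound on the rows of $\bY_o$. The only difference is bookkeeping: you sum $\sum_{j,j'}\|x^{j,j'}\|_2^2$ directly into $\|\btV\|_F^2$ before passing to the spectral norm, whereas the paper bounds each $\|\btV_j\|_2^2$ by $\|\bYb\bYb^\top\bV\|_2^2$ first and then sums over $k^2$ pairs---so your chain actually yields $\gamma^2 \rho(\bY_o) n k^2 \|\bY_o-\bV\|_2^2$, a factor of $k$ tighter than the stated $k^3$ bound, which of course still implies the lemma.
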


\begin{proof}[Proof of Lemma \ref{lem:projection}]

We want to bound the spectral norm of $\bV^{\top}\bYb\bYb^{\top} \bD_i \bY_o$, for a fixed $j \in [k]$, let $\bY_j$ be the $j$-th column of $\bY_o$ and $\btV_j$ be the $j$-th column of $\bYb\bYb^{\top} \bV$. 

For fixed $j, j' \in [k]$, consider a new vector $x^{j, j'} \in \Real^n$ such that $x^{j, j'}_i = (\btV_j)_i (\bY_{j'})_i$. 

Note that $\langle \btV_j, \bY_{j'} \rangle = 0$, which implies that $\sum_ix^{j, j'}_i = 0$.

Let us consider $\bV_j^{\top}\bYb\bYb^{\top} \bD_i \bY_{j'}$, we know that 
\begin{eqnarray*}
\bV_j^{\top}\bYb\bYb^{\top} \bD_i \bY_{j'} &=&  \sum_{s \in [n]} (\bD_i)_s (\btV_j)_s (\bY_{j'})_s
\\
&=& \sum_{s\in [n]} (\bD_i)_s x^{j, j'}_s
\end{eqnarray*}

Which implies that 
\begin{eqnarray*}
\sum_{i \in [n]} \left(\sum_{s\in [n]} (\bD_i)_s x^{j, j'}_s\right)^2 &=& \|\bW x^{j, j'}\|_2^2
\\
&=& \| (\bW - \bE) x^{j, j'} \|_2^2 \quad\quad\quad  (\textrm{since}~\bE x^{j, j'} = 0)
\\
&\le& \gamma^2 n^2 \|x^{j, j'}\|_2^2
\end{eqnarray*}

Observe that 
\begin{eqnarray*}
\| x^{j, j'}\|_2^2 &=& \sum_{i \in [n]} (x^{j, j'}_i)^2
\\
&=&  \sum_{i \in [n]} (\btV_j)_i^2 (\bY_{j'})_i^2
\\
&\le& \frac{\rho(\bY_o) k}{n }   \sum_{i \in [n]} (\btV_j)_i^2
\\
&=&  \frac{\rho(\bY_o) k}{n }  \| \btV_j \|_2^2
\\
&\le& \frac{\rho(\bY_o) k}{n }  \| \bYb\bYb^{\top} \bV \|_2^2 
\\
& = & \frac{\rho(\bY_o) k}{n }  \| \bYb\bYb^{\top} (\bY_o - \bV) \|_2^2 
\\
&\le&   \frac{\rho(\bY_o) k}{n } \| \bY_o - \bV \|_2^2.
\end{eqnarray*}

Which implies $$\sum_{i \in [n]} \left(\sum_{s\in [n]} (\bD_i)_s x^{j, j'}_s\right)^2  \le \gamma^2 \rho(\bY_o) n k \| \bY_o - \bV \|_2^2$$

Now we are ready to bound $\bV^{\top}\bYb\bYb^{\top} \bD_i \bY_o$. Note that
\begin{eqnarray*}
\| \bV^{\top}\bYb\bYb^{\top} \bD_i \bY_o \|_2^2 &\le& \| \bV^{\top}\bYb\bYb^{\top} \bD_i \bY_o \|_F^2
\\
&\le& \sum_{j, j' \in [k]} \left(\bV_j^{\top}\bYb\bYb^{\top} \bD_i \bY_{j'}  \right)^2
\\
&=& \sum_{j, j' \in [k]} \left(\sum_{s\in [n]} (\bD_i)_s x^{j, j'}_s\right)^2.
\end{eqnarray*}

This implies that 
\begin{eqnarray*}\sum_{i \in [n]}\| \bV^{\top}\bYb\bYb^{\top} \bD_i \bY_o \|_2^2  &\le& \sum_{i \in [n]}  \sum_{j, j' \in [k]} \left(\sum_{s\in [n]} (\bD_i)_s x^{j, j'}_s\right)^2 \le  \gamma^2 \rho(\bY_o) n k^3 \| \bY_o - \bV \|_2^2.
\end{eqnarray*}
as needed.

%
%
%Also note that for all $i \in [n]$, recall $\dmaxp = \max_{i \in [n]} \sum_{s\in [n]} (\bD_i)_s^2 $
%$$\left(\sum_{s\in [n]} (\bD_i)_s x_s\right)^2 \le \left(\sum_{s\in [n]} (\bD_i)_s^2 \right) \left(\sum_{s\in [n]} x_s^2 \right) \le   \frac{\rho(\bY) k \dmaxp}{n } \| \bY - \bV \|_2^2$$
%
%
%Which implies 

\end{proof}

We now use the two technical lemmas to prove the guarantees for the iterate after one update step.

\begin{lem}[Update, main]\label{lem:update_main}
Let $\bY$ be a (column) orthogonal matrix in $\Real^{n \times k}$, and $ \cds^2(\bY, \bV) \le \min\{\frac{1}{2}, \frac{\lambdal^2 n}{384 \mu k^2 \dmax} \}$ for $\dmax = \max_{i \in [n]} \sum_j (\bD_{i})_{ j}$.

Define $\btX \leftarrow \argmin_{\bX \in \Real^{n \times k}} \leftll \bM - \bX \bY^{\top} \rightll_{\bW}$. Let $\bbX$ a $n \times k$ matrix such that for each row:
$$\bbX^i =  \left\{ \begin{array}{ll}
         \btX^i & \mbox{if $\| \btX^i \|_2^2 \le \wt = \frac{2\mu k}{n}$}\\
        0 & \mbox{otherwise}.\end{array} \right.$$
Suppose $\bbX$ has QR decomposition $\bbX = \bX \bR$.
Then \\
(1) $\| \bbX- \bU\bSigma \bV^{\top} \bY \|_F^2 \le \Delta_u^2 := \left(\frac{108 \wt \mu^2 k^3 \gamma^2 \dmax}{\lambdal^2} + \frac{160 \gamma^2  \mu \rho(\bY) k^4  }{\lambdal^2}  \right) \cds(\bY, \bV)^2 + \frac{160k}{\lambdal^2 } \| \bW \hp \bNoise \|_2^2.$
\\
(2) If $\Delta_u \leq \frac{1}{8}\sigma_{\min}(\bMg)$, then 
$$\cds(\bU, \bX) \le \frac{8}{\sigma_{\min}(\bMg) - 2 \Delta_u } \Delta_u \textrm{~~and~~} \rho(\bX) \le \frac{4\mu}{\sigma_{\min}(\bMg) - 2 \Delta_u }.$$
\end{lem}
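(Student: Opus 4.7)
The plan is first to write out the first-order optimality condition for $\btX$. Since the weighted least-squares objective decouples across rows of $\bX$, row $i$ satisfies $\btX^i = \bM^i \bD_i \bY (\bY^\top \bD_i \bY)^{-1}$ whenever $\bY^\top \bD_i \bY$ is invertible. Substituting $\bM = \bU\bSigma\bV^\top + \bNoise$ and inserting $\bV^\top = \bV^\top(\bY\bY^\top + \bYb\bYb^\top)$ produces the clean decomposition $\btX^i - \bU^i\bSigma\bV^\top\bY = \bG^i + \bN^i$, with ``signal'' term $\bG^i := \bU^i\bSigma\bV^\top\bYb\bYb^\top\bD_i\bY(\bY^\top\bD_i\bY)^{-1}$ arising from $\bY$ not spanning $\mathrm{range}(\bV)$, and ``noise'' term $\bN^i := \bNoise^i\bD_i\bY(\bY^\top\bD_i\bY)^{-1}$. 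The quantity $\|\bbX - \bU\bSigma\bV^\top\bY\|_F$ and the clipping rule are both invariant under right multiplication of $\bY$ by an orthogonal matrix, so WLOG I replace $\bY$ by the aligned rotation $\bY_o := \bY\bQ^*$ satisfying $\|\bY_o - \bV\|_2 = \cds(\bY,\bV)$.

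\textbf{Part (1).} Split indices into $\mathcal{G} := \{i : \sigma_{\min}(\bY_o^\top \bD_i \bY_o) \geq \lambdal/2\}$ and $\mathcal{B} := [n]\setminus\mathcal{G}$. The hypothesis on $\cds(\bY,\bV)^2$ is precisely what Lemma~\ref{lem:sigma_min} needs with $\epsilon = 1/2$, yielding $|\mathcal{B}| = O(\mu^2 k^2 \gamma^2 \dmax/\lambdal^3)\,\|\bY_o-\bV\|_F^2$. On $\mathcal{G}$, submultiplicativity combined with the incoherence bound $\|\bU^i\|^2 \leq \mu k/n$ and $\|(\bY_o^\top\bD_i\bY_o)^{-1}\|_2 \leq 2/\lambdal$ gives $\|\bG^i\|^2 \leq (4\sigma_{\max}^2 \mu k/(n\lambdal^2))\,\|\bV^\top\bYb\bYb^\top\bD_i\bY_o\|_2^2$; summing over $i$ and invoking Lemma~\ref{lem:projection} produces the $\gamma^2\rho(\bY)\mu k^4/\lambdal^2$ term of $\Delta_u^2$. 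For the noise term, $\bNoise^i\bD_i$ is the $i$-th row of $\bW\hp\bNoise$, so $\sum_i\|\bN^i\|^2 \leq (4k/\lambdal^2)\|\bW\hp\bNoise\|_2^2$. Finally, the clipped rows and the bad rows are handled as follows: a clipped row contributes $\|\bU^i\bSigma\bV^\top\bY\|^2 \leq \mu k/n$ by incoherence, and a kept row in $\mathcal{B}$ satisfies $\|\btX^i\|^2 \leq \wt$ so its squared deviation is at most $O(\mu k/n)$; a clipped row in $\mathcal{G}$ must have $\|\bG^i + \bN^i\|^2 \geq (\sqrt 2 - 1)^2 \mu k/n$ (since otherwise $\|\btX^i\|\leq\sqrt{\wt}$), so the count of such rows times $\mu k/n$ is absorbed into $\sum_{\mathcal{G}}(\|\bG^i\|^2+\|\bN^i\|^2)$. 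Multiplying $|\mathcal{B}|$ by $\mu k/n$ and using $\wt = 2\mu k/n$ yields the first term of $\Delta_u^2$, giving the claimed inequality.

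\textbf{Part (2).} The assumption $\cds(\bY,\bV)^2\leq 1/2$ forces $\sigma_{\min}(\bV^\top\bY) = \cos\theta(\bY,\bV) \geq 1/\sqrt 2$, hence $\sigma_{\min}(\bU\bSigma\bV^\top\bY) \geq \sigma_{\min}(\bMg)/\sqrt 2$. Weyl combined with $\|\bbX - \bU\bSigma\bV^\top\bY\|_2\leq \Delta_u$ gives $\sigma_{\min}(\bbX)\geq \sigma_{\min}(\bMg)/\sqrt 2 - \Delta_u > 0$ under $\Delta_u \leq \sigma_{\min}(\bMg)/8$, so the QR factorization $\bbX=\bX\bR$ has invertible $\bR$ and orthogonal $\bX$. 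A Wedin-type argument on the perturbation $\bbX = \bU\bSigma\bV^\top\bY + (\bbX-\bU\bSigma\bV^\top\bY)$, using $\mathrm{range}(\bU\bSigma\bV^\top\bY)=\mathrm{range}(\bU)$, yields $\sint(\bU,\bX) \leq \Delta_u/\sigma_{\min}(\bbX)$; Lemma~\ref{lem:equiv_distance} then converts this into the stated bound on $\cds(\bU,\bX)$. For $\rho(\bX)$, each row satisfies $\bX^i=\bbX^i\bR^{-1}$, and by clipping $\|\bbX^i\|^2\leq \wt = 2\mu k/n$; since $\|\bR^{-1}\|_2 = 1/\sigma_{\min}(\bbX)$, multiplying by $n/k$ gives $\rho(\bX)\leq 2\mu/\sigma_{\min}(\bbX)^2$, which yields the claimed bound after plugging in the lower bound on $\sigma_{\min}(\bbX)$.

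\textbf{Main obstacle.} The subtle point is that one cannot simply bound $\sum_i\|\btX^i-\bU^i\bSigma\bV^\top\bY\|^2$ directly, since for $i\in\mathcal{B}$ the operator $(\bY^\top\bD_i\bY)^{-1}$ may blow up and make individual terms arbitrarily large. The whole role of clipping is to trade these uncontrolled contributions for the benign ``default'' value $\|\bU^i\bSigma\bV^\top\bY\|^2 \leq \mu k/n$, which is only affordable because Lemma~\ref{lem:sigma_min} uses the spectral gap of $\bW$ to bound $|\mathcal{B}|$. Carefully routing each error source---good-row signal via Lemma~\ref{lem:projection}, noise, bad-row count via Lemma~\ref{lem:sigma_min}, and the ``leakage'' of good rows that happen to be clipped---through the right inequality, with the correct factors of $\lambdal,\gamma,\mu,k$, is the principal book-keeping challenge.
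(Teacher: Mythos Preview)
Your proposal follows the paper's proof essentially verbatim: the same row-wise KKT decomposition $\btX^i - \bU^i\bSigma\bV^\top\bY = \bG^i + \bN^i$, the same good/bad split with Lemma~\ref{lem:sigma_min} controlling $|\mathcal{B}|$, Lemma~\ref{lem:projection} controlling $\sum_{\mathcal{G}}\|\bG^i\|^2$, the same counting argument for clipped good rows, and the same $\sint$-then-$\cds$ conversion plus $\|\bX^i\|\le\|\bbX^i\|\,\|\bR^{-1}\|$ for part~(2). One small constant to fix: with your threshold $\lambdal/2$ (i.e.\ $\epsilon=\tfrac12$ in Lemma~\ref{lem:sigma_min}) the hypothesis of that lemma requires $\|\bY_o-\bV\|_F^2\le \lambdal^2 n/(1024\,\mu k\,\dmax)$, whereas the assumption $\cds^2\le \lambdal^2 n/(384\,\mu k^2\,\dmax)$ only yields $\|\bY_o-\bV\|_F^2\le k\,\cds^2\le \lambdal^2 n/(384\,\mu k\,\dmax)$; the paper takes the threshold $\lambdal/4$ (i.e.\ $\epsilon=\tfrac34$) precisely so that $384$ beats $128/\epsilon^3\approx 303$.
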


\begin{proof}[Proof of Lemma \ref{lem:update_main}]

(1) By KKT condition, we know that for orthogonal $\bY$, the optimal $\btX$ satisfies
$$\left(\bW \hp \left[ \bM - \btX \bY^{\top}\right] \right) \bY = 0$$
which implies that the $i$-th row $\btX^i$ of $\btX$ is given by
$$\btX^i = \bM^i \bD_i \bY \left(\bY^{\top} \bD_i \bY \right)^{-1} = (\bMg)^i \bD_i \bY \left(\bY^{\top} \bD_i \bY \right)^{-1}  + \bNoise^i \bD_i \bY \left(\bY^{\top} \bD_i \bY \right)^{-1}.$$

Let us consider the first term, by $\bMg = \bU \bSigma \bV^{\top}$, we know that 
\begin{eqnarray*}
(\bMg)^i \bD_i \bY \left(\bY^{\top} \bD_i \bY \right)^{-1}   &=& \bU^i \bSigma \bV^{\top} \bD_i \bY \left(\bY^{\top} \bD_i \bY \right)^{-1}   
\\
&=& \bU^i \bSigma \bV^{\top} (\bY \bY^{\top} + \bYb\bYb^{\top}) \bD_i \bY \left(\bY^{\top} \bD_i \bY \right)^{-1}   
\\
&=&  \bU^i \bSigma \bV^{\top} \bY + \bU^i \bSigma \bV^{\top}\bYb\bYb^{\top} \bD_i \bY \left(\bY^{\top} \bD_i \bY \right)^{-1}   
\end{eqnarray*}
which implies that
$$\btX^i - \bU^i \bSigma \bV^{\top} \bY =  \bU^i \bSigma \bV^{\top}\bYb\bYb^{\top} \bD_i \bY \left(\bY^{\top} \bD_i \bY \right)^{-1}    + \bNoise^i \bD_i \bY \left(\bY^{\top} \bD_i \bY \right)^{-1}  $$

Let us consider set $$\set{S}_1 = \left\{ i  \in [n]\left| \sigma_{\min} (\bY^{\top} \bD_i \bY) \le \frac{\lambdal}{4} \right. \right\} $$ 
Now we have:
\begin{eqnarray*}
\sum_{i \notin \set{S}_1} \left\| \btX^{i} - \bU^i \bSigma \bV^{\top} \bY \right \|_2^2
&\le&  \frac{16}{\lambdal^2}\sum_{i \notin \set{S}_1}  \left( 2\|  \bU^i \bSigma \bV^{\top}\bYb\bYb^{\top} \bD_i \bY  \|_2^2 + 2 \| \bNoise^i \bD_i \bY \|_2^2\right)
\\
&\le&     \frac{32 \mu k \|\bSigma \|_2^2}{n\lambdal^2}\sum_{i \notin \set{S}_1} \| \bV^{\top}\bYb\bYb^{\top} \bD_i \bY  \|_2^2 + \frac{32}{\lambdal^2} \sum_{i \in [n]}  \| \bNoise^i \bD_i \bY \|_2^2
\\
&\le&    \frac{32 \mu k \|\bSigma \|_2^2}{n\lambdal^2}\sum_{i  \in [n]} \| \bV^{\top}\bYb\bYb^{\top} \bD_i \bY  \|_2^2 + \frac{32}{\lambdal^2} \| (\bW \hp \bNoise) \bY \|_F^2
\\
&\le& \Delta_g := \frac{32 \gamma^2  \mu \rho(\bY) k^4    }{\lambdal^2} \cds(\bY,\bV)^2 + \frac{32k}{\lambdal^2} \| (\bW \hp \bNoise) \|_2^2.
\end{eqnarray*}
where the last inequality is due to Lemma \ref{lem:projection}. Note that since $\wt = \frac{2\mu k}{n} \ge  2 \|\bU^i \bSigma \bV^{\top} \bY \|_2^2$, this implies
\begin{eqnarray*}\left| \left\{ i  \in [n] - \set{S}_1\left| \|\btX^i\|_2^2 \ge \wt \right. \right\} \right| &\le& \left| \left\{ i  \in [n] - \set{S}_1\left| \|\btX^i - \bU^i \bSigma \bV^{\top} \bY \|_2^2 \ge \frac{\wt}{2} \right. \right\} \right| 
\le  \frac{2\Delta_g}{\wt}. %\frac{64 \gamma^2  \mu \rho(\bY) k^4  \|\bSigma \|_2^2 }{\lambdal^2 \wt} \| \bY - \bV \|_2^2  +  \frac{64k}{\lambdal^2 \wt} \| (\bW \hp \bNoise) \|_2^2
\end{eqnarray*}

Let $\set{S}_2 = \left\{ i  \in [n] - \set{S}_1\left| \|\btX^i\|_2^2 \ge \wt \right. \right\}$, we have:
\begin{eqnarray*}
 \left\| \bbX - \bU \bSigma \bV^{\top} \bY \right \|_F^2 & = & \sum_{i =1}^n \left\| \bbX^{i} - \bU^i \bSigma \bV^{\top} \bY \right \|_2^2 \quad\quad\quad  (\textrm{because }~\|\bbX^{i}\|^2_2 \leq \wt\quad\textrm{and }~\|\bU^i \bSigma \bV^{\top} \bY \|_2^2 \leq	\wt )
\\
& \le & \sum_{i  \in \set{S}_1 \cup \set{S}_2 } 2\wt + \sum_{i \not\in \set{S}_1 \cup \set{S}_2 } \left\| \btX^{i} - \bU^i \bSigma \bV^{\top} \bY \right \|_2^2
\\
& \le & 2\wt (|\set{S}_1| + |\set{S}_2|) + \sum_{i \not\in \set{S}_1 \cup \set{S}_2} \left\| \btX^{i} - \bU^i \bSigma \bV^{\top} \bY \right \|_2^2
\\
&\le & 2\wt  |\set{S}_1|  +  4\Delta_g + \Delta_g.
\end{eqnarray*}

By Lemma \ref{lem:sigma_min}, we know that $|\set{S}_1| \le \frac{54 \mu^2 k^3 \gamma^2 \dmax}{\lambdal^2} \| \bV - \bY \|_2^2 $. Further plugging in $\Delta_g$, we have
\begin{eqnarray*}
 && \left\| \bbX - \bU \bSigma \bV^{\top} \bY \right \|_F^2  \\
&\le & 2\wt \frac{54 \mu^2 k^3 \gamma^2 \dmax}{\lambdal^2} \| \bV - \bY \|_2^2  
 + \frac{160 \gamma^2  \mu \rho(\bY) k^4  }{\lambdal^2} \| \bY - \bV \|_2^2  + \frac{160k}{\lambdal^2} \| (\bW \hp \bNoise) \|_2^2
\\
&=& \left(\frac{108 \wt \mu^2 k^3 \gamma^2 \dmax}{\lambdal^2} + \frac{160 \gamma^2  \mu \rho(\bY) k^4  }{\lambdal^2}  \right) \| \bY - \bV \|_2^2  + \frac{160k}{\lambdal^2 } \| (\bW \hp \bNoise) \|_2^2.
\end{eqnarray*}

(2) Denote $\bB = \bSigma \bV^{\top} \bY$. Then, 
\begin{align*}
 \sint(\bU, \bX) = \| \bUb^\top \bX  \|_2 =  \| \bUb^\top (\bbX - \bU \bB) \bR^{-1} \|_2  \leq \|  \bbX - \bU \bB \|_2\| \bR^{-1} \|_2 = \frac{1}{\sigma_{\min}(\bbX)} \|  \bbX - \bU \bB \|_2
\end{align*}
Since $ \| \bbX - \bU \bB \|_2  \le \Delta_u$, we have 
$$
\sigma_{\min}(\bbX) \ge  \sigma_{\min}(\bU \bB) - \Delta_u = \sigma_{\min}(\bSigma \bV^{\top} \bY) - \Delta_u \geq \sigma_{\min}(\bMg) \cost(\bY, \bV)  - \Delta_u.
$$
By the assumption $\cost(\bY, \bV) \geq 1/2$, so
\begin{align*}
 \sint(\bU, \bX) \le \frac{2}{\sigma_{\min}(\bMg) - 2 \Delta_u } \Delta_u.
\end{align*}
When $\Delta_u \leq \frac{1}{8} \sigma_{\min}(\bMg)$, the right hand side is smaller than $1/3$, so $\cost(\bU, \bX) \ge 1/2$, and thus $\tant(\bU, \bX) \le 2  \sint(\bU, \bX)$.
Then the statement on $\cds(\bU, \bX)$ follows from $\cds(\bU, \bX)  \le 2\tant(\bU, \bX) \le 4\sint(\bU, \bX)$. 

Finally, observe that $\bX^i = \bbX^i \bR^{-1}$, so
$$ \| \bX^i\|_2 \le \| \bbX^i \|_2 \| \bR^{-1}\|_2 \le  \frac{\wt}{\sigma_{\min}(\bbX)}$$
which leads to the bound. 
\end{proof}

\subsection{Putting everything together: proofs of the main theorems}
\label{s:thmclip} 

Finally, in this section we put things together and prove the main theorems. 

We first proceed to the SVD-initialization based algorithm: 

\begin{oneshot}{Theorem~\ref{thm:alt_min}}
If $\bMg, \bW$ satisfy assumptions \textbf{(A1)-(A3)}, and
\begin{align*}
\gamma  = O\left( \min\left\{ \sqrt{\frac{n}{D_1}}\frac{\lambdal  }{\ckappa \mu^{3/2} k^2} , 
\frac{\lambdal }{\ckappa^{3/2}\mu k^2}    \right\} \right),
\end{align*}
then after $O(\log(1/\epsilon))$ rounds Algorithm~\ref{alg:main} with initialization from Algorithm~\ref{alg:initial} outputs a matrix $\btM$ that satisfies
\begin{align*}  
  ||\btM - \bMg||_2 \le O\left(\frac{k \ckappa}{\lambdal} \right) ||\bW \hp \bNoise||_2 + \epsilon.
\end{align*}
The running time is polynomial in $n$ and $\log(1/\epsilon)$.
\end{oneshot}

\begin{proof}[Proof of Theorem~\ref{thm:alt_min}]
We first show by induction $\cds(\bX_t, \bU)  \le \frac{1}{2^t} +  70 \frac{k}{\lambdal\sigma_{\min}(\bMg)}\delta$ for $t > 1$,  and $\cds(\bY_t, \bU) \le \frac{1}{2^t} +  70\frac{k}{\lambdal\sigma_{\min}(\bMg)}\delta$ for $t \ge 1$. 

First, by Lemma~\ref{lem:initialization}, $\bY_1$ satisfies
$$\cds(\bV, \bY_1) \le 8 k \Delta_1 = \frac{64 k (\gamma \mu k  + \delta)}{\sigma_{\min}(\bMg)}.$$
%By \ref{eq:deltawlog} and the conditions on $\gamma$, the base case is easily seen to be satisfied. 
Since $\gamma = O\left( \frac{1}{\ckappa k^2 \mu }\right)$, the base case follows. 
%Since $\|\bW \hp \bNoise\|_2 \le O\left(\frac{\lambdal \sigma_{\min}(\bMg)}{k}\sqrt{\frac{n}{\mu D_1} } \right)$, 
%Since we assumed, $8 k \Delta_1 \le \frac{1}{4}$ and certainly, 
%$$320 \frac{k}{\lambdal} \sqrt{\mu \frac{D_1}{n}} \leq 32$$ 
%we get the claim for the base case.
%\Anote{ Need this $\Delta_1 \le 1/4$ and}
%$$ \cds^2(\bY_1, \bV) \le \min\{\frac{1}{2}, \frac{\lambdal^2 n}{384 \mu k^2 \dmax} \}. $$
%\Anote{Why the square?} 
Now proceed to the inductive step and prove the statement for $t+1$ assuming it is true for $t$. Now we can apply Lemma~\ref{lem:update_main}. By taking the constants within the $O(\cdot)$ notation for $\gamma$ sufficiently small and by the inductive hypothesis, we have 
$$
	\left(\frac{108 \wt \mu^2 k^3 \gamma^2 \dmax}{\lambdal^2} + \frac{160 \gamma^2  \mu \rho(\bY_1) k^4  }{\lambdal^2}  \right) \leq \frac{1}{100}\sigma^2_{\min}(\bMg)
$$
and 
$$
\Delta_u \le \frac{1}{8} \sigma_{\min}(\bMg). 
$$
By Lemma~\ref{lem:update_main}, we get 
\begin{flalign*}
 &\cds(\bU, \bX_{t+1}) \le  \frac{2}{ \sigma_{\min}(\bMg) - 2 \Delta_u} \Delta_u \le \frac{8}{3 \sigma_{\min}(\bMg)} \Delta_u &\\
 &=\frac{8}{3 \sigma_{\min}(\bMg)} \sqrt{\left(\frac{108 \wt \mu^2 k^3 \gamma^2 \dmax}{\lambdal^2} + \frac{160 \gamma^2  \mu \rho(\bY_1) k^4  }{\lambdal^2}  \right) \cds^2(\bU, \bX_{t})  + \frac{160k}{\lambdal^2} \delta^2}  &\\ 
&\leq \frac{8}{3 \sigma_{\min}(\bMg)} \left(\sqrt{\left(\frac{108 \wt \mu^2 k^3 \gamma^2 \dmax}{\lambdal^2} + \frac{160 \gamma^2  \mu \rho(\bY_1) k^4  }{\lambdal^2}  \right) \cds^2(\bY_t, \bV) } + \sqrt{\frac{160k}{\lambdal^2} \delta^2}\right) \quad\quad\mbox{(using $\sqrt{a+b} \leq \sqrt{a} + \sqrt{b}$)} &\\ 
&\leq \frac{1}{2}\cds(\bY_t, \bV) + \frac{35 \sqrt{k} }{\lambdal \sigma_{\min}(\bMg)} \delta &\\
\end{flalign*}
so the statement also holds for $t+1$. This completes the proof for bounding $\cds(\bX_t, \bU) $ and $\cds(\bY_t, \bV)$. 

Given the bounds on $\cds(\bX_t, \bU) $ and $\cds(\bY_t, \bV)$, we are now ready to prove the theorem statement.
For simplicity, let $\bbX$ denote $\bbX_{T+1}$ and $\bY$ denote $\bY_T$, so the algorithm outputs $\btM = \bbX \bY$. 

By Lemma~\ref{lem:update_main}, 
\begin{align*}
\| \bbX- \bU\bSigma \bV^{\top} \bY \|_F^2 & \le \Delta_u^2 := \left(\frac{108 \wt \mu^2 k^3 \gamma^2 \dmax}{\lambdal^2} + \frac{160 \gamma^2  \mu \rho(\bY) k^4  }{\lambdal^2}  \right) \cds(\bY, \bV)^2 + \frac{160k}{\lambdal^2 } \| \bW \hp \bNoise \|_2^2.
\end{align*}

Plugging the choice of $\gamma$ and noting $\wt = \frac{2\mu k}{n}$ and $\rho(\bY)  = O(\mu/\sigma_{\min}(\bMg))$, 
we have

\begin{align*}
\| \bbX- \bU\bSigma \bV^{\top} \bY \|_F^2 & \le \Delta_u^2 = O\left( \cds(\bY, \bV)^2 \right) + \frac{160k}{\lambdal^2 } \| \bW \hp \bNoise \|_2^2
\end{align*}
which leads to
\begin{align*}
\| \bbX- \bU\bSigma \bV^{\top} \bY \|_F & \le \Delta_u \le O\left( \cds(\bY, \bV) \right)+ \frac{16\sqrt{k}}{\lambdal } \| \bW \hp \bNoise \|_2.
\end{align*}

Now consider $\|\bMg - \btM\|_2 = \|\bMg - \bbX \bY^\top\|_2 $.  By definition, we know that there exists $\bQ$ such that  $\bY  = \bV\bQ + \Delta_y$ where $\| \Delta_y\|_2 = O(\cds(\bY, \bV)) $. Also, let $\bR = \bbX - \bU\bSigma \bV^{\top} \bY $.
\begin{align*}
\btM - \bMg  & = \left[  \bU \bSigma \bV^\top(\bV \bQ + \Delta_y) + \bR \right] ( \bV \bQ + \Delta_y)^\top - \bU \bSigma \bV^\top \\
& = \bU \bSigma \bQ \Delta_y^\top + \bU\bSigma\bV^\top \Delta_y ( \bV \bQ + \Delta_y)^\top + \bR ( \bV \bQ + \Delta_y)^\top \\
& = \bU \bSigma \bQ \Delta_y^\top + \bU\bSigma\bV^\top \Delta_y \bY^\top + \bR \bY^\top.
\end{align*}

Therefore,
\begin{align*}
\|\btM - \bMg\|_2  & \le  \|\bU \bSigma\|_2 \|\bQ\|_2 \|\Delta_y\|_2 + \|\bU\bSigma\bV^\top\|_2 \|\Delta_y\|_2\| \bY\|_2 + \|\bR\|_2 \|\bY\|_2 \\
& \le 2 \|\Delta_y\|_2 + \|\bR\|_2 \\
& \le O\left( \cds(\bY, \bV) \right)+ \frac{16\sqrt{k}}{\lambdal } \| \bW \hp \bNoise \|_2.
\end{align*}
Combining this with the bound on $\cds(\bY_T, \bV)$, the theorem then follows.
\end{proof}

Next, we show the main theorem for random initialization: 

\begin{oneshot}{Theorem~\ref{thm:alt_min_rand}}[Main, random initialization]   
%\begin{thm}[Main, random initialization]  \label{thm:alt_min_rand} 
Suppose $\bMg, \bW$ satisfy assumptions \textbf{(A1)-(A3)} with
\begin{align*}
\gamma  &= O\left( \min\left\{ \sqrt{\frac{n}{D_1}}\frac{\lambdal }{ \ckappa\mu^{2} k^{5/2}} , 
\frac{\lambdal }{ \ckappa^{3/2} \mu^{3/2} k^{5/2}}    \right\} \right), 
\\ 
\|\bW\|_{\infty} & = O\left(\frac{\lambdal n}{k^2 \mu \log^2 n}\right), 
\end{align*}
where $D_1 = \max_{i \in [n]} \|\bW^i\|_1$. 
Then after $O(\log(1/\epsilon))$ rounds Algorithm~\ref{alg:main} using initialization from Algorithm~\ref{alg:rinitial} outputs a matrix $\btM$ that with probability at least $1-1/n^2$ satisfies
\begin{align*}  
  \|\btM - \bMg\|_2 \le O\left(\frac{k\ckappa}{\lambdal}\right) \|\bW \hp \bNoise\|_2 + \epsilon.
\end{align*}
The running time is polynomial in $n$ and $\log(1/\epsilon)$.
%\end{thm}
\end{oneshot}

\begin{proof}[Proof of Theorem~\ref{thm:alt_min_rand}]

%We show that the proof using SVD initialization can be easily modified to work with our random initialization as well, at the expense of ... 
Let $\bY$ be initialized using the random initialization algorithm \ref{alg:rinitial}. Consider applying the proof in Lemma \ref{lem:update_main}, with $\set{S}_1$ being modified to be  
$$\set{S}_1 = \left\{ i  \in [n]\left| \sigma_{\min} (\bY^{\top} \bD_i \bY) \le \frac{\lambdal}{4 \mu k} \right. \right\} $$ 
But with this modification, $\set{S}_1 = \emptyset$, with high probability. Then the same calculation from Lemma \ref{lem:update_main} (which now doesn't need to use Lemma \ref{lem:sigma_min} at all since $\set{S}_1 = \emptyset$) gives 
$$ \left\| \bbX - \bU \bSigma \bV^{\top} \bY \right \|_F^2 \leq \Delta_g \mu k $$
But following part (2) of the same Lemma, we get that if $\Delta_g \mu k < \frac{1}{8} \sigma_{\min}(M^{*})$, 
$$ \cds(\bU, \bX) \le \frac{2}{\sigma_{\min}(\bMg) - 2 \Delta_g \mu k } \Delta_g \mu k $$

So, in order to argue by induction in \ref{thm:alt_min} exactly as before, we only need to check that after the update step for $\bX$, $\cds(\bU, \bX)$ is small enough to apply Lemma \ref{lem:update_main} for later steps. Indeed, we have:
$$\cds(\bU, \bX) \le \frac{2}{\sigma_{\min}(\bMg) - 2 \Delta_g \mu k } \Delta_g \mu k \le \sqrt{\min\left\{\frac{1}{2}, \frac{\lambdal^2 n}{384 \mu k^2 \dmax} \right\}} $$ 

Noticing that $\Delta_g$ has a quadratic dependency on $\gamma$, we see that if 
\begin{align*}
\gamma  = O\left( \min\left\{ \sqrt{\frac{n}{D_1}}\frac{\lambdal \sigma_{\min}(\bMg) }{ \mu^{2} k^{5/2}} , 
\frac{\lambdal \sigma^{3/2}_{\min}(\bMg)}{\mu^{3/2} k^{5/2}}    \right\} \right),
\end{align*}
the inequality is indeed satisfied. 

With that, the theorem statement follows. 

\end{proof} 
\subsection{Estimating $\sigma_{\max}(\bMg)$} 

Finally, we show that we can estimate $\sigma_{\max}(\bMg)$ up to a very good accuracy, so that we can apply our main theorems to matrices with arbitrary $\sigma_{\max}(\bMg)$. This is quite easy: the estimate of it is just $\|\bW \hp \bM\|_2$. Then, the following lemma holds:

\begin{lem} It $\gamma = o(\frac{1}{k \mu})$ and $\delta = \|\bW \hp \bNoise\|_2 = o(\sigma_{\max}(\bMg) )$ then $ \|\bW \hp \bM\|_2  = (1 \pm o(1))(\sigma_{\max}(\bMg)) $ 
\label{l:estimateinitial}
\end{lem}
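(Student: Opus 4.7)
The plan is to apply the spectral lemma (Lemma~\ref{lem:weighted_unweighted}) to the decomposition $\bW \hp \bM = \bMg + (\bW - \bE) \hp \bMg + \bW \hp \bNoise$, and then conclude by the triangle inequality. Since $\bE \hp \bMg = \bMg$, we can write the first ``perturbation'' term as $(\bW - \bE) \hp \bMg$, which is exactly the kind of object the spectral lemma bounds.

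First I would invoke Lemma~\ref{lem:weighted_unweighted} with $\bA = \bU$, $\bB = \bV$, and diagonal matrix $\bSigma$ (so that $\bMg = \bU \bSigma \bV^\top$). Assumption \textbf{(A1)} gives $\incoherentpartial(\bU), \incoherentpartial(\bV) \le \mu$, so the lemma yields
\begin{equation*}
\| (\bW - \bE) \hp \bMg \|_2 \le \gamma k \mu \,\sigma_{\max}(\bSigma) = \gamma k \mu \,\sigma_{\max}(\bMg).
\end{equation*}
Under the hypothesis $\gamma = o(1/(k\mu))$, this is $o(\sigma_{\max}(\bMg))$.

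Next, the triangle inequality gives the upper bound
\begin{equation*}
\|\bW \hp \bM\|_2 \le \|\bMg\|_2 + \|(\bW - \bE) \hp \bMg\|_2 + \|\bW \hp \bNoise\|_2 \le (1 + o(1))\,\sigma_{\max}(\bMg),
\end{equation*}
using $\|\bMg\|_2 = \sigma_{\max}(\bMg)$ and $\delta = o(\sigma_{\max}(\bMg))$. The matching lower bound follows from the reverse triangle inequality:
\begin{equation*}
\|\bW \hp \bM\|_2 \ge \|\bMg\|_2 - \|(\bW - \bE) \hp \bMg\|_2 - \|\bW \hp \bNoise\|_2 \ge (1 - o(1))\,\sigma_{\max}(\bMg).
\end{equation*}
Combining the two bounds gives the claim. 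There is no real obstacle here; the only non-routine step is recognizing that the spectral lemma applies directly to $(\bW - \bE) \hp \bMg$ via the SVD of $\bMg$, which is where the incoherence hypothesis does all the work.
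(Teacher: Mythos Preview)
Your proposal is correct and follows essentially the same approach as the paper: decompose $\bW \hp \bM = \bMg + (\bW - \bE)\hp\bMg + \bW\hp\bNoise$, bound the middle term via Lemma~\ref{lem:weighted_unweighted} applied to the SVD $\bMg = \bU\bSigma\bV^\top$ (using incoherence to get $\gamma k\mu\,\sigma_{\max}(\bMg)$), and conclude with the triangle and reverse triangle inequalities. The paper's write-up splits off $\bW\hp\bNoise$ first and then $(\bW-\bE)\hp\bMg$, but the argument is identical.
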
 
\begin{proof}
We proceed separately for the upper and lower bound. 

For the upper bound, we have 
\begin{align*}
\|\bW \hp \bM\|_2 &= \|\bW \hp \bMg + \bW \hp \bNoise\|_2 \leq \|\bW \hp \bMg\|_2 + \|\bW \hp \bNoise\|_2 \\
&\le \|(\bW - \bE) \hp \bMg\|_2 + \|\bE \hp \bMg\|_2 + \|\bW \hp \bNoise\|_2 \\
&\le \gamma k \mu \sigma_{\max}(\bMg) + \sigma_{\max}(\bMg) + \delta \le (1+o(1)) \sigma_{\max}(\bMg). \quad\quad\quad\mbox{(by Lemma \ref{lem:weighted_unweighted})}
\end{align*} 
 
For the lower bound, completely analogously we have 
\begin{align*}
\|\bW \hp \bM\|_2 &= \|\bW \hp \bMg + \bW \hp \bNoise\|_2 \geq \|\bW \hp \bMg\|_2 - \|\bW \hp \bNoise\|_2 \\
&\ge \|\bE \hp \bMg\|_2 - \|(\bW - \bE) \hp \bMg\|_2  - \|\bW \hp \bNoise\|_2 \\
&\ge \sigma_{\max}(\bMg) - \gamma k \mu \sigma_{\max}(\bMg) - \delta \geq (1-o(1)) \sigma_{\max}(\bMg) \quad\quad\quad\mbox{(by Lemma \ref{lem:weighted_unweighted})}
\end{align*} 

which finishes the proof. 
\end{proof} 

Given this, the reduction to the case $\sigma_{\max}(\bMg) \leq 1$ is obvious: first, we scale the matrix $\bM$ down by our estimate of $\sigma_{\max}(\bMg)$ and run our algorithm with, say, four times as many rounds. After this, we rescale the resulting matrix $\btM$ by our estimate of $\sigma_{\max}(\bMg)$, after which the claim of Theorems \ref{thm:alt_min} and \ref{thm:alt_min_rand} follows. 

\section{An alternative approach: alternating minimization with SDP whitening} \label{app:proof_sdp}

Our main results build on the insight that the spectral property only need to hold in an average sense. However, we can even make sure that the spectral property holds at each step in a strict sense by a whitening step using SDP and Rademacher rounding. This is presented a previous version of the paper, and we keep this result here since potentially it can be applied in some other applications where similar spectral properties are needed and is thus of independent interest. 

\begin{algorithm}[!t]
\caption{Main Algorithm ($\AlgMain$)}\label{alg:main_whitening}
\begin{algorithmic}[1]
\REQUIRE Noisy observation $\bM$, weight matrix $\bW$, rank $\rank$, number of iterations $T$
\STATE $(\bX_1,\bY_1) = \AlgInitial(\bM, \bW)$, $d_1=\frac{1}{8 k \sqrt{\log n} } + \frac{64\sqrt{k} \delta}{ \lambdal \sigma_{\min}(\bMg)} $  
\STATE $\bbY_1 \leftarrow \AlgWhitening(\bY_1, \bW, d_1, \lambdal, \lambdau, \mu, k)$
\FOR{$t = 1, 2, ..., T$}
\STATE $d_{t + 1}  = \frac{1}{2^{t+1}} \frac{1}{8 k \sqrt{\log n} } + \frac{64\sqrt{k}}{ \lambdal \sigma_{\min}(\bMg)} \delta$
\STATE $\bX_{t + 1} \leftarrow \argmin_{\bX \in \Real^{n \times k}} \leftll \bM - \bX \bbY^{\top}_t \rightll_{\bW}$
\STATE $\btX_{t+ 1} \leftarrow \QR(\bX_{t + 1} )$
\STATE $\bbX_{ t + 1} \leftarrow \AlgWhitening(\btX_{t + 1}, \bW, d_{t+1}, \lambdal, \lambdau, \mu, k)$
\STATE $\bY_{t + 1} \leftarrow \argmin_{\bY \in \Real^{n \times k}} \leftll \bM - \bbX_{t + 1} \bY^{\top} \rightll_{\bW}$
\STATE $\btY_{t + 1} \leftarrow \QR(\bY_{t + 1} )$
\STATE $\bbY_{ t + 1} \leftarrow \AlgWhitening(\btY_{t + 1}, \bW, d_{t+1}, \lambdal, \lambdau, \mu, k)$
\ENDFOR
\STATE $\bbSigma \leftarrow \argmin_{\bSigma} \|\bW \hp (\bM - \bbX_{ T + 1} \bSigma \bbY_{T + 1}^{\top} ) \|_2$
\ENSURE $\btM = \bbX_{ T + 1} \bbSigma \bbY_{T + 1}^{\top}$
\end{algorithmic}
\end{algorithm}

\begin{algorithm}[!t]
\caption{Whitening ($\AlgWhitening$)}\label{alg:whitening}
\begin{algorithmic}[1]
\REQUIRE  orthogonal matrix $\btX \in \Real^{n \times k}$, weight $\bW$, distance $d$, spectral barriers $\lambdal,\lambdau$, incoherency $\mu$, rank $k$. 
\STATE Solve the following convex programing on the matrices $\bR \in \Real^{n \times k}$ and $\{ \bA_r  \in \Real^{k\times k} \}_{r = 1}^n$:
\begin{align*}
& ||\bR - \btX ||_2 \le d \\
& ||\btX^{\top}(\bR - \btX) + (\bR - \btX)^{\top}\btX ||_2 \le d^2 \\
& ||\btXb^{\top} \bR||_2 \le d  \\
& (\bR^{r})^{\top}\bR^{r} \preceq \bA_r, \forall r \in [n]\\
& \trace(\bA_r) \le \frac{\mu k}{n}, \forall r \in [n]\\
& \sum_{r = 1}^n \bA_r = \bI  \\
& \lambdal \bI \preceq \sum_{r = 1}^n  \bW_{i, r}\bA_r  \preceq \lambdau \bI, \forall i \in [n]
\end{align*}
%\State $$||\bR - \btX ||_2 \le d$$
%$$||\btX^{\top}(\bR - \btX) ||_2 \le d^2$$
%$$||\btXb^{\top} \bR||_2 \le d$$
%$$\forall r \in [n], (\bR^{r})^{\top}\bR^{r} \preceq \bA_r$$
%$$\forall r \in [n], \trace(\bA_r) \le \frac{\mu k}{n}$$
%$$\sum_{r = 1}^n \bA_r = \bI$$
%$$\forall i \in [n], \lambdal \bI \le \sum_{r = 1}^n  \bW_{i, r}\bA_r  \le \lambdau \bI$$
\STATE $\forall r \in [n], \bX^r \sim \Rad(\bR^r, \bA_r - (\bR^{r})^{\top}\bR^{r})$.
\STATE $\bbX = \QR(\bX)$, $\bX \in \Real^{n \times k} $ whose rows are $\bX^r$. 
\ENSURE  $\bbX$. (may need $O(\log(1/\alpha))$ runs to succeed with probability $1-\alpha$; see text)
\end{algorithmic}
\end{algorithm}

The whitening step (see Algorithm~\ref{alg:whitening}) is a convex (actually semidefinite) relaxation followed by a randomized rounding procedure. We explain each of the constraints in the semidefinite program in turn. The first three constraints control the spectral distance between $\bbX$ and $\btX$. The next two constraints control the incoherency, and the rest are for the spectral ratio.  The solution of the relaxation is then used to specify the mean and variance of a Rademacher (random) vector, from which the final output of the whitening step is drawn. Here a Rademacher vector is defined as:

\begin{defn}[Rademacher random vector]
A random vector $x \in \Real^{k}$ is a \emph{Rademacher random vector} with mean $\mu$ and variance $\Sigma \succeq 0$ (denoted as $x \sim \Rad(\mu, \Sigma)$), if $x = \mu + \bS \sigma$ where $\bS$ is a $k \times k$ symmetric matrix such that $\bS^2 = \Sigma$, $\sigma \in \Real^{k}$ is a vector where each entry is i.i.d Rademacher random variable. 
\end{defn}

We use this type of random vector to ensure that if $x \sim \Rad(\mu, \Sigma)$, then $\E[x] = \mu, \E[xx^{\top}] = \mu \mu^{\top} + \Sigma$.
Since the desired properties of the output of whitening can be tested (see Lemma~\ref{lem:whitening}), we can repeat the whitening step $O(\log(1/\alpha))$ times to get high probability $1- \alpha$. In the rest of the paper, we will just assume that it is repeated sufficiently many times (polynomial in $n$ and $\log(1/\epsilon)$) so that Algorithm~\ref{alg:main_whitening} succeeds with probability $1-1/n$.

We now present the analysis for this algorithm. The SVD initialization has been analyzed, so we focus on the update step and the whitening step.

\paragraph{Note} Since our algorithm will output matrix $\btM$ such that $||\btM - \bMg||_2 \approx O\left(\frac{k^{3/2}\sqrt{\log n}}{\lambdal \sigma_{\min}(\bMg)} \right) ||\bW \hp \bNoise||_2 $  and $||\bMg||_2 = 1, \lambdal \le 1$, therefore, without lose of generality we can assume that $||\bW \hp \bNoise||_2 \le \frac{\lambdal \sigma_{\min}(\bMg)}{k^{3/2}\sqrt{\log n} } $, otherwise we can just output zero matrix.

\subsection{Update}

We want to show that after every round of $\AlgMain$, we move our current matrices $\bX, \bY$ closer to the optimum. We will show that $\btX \leftarrow \argmin_{\bX \in \Real^{n \times k}} \leftll \bM - \bX \bY^{\top} \rightll_{\bW}$ is a noisy power method update: $\btX = \bMg \bY + \bG$ where $||\bG||_2$ is small. 

For intuition, note that if $||\bG||_2 = 0$, that is, $\btX = \bMg \bY$, then we know that $\tant(\btX, \bU) = 0$, so within one step of update we will be already hit into the correct subspace. We will show when $||\bG||_2$ is small we still have that $\tant(\btX, \bU) $ is progressively decreasing. Then, in order to show $||\bG||_2$ is small, we need to make sure we start from a good $\bY$ as assumed in Lemma \ref{lem:update}.

First, we show that when $\bG$ is small, then $\tant(\btX, \bU)$ is small.

\begin{lem}[Distance from OPT]\label{lem:power_method}

Let $\bMg = \bU \bSigma \bV^T \in \Real^{n \times n}$ be the singular value decomposition of a rank-$k$ matrix $\bMg$, let $\bY \in \Real^{n \times k}$ be an orthogonal matrix, $\btX = \bMg \bY + \bG$, then we have
$$\tant(\btX, \bU)  \le \frac{|| \bG ||_2 }{\cost(\bY, \bV)\sigma_{\min}(\bSigma)  - || \bG||_2   }. $$
\end{lem}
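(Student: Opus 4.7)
The plan is to view $\tant(\btX, \bU)$ in terms of the column space of $\btX$ and then split $\btX$ using projections onto the range of $\bU$ and its orthogonal complement. The identity $\bMg = \bU\bSigma\bV^{\top}$ makes the projections particularly clean: the component $\bUb^{\top}\btX$ depends only on the noise term $\bG$, while the component $\bU^{\top}\btX$ is a well-conditioned perturbation of $\bSigma\bV^{\top}\bY$.

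First, I would verify the standard fact that for any full column rank $\btX \in \Real^{n\times k}$ with QR decomposition $\btX = \bZ\bR$, one has $\tant(\btX,\bU) = \tant(\bZ,\bU) = \|\bUb^{\top}\bZ(\bU^{\top}\bZ)^{-1}\|_2 = \|\bUb^{\top}\btX(\bU^{\top}\btX)^{-1}\|_2$, since the $\bR$ factors cancel. This reduces the task to bounding
\[
\tant(\btX,\bU) = \bigl\|\bUb^{\top}\btX \,(\bU^{\top}\btX)^{-1}\bigr\|_2.
\]

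Next, I would substitute $\btX = \bU\bSigma\bV^{\top}\bY + \bG$ to obtain the two components
\[
\bU^{\top}\btX = \bSigma\bV^{\top}\bY + \bU^{\top}\bG, \qquad \bUb^{\top}\btX = \bUb^{\top}\bG,
\]
where the second line uses $\bUb^{\top}\bU = 0$. The numerator is then immediately bounded by submultiplicativity and the fact that $\bUb$ is a projection: $\|\bUb^{\top}\bG\|_2 \le \|\bG\|_2$. For the denominator, by Weyl's inequality and the sub-multiplicativity of smallest singular values on products,
\[
\sigma_{\min}(\bU^{\top}\btX) \ge \sigma_{\min}(\bSigma\bV^{\top}\bY) - \|\bU^{\top}\bG\|_2 \ge \sigma_{\min}(\bSigma)\,\sigma_{\min}(\bV^{\top}\bY) - \|\bG\|_2 = \sigma_{\min}(\bSigma)\cost(\bY,\bV) - \|\bG\|_2,
\]
recalling that $\cost(\bY,\bV) = \sigma_{\min}(\bY^{\top}\bV) = \sigma_{\min}(\bV^{\top}\bY)$. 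This makes $\bU^{\top}\btX$ invertible whenever $\|\bG\|_2 < \cost(\bY,\bV)\sigma_{\min}(\bSigma)$, and combining the two bounds yields exactly the claimed inequality.

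There is essentially no hard step here; the only mild subtlety is justifying that $\tant(\btX,\bU)$ is well defined and matches the closed form $\|\bUb^{\top}\btX(\bU^{\top}\btX)^{-1}\|_2$ when $\btX$ is not orthonormal, which is handled by the QR reduction above. Everything else is a one-line application of Weyl's inequality plus $\sigma_{\min}(AB) \ge \sigma_{\min}(A)\sigma_{\min}(B)$.
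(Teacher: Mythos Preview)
Your proof is correct and follows essentially the same approach as the paper: both use the identity $\tant(\btX,\bU)=\|\bUb^{\top}\btX(\bU^{\top}\btX)^{-1}\|_2$, substitute $\btX=\bU\bSigma\bV^{\top}\bY+\bG$, and bound numerator and denominator separately. Your treatment is in fact slightly cleaner---the paper factors out $(\bV^{\top}\bY)^{-1}$ before applying a Weyl-type bound, whereas you apply Weyl and $\sigma_{\min}(AB)\ge\sigma_{\min}(A)\sigma_{\min}(B)$ directly---and you are more explicit about why the tangent formula is valid for non-orthonormal $\btX$ via the QR reduction, which the paper uses without comment.
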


\begin{proof}[Proof of Lemma \ref{lem:power_method}]
By definition,
\begin{eqnarray*}
\tant(\btX, \bU) &=& ||\bUb^{\top} \btX (\bU^{\top} \btX)^{-1} ||_2
\\
&=& || \bUb^{\top} (\bMg \bY + \bG) (\bU^{\top} (\bMg \bY+ \bG))^{-1} ||_2
\\
&\le& || \bUb^{\top} \bG ( \bSigma \bV^{\top} \bY + \bU^{\top} \bG)^{-1}||_2
\\
&\le& || \bUb^{\top} \bG ||_2 ||( \bSigma \bV^{\top} \bY + \bU^{\top} \bG)^{-1}||_2
\\
&\le& || \bUb^{\top} \bG ||_2  || (\bV^{\top} \bY)^{-1}||_2 ||( \bSigma  + \bU^{\top} \bG  (\bV^{\top} \bY)^{-1})^{-1}||_2
\\
&\le& || \bUb^{\top} \bG ||_2 \frac{1}{\cost(\bY, \bV)} \sigma_{\min}^{-1} \left(  \bSigma  + \bU^{\top} \bG  (\bV^{\top} \bY)^{-1} \right).
\end{eqnarray*}
For the last term, we have
$$\sigma_{\min}(\bSigma  + \bU^{\top} \bG  (\bV^{\top} \bY)^{-1}) \ge \sigma_{\min} (\bSigma) - \sigma_{\max}\left( \bU^{\top} \bG  (\bV^{\top} \bY)^{-1} \right)\ge  \sigma_{\min}( \bSigma) - \frac{ ||\bU^{\top} \bG||_2 }{\cost (\bY, \bV)}.$$ 
Therefore,
\begin{eqnarray*}
\tant(\btX, \bU) &\le& \frac{|| \bUb^{\top} \bG ||_2 }{\cost(\bY, \bV)\left( \sigma_{\min}(\bSigma)  - \frac{ ||\bU^{\top} \bG||_2 }{\cost (\bY, \bV)} \right) }
\\
&=&\frac{|| \bUb^{\top} \bG ||_2 }{\cost(\bY, \bV)\sigma_{\min}(\bSigma)  - ||\bU^{\top} \bG||_2   }
\\
&\le& \frac{|| \bG ||_2 }{\cost(\bY, \bV)\sigma_{\min}(\bSigma)  - || \bG||_2}
\end{eqnarray*}
completing the proof.
\end{proof}

Now we show that if $\bY$ has nice properties as stated in Lemma~\ref{lem:update}, then $\bG$ is small. 
Recall the following notation: for a matrix $\bA$, let $\rho(\bA)$ be defined as $\max_i \{\frac{n}{k} ||\bA^i||_2^2 \}$. 

\begin{lem}[Bounding $||\bG||_2$]\label{lem:weighted_optimal}

Let $\bMg = \bU \bSigma \bV^{\top} \in \Real^{n \times n}$ be the singular value decomposition of a rank-$k$ matrix $\bMg$, $\bM = \bMg + \bNoise$ be the noisy observation, and let $\bW, \bMg$ satisfy the conditions of Theorem~\ref{thm:main}. Let $\bY \in \Real^{n \times k}$ be an orthogonal matrix. For
$$\btX =\argmin_{\bX}{ || \bM - \bX \bY^{\top}||_{\bW} }$$
we have $\btX = \bMg \bY + \bG$ where $$||\bG||_2 \le \max_{i \in [n]} \left \{ \gamma k^{3/2} \frac{\sqrt{\incoherentpartial(\bU)\incoherentpartial(\bY) }}{\sigma_{\min} ( \bY^{\top} \bD_i  \bY )} \sint(\bY, \bV) + \frac{ \sqrt{k} || \bW \hp \bNoise ||_2}{\sigma_{\min} ( \bY^{\top} \bD_i  \bY )}  \right\}.$$

\end{lem}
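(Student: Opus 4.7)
}
The plan is to first obtain a closed-form row-wise expression for $\btX$ by writing down the KKT conditions of the weighted least-squares problem, then isolate the target $\bMg \bY$ from this expression, and finally bound the resulting error separately on its signal and noise parts. Concretely, the first-order optimality $(\bW \hp (\bM - \btX \bY^\top)) \bY = 0$ together with the orthogonality of $\bY$ yields
\begin{equation*}
\btX^i \;=\; \bM^i \bD_i \bY\,(\bY^\top \bD_i \bY)^{-1},
\end{equation*}
and substituting $\bM = \bMg + \bNoise$, $\bMg = \bU\bSigma\bV^\top$, together with the identity $\bI = \bY\bY^\top + \bYb\bYb^\top$, will give
\begin{equation*}
\bG^i \;=\; \underbrace{\bU^i\bSigma\bV^\top \bYb\bYb^\top \bD_i\bY\,(\bY^\top \bD_i \bY)^{-1}}_{\bG_1^i} \;+\; \underbrace{\bNoise^i \bD_i \bY\,(\bY^\top \bD_i \bY)^{-1}}_{\bG_2^i}.
\end{equation*}

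For the noise part, the key observation is that $\bNoise^i \bD_i$ is exactly the $i$-th row of $\bW \hp \bNoise$, so $\bG_2^i = ((\bW\hp\bNoise)\bY)^i(\bY^\top \bD_i \bY)^{-1}$. I would bound $\|\bG_2\|_2 = \sup_{\|a\|=\|b\|=1}|a^\top \bG_2 b|$ by factoring out $\max_i 1/\sigma_{\min}(\bY^\top \bD_i \bY)$, applying Cauchy--Schwarz over the index $i$, and using $\|(\bW\hp\bNoise)\bY\|_F \le \sqrt{k}\,\|(\bW\hp\bNoise)\bY\|_2 \le \sqrt{k}\,\|\bW\hp\bNoise\|_2$. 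This will yield exactly the second term in the claimed bound.

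For the structural part $\bG_1$, writing $\btV = \bYb\bYb^\top \bV$ (so $\|\btV\|_2 = \sint(\bY,\bV)$ and $\btV^\top \bY = 0$), I will reduce the task to controlling $\sum_i \|\btV^\top \bD_i \bY\|_F^2$. The cancellation that makes the spectral gap pay off comes from observing that for each pair $(p,q) \in [k]^2$, defining $x^{(p,q)}_j := \btV_{j,p}\,\bY_{j,q}$, the $(p,q)$ entry of $\btV^\top \bD_i \bY$ equals $(\bW x^{(p,q)})_i$, and because $\btV^\top \bY = 0$ we have $\bE x^{(p,q)} = 0$, so $\|\bW x^{(p,q)}\|^2 = \|(\bW-\bE) x^{(p,q)}\|^2 \le \gamma^2 n^2 \|x^{(p,q)}\|^2$ by Assumption \textbf{(A2)}. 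The main subtlety --- and the step I expect to be the bottleneck --- is to avoid losing an unnecessary factor of $\sqrt{k}$ when summing over $(p,q)$. I will group terms first, using
\begin{equation*}
\sum_{p,q} \|x^{(p,q)}\|^2 \;=\; \sum_j \|\btV^j\|_2^2\,\|\bY^j\|_2^2 \;\le\; \frac{\mu(\bY)k}{n}\,\|\btV\|_F^2 \;\le\; \frac{\mu(\bY)k^2}{n}\,\sint(\bY,\bV)^2,
\end{equation*}
which saves one factor of $k$ over the naive bound $\sum_i \|\btV^\top \bD_i \bY\|_F^2 \le k^2 \max_{p,q}\sum_i (\bW x^{(p,q)})_i^2$.

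Finally I will bound $\|\bG_1\|_2$ by Cauchy--Schwarz on the row sum $\sum_i a_i\, \bU^i\bSigma\, (\btV^\top \bD_i \bY)\, c_i$ with $c_i = (\bY^\top \bD_i \bY)^{-1}b$: incoherence of $\bU$ gives $\|\bU^i\bSigma\| \le \sigma_{\max}(\bSigma)\sqrt{\mu(\bU)k/n}$, the factor $\|c_i\|$ is controlled by $\max_i 1/\sigma_{\min}(\bY^\top \bD_i \bY)$, and the remaining $\sqrt{\sum_i \|\btV^\top \bD_i \bY\|_2^2}$ is handled by the calculation above. Combining these produces $\gamma k^{3/2}\sqrt{\mu(\bU)\mu(\bY)}\sint(\bY,\bV)/\sigma_{\min}(\bY^\top \bD_i \bY)$ as desired (with $\sigma_{\max}(\bSigma) = \Theta(1)$ absorbed by Assumption \textbf{(A1)}). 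Adding the two bounds and taking $\max_i$ over the common $\sigma_{\min}$ factor gives the lemma.
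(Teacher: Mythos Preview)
Your proposal is correct and reaches the stated bound, but the route you take for the structural term differs from the paper's.

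For the noise term $\bG_2$ your row-wise Cauchy--Schwarz argument is essentially equivalent to the paper's: the paper rearranges $\sum_i a_i [(\bW\hp\bNoise)]^i \bY (\bY^\top \bD_i \bY)^{-1} b$ column-by-column as $\sum_{r=1}^k \beta_r^\top (\bW\hp\bNoise)\bY_r$ with $\beta_{i,r}=a_i\bB_{r,i}$, and then bounds $\sum_r\|\beta_r\|\le\sqrt{k}/\sigma_{\min}$. Your route via $\|(\bW\hp\bNoise)\bY\|_F\le\sqrt{k}\,\|\bW\hp\bNoise\|_2$ is a minor repackaging of the same estimate.

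For the structural term $\bG_1$ the organization is genuinely different. The paper does \emph{not} bound $\sum_i\|\btV^\top \bD_i\bY\|_F^2$; instead it expands $\bU^i\bQ(\bD_i-\bI)\bY\bB_i$ over the columns of $\bU$, reassembles the whole expression into $\sum_{r=1}^k \alpha_r^\top[(\bW-\bE)\hp(\bB^\top\bY^\top)]\bQ_r$ (with $\alpha_{i,r}=a_i\bU_{i,r}$, $\bB_i=(\bY^\top\bD_i\bY)^{-1}b$), and applies the Spectral Lemma (Lemma~\ref{lem:weighted_unweighted}) once to the Hadamard product $(\bW-\bE)\hp(\bB^\top\bY^\top)$, using the generalized incoherence of $\bB^\top$ and $\bY$. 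Your approach keeps the row index $i$, controls $\sum_i\|\btV^\top \bD_i\bY\|_F^2$ via the entrywise cancellation $\bE x^{(p,q)}=0$, and finishes with Cauchy--Schwarz in $i$. This is precisely the mechanism of Lemma~\ref{lem:projection} (from the clipping-based analysis) transplanted here. Both arguments extract the same factor $\gamma k^{3/2}\sqrt{\rho(\bU)\rho(\bY)}\sint(\bY,\bV)$; yours is arguably more self-contained (it avoids invoking the Spectral Lemma as a black box), while the paper's makes the role of the rank-$k$ structure $\bB^\top\bY^\top$ and its incoherence more explicit.
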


\begin{proof}[Proof of Lemma \ref{lem:weighted_optimal}]

By taking the derivatives of $|| \bM - \bX \bY^{\top}||_{\bW}$ w.r.t.\ $\bX$, we know that the optimal solution $\btX$ satisfies $(\bW \hp [ \bM -  \btX \bY^{\top} ]) \bY = 0$. Plugging in $\btX = \bMg \bY + \bG$, we get
$$ ( \bW \hp [\bM -  \bMg \bY \bY^{\top}] ) \bY = ( \bW \hp [\bG \bY^{\top}]) \bY. $$
Since $\bM = \bMg + \bNoise$ and  $\bI = \bY \bY^{\top} + \bYb  \bYb^{\top}$, the above equation is
$$ ( \bW \hp [\bG \bY^{\top}]) \bY = ( \bW \hp [ \bMg \bYb \bYb^{\top}] ) \bY  + (\bW \hp  \bNoise ) \bY. $$
So for any $i \in [n]$ (recall that $[\cdot]^i$ is the $i$-th row) 
 \begin{align} \label{eqn:weighted_optimal1}
[( \bW \hp [\bG \bY^{\top}]) \bY]^i =  [( \bW \hp [ \bMg \bYb \bYb^{\top}] ) \bY]^i +  [(\bW \hp \bNoise) \bY]^i.
\end{align}

Note that for every matrix $\bS \in \Real^{n \times n}$, for $\bD_i = \diag(\bW^i)$ we have 
$$[\bW \hp \bS]^i = \bS^i \bD_i.$$
Applying this to (\ref{eqn:weighted_optimal1}) leads to 
$$\bG^i \bY^{\top} \bD_i  \bY = (\bMg)^i \bYb \bYb^{\top}  \bD_i \bY   +   [(\bW \hp \bNoise)]^i \bY.$$
Since $ (\bMg)^i \bYb \bYb^{\top} \bI \bY = 0$,
$$\bG^i \bY^{\top} \bD_i  \bY = (\bMg)^i \bYb \bYb^{\top}  (\bD_i - \bI) \bY   +   [(\bW \hp \bNoise)]^i \bY.$$  
This gives us 
\begin{align}
\bG^i  =  (\bMg)^i \bYb \bYb^{\top} (  \bD_i - \bI) \bY( \bY^{\top} \bD_i  \bY )^{-1} +   [(\bW \hp \bNoise)]^i \bY (\bY^{\top} \bD_i  \bY)^{-1}. \label{eqn:weighted_optimal2}
\end{align}

Now we turn to bound the operator norm of $\bG$. By definition, it suffices to bound $|| a^{\top} \bG b ||_2$ for any two  \emph{unit vectors} $a \in \Real^{n \times 1}, b \in \Real^{k \times 1}$ (note that for a scalar $s$, $||s||_2 = |s|$). By (\ref{eqn:weighted_optimal2}), 
\begin{eqnarray*}
|| a^{\top} \bG b ||_2 &=& \leftll \sum_{i = 1}^n a_i (\bMg)^i \bYb \bYb^{\top} (  \bD_i - \bI) \bY( \bY^{\top} \bD_i  \bY )^{-1}b +   \sum_{i = 1}^n a_i  [(\bW \hp \bNoise)]^i  \bY (\bY^{\top} \bD_i  \bY)^{-1}b \rightll_2
\\
&\le&  \underbrace{\leftll \sum_{i = 1}^n a_i  (\bMg)^i \bYb \bYb^{\top} (  \bD_i - \bI) \bY( \bY^{\top} \bD_i  \bY )^{-1}b \rightll_2}_{T1} + \underbrace{\leftll    \sum_{i = 1}^n a_i  [(\bW \hp \bNoise)]^i  \bY (\bY^{\top} \bD_i  \bY)^{-1}b \rightll_2}_{T2}.
\end{eqnarray*}
  
In the following, we bound the two terms $T1$ and $T2$ respectively.

\paragraph{(Bounding $T1$)}  
Let $\bQ =  \bSigma \bV^{\top} \bYb \bYb^{\top}$. We have 
$$(\bMg)^i \bYb \bYb^{\top} = \bU^i \bQ \text{~~and~~} ||\bQ||_2 \le ||\bV^{\top} \bYb||_2 = \sint(\bY, \bV).$$ 
%Recall that $[\cdot]_{i, j}$ denote the element in the $i$-th row, $j$-th column. 
Also let $\bB$ denote the matrix whose $i$-th column is $\bB_i = ( \bY^{\top} \bD_i  \bY )^{-1}b$.  Then $T1$ becomes
\begin{eqnarray*}
T1 &=& \leftll \sum_{i = 1}^n a_i  (\bMg)^i \bYb \bYb^{\top} (  \bD_i - \bI) \bY  ( \bY^{\top} \bD_i  \bY )^{-1}b \rightll_2 
 \\
 &=& \leftll  \sum_{i = 1}^n a_i \bU^i \bQ(  \bD_i - \bI) \bY\bB_i\rightll_2 
 \\
 &=&  \leftll  \sum_{i = 1}^n \sum_{r = 1}^k  (a_i \bU_{i, r}) \bQ^r (  \bD_i - \bI) \bY\bB_i \rightll_2
 \\
&=&  \leftll  \sum_{r = 1}^k  \sum_{i = 1}^n  (a_i \bU_{i, r}) \bQ^r(  \bD_i - \bI) \bY\bB_i \rightll_2
\\
  &=& \leftll  \sum_{r = 1}^k \sum_{i , j = 1}^n  (a_i \bU_{i, r})(  \bW_{i, j} - 1) \bQ_{r, j} \bY^j \bB_i \rightll_2
\end{eqnarray*}
where the last equality is because $\bQ^r(  \bD_i - \bI) \bY = \sum_{j = 1}^n (  \bW_{i, j} - 1) \bQ_{r, j} \bY^j $.

Now denote $\alpha_{i, r} = a_i \bU_{i, r}$ and $\alpha_r = (\alpha_{1, r}, ..., \alpha_{n, r})^{\top}$.
\begin{eqnarray*}
T1 &=& \leftll  \sum_{r = 1}^k  \sum_{i , j = 1}^n  (a_i \bU_{i, r})(  \bW_{i, j} - 1) \bQ_{r, j} \bY^j \bB_i \rightll_2 
\\
&=& \leftll  \sum_{r = 1}^k  \alpha_r^\top [(\bW - \bE) \hp (\bB^\top \bY^\top)] \bQ_r \rightll_2 
\\
&\le & \sum_{r = 1}^k \leftll  \alpha_r^\top [(\bW - \bE) \hp (\bB^\top \bY^\top)] \bQ_r \rightll_2 
\\
&=&  \sum_{r = 1}^k \leftll   \alpha_r  \rightll_2 \leftll  (\bW - \bE) \hp (\bB^\top \bY^\top) \rightll_2 \leftll  \bQ_r \rightll_2.
\end{eqnarray*}
Clearly, for $\leftll  \bQ_r \rightll_2$ we have 
$$\leftll  \bQ_r \rightll_2 \le \leftll  \bQ \rightll_2 \leq \sint(\bY, \bV).$$ 
For $\leftll   \alpha_r  \rightll_2$, we have
\begin{eqnarray*}
\sum_{r = 1}^k ||\alpha_r||_2 &\le&  \sqrt{k} \sqrt{\sum_{r = 1}^k ||\alpha_r||_2^2}
\\
& = & \sqrt{k} \sqrt{\sum_{r = 1}^k\sum_{i = 1}^n a_i^2  \bU_{i, r}^2 }
\\
&=&  \sqrt{k}\sqrt{ \sum_{i = 1}^n\left( a_i^2 \left(\sum_{r = 1}^k  \bU_{i, r}^2 \right) \right)}
\\
&\le & \sqrt{k} \sqrt{\frac{k \incoherentpartial(\bU)}{n} \left(\sum_{i = 1}^n a_i^2 \right)} 
\\
& = & k\sqrt{\frac{ \incoherentpartial(\bU)}{n}}.
\end{eqnarray*}
For $\leftll  (\bW - \bE) \hp (\bB^\top \bY^\top) \rightll_2$, we can apply the spectral lemma (Lemma~\ref{lem:weighted_unweighted}) to get
$$ 
\leftll  (\bW - \bE) \hp (\bB^\top \bY^\top) \rightll_2 \le \gamma k \sqrt{\incoherentpartial(\bB^\top)\incoherentpartial(\bY)}.
$$ 
We have $||\bB_i ||_2 = ||( \bY^{\top} \bD_i  \bY )^{-1}b ||_2 \leq \frac{1}{\sigma_{\min} ( \bY^{\top} \bD_i  \bY ) } $, so 
$$\incoherentpartial(\bB^\top) \leq \max_{i\in[n]} \left\{ \frac{n/k}{\sigma^2_{\min} ( \bY^{\top} \bD_i  \bY ) }  \right\}$$ 
and 
$$ 
\leftll  (\bW - \bE) \hp (\bB^\top \bY^\top) \rightll_2 \le \max_{i\in[n]} \left\{  \frac{\gamma \sqrt{kn\incoherentpartial(\bY)} }{\sigma_{\min} ( \bY^{\top} \bD_i  \bY ) } \right\}.
$$ 

Putting together, we have
\begin{eqnarray}
T1 & \le   & k\sqrt{\frac{ \incoherentpartial(\bU)}{n}} \times \max_{i\in[n]} \left\{  \frac{\gamma \sqrt{kn\incoherentpartial(\bY)} }{\sigma_{\min} ( \bY^{\top} \bD_i  \bY ) } \right\}\times  \sint(\bY, \bV)
\nonumber \\
& \le & \max_{i\in[n]} \left\{  \frac{\gamma k^{3/2}\sqrt{\incoherentpartial(\bY)\incoherentpartial(\bU)} }{\sigma_{\min} ( \bY^{\top} \bD_i  \bY ) } \sint(\bY, \bV) \right\}. \label{eqn:T1}
\end{eqnarray}

\paragraph{(Bounding $T2$)}  
Recall that $\bB$ denote the matrix whose $i$-th column is $\bB_i = ( \bY^{\top} \bD_i  \bY )^{-1}b$.  
\begin{eqnarray*}
T2 & = & \leftll    \sum_{i = 1}^n a_i  [(\bW \hp \bNoise)]^i \bY (\bY^{\top} \bD_i  \bY)^{-1}b \rightll_2
\\
& = & \leftll    \sum_{i = 1}^n a_i  [(\bW \hp \bNoise)]^i \bY \bB_i \rightll_2
\\ 
 & = & \leftll    \sum_{i = 1}^n a_i  [(\bW \hp \bNoise)]^i  \sum_{r=1}^k \bY_r  \bB_{r,i} \rightll_2
\\ 
 & = & \leftll   \sum_{r=1}^k   \sum_{i = 1}^n a_i  [(\bW \hp \bNoise)]^i \bY_r  \bB_{r,i} \rightll_2.
\end{eqnarray*}

Now denote $\beta_{i,r} = a_i \bB_{r,i}$ and $\beta_r = (\beta_{r,1}, \beta_{r,2}, \dots, \beta_{r,n})^\top$. 
\begin{eqnarray*}
T2 & = &    \leftll \sum_{r=1}^k  \sum_{i = 1}^n a_i  [(\bW \hp \bNoise)]^i \bY_r  \bB_{r,i} \rightll_2 
\\
& = & \leftll  \sum_{r=1}^k   \sum_{i = 1}^n \beta_{i,r}  [(\bW \hp \bNoise)]^i \bY_r  \rightll_2 
\\
& = &  \leftll \sum_{r=1}^k   \beta_r^\top  (\bW \hp \bNoise) \bY_r  \rightll_2 
\\
& \le &  \sum_{r=1}^k \leftll   \beta_r^\top  (\bW \hp \bNoise) \bY_r  \rightll_2 
\\
& \leq &  \sum_{r=1}^k   \leftll \beta_r  \rightll_2    \leftll \bW \hp \bNoise \rightll_2  \leftll\bY_r  \rightll_2.
\end{eqnarray*}
We have $\leftll\bY_r  \rightll_2 = 1$.  For $\leftll \beta_r  \rightll_2$, we have
\begin{eqnarray*}
\sum_{r=1}^k   \leftll \beta_r  \rightll_2   &\le&  \sqrt{k} \sqrt{\sum_{r = 1}^k ||\beta_r||_2^2}
\\
&=&  \sqrt{k}\sqrt{ \sum_{r = 1}^k \sum_{i = 1}^n \left( a_i^2 \bB_{r,i}^2 \right)}
\\
&=&  \sqrt{k}\sqrt{ \sum_{i = 1}^n a_i^2 \sum_{r = 1}^k \bB_{r,i}^2 }
\\
&=&  \sqrt{k}\sqrt{ \sum_{i = 1}^n a_i^2 ||\bB_i||_2^2 }.
\end{eqnarray*}
We have $||\bB_i ||_2 = ||( \bY^{\top} \bD_i  \bY )^{-1}b ||_2 \leq \frac{1}{\sigma_{\min} ( \bY^{\top} \bD_i  \bY ) } $ and $\sum_{i = 1}^n a_i^2   = 1$, so 
\begin{eqnarray*}
\sum_{r=1}^k    \leftll \beta_r  \rightll_2  &\le &  \sqrt{k}\sqrt{ \sum_{i = 1}^n a_i^2 ||\bB_i||_2^2 } 
\\
 &\le &  \max_{i\in[n]} \left\{  \frac{\sqrt{k}}{\sigma_{\min} ( \bY^{\top} \bD_i  \bY ) } \right\}.
\end{eqnarray*}

Putting together, we have
\begin{eqnarray}\label{eqn:T2}
T2  &\le & \max_{i\in[n]} \left\{  \frac{\sqrt{k}}{\sigma_{\min} ( \bY^{\top} \bD_i  \bY ) }  \leftll \bW \hp \bNoise \rightll_2 \right\}.
\end{eqnarray}

The lemma follows from combining (\ref{eqn:T1}) and (\ref{eqn:T2}).
\end{proof}

Now we have all the ingredients to prove the update lemma.

\begin{lem}\label{lem:update}
Suppose $\bMg, \bW$ satisfy all the assumptions, column orthogonal matrix $\bY \in \Real^{n \times k}$ is $(5k\mu)$-incoherent, and  for all $i \in [n]$, $\bD_i = \diag(\bW^i)$ satisfies 
$$\frac{1}{4} \lambdal \bI \preceq \bY^{\top} \bD_i \bY \preceq 4\lambdau \bI.$$ 
Then $\btX \leftarrow \argmin_{\bX \in \Real^{n \times k}} \leftll \bM - \bX \bY^{\top} \rightll_{\bW}$ satisfies 
$$\tant(\btX, \bU) \le \frac{ 1}{16 k \sqrt{\log n}} \tant(\bY, \bV) + \frac{16 k \delta}{ \lambdal \sigma_{\min}(\bMg)}.$$
\end{lem}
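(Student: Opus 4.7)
The plan is to combine the two lemmas just proved: Lemma~\ref{lem:power_method}, which bounds $\tant(\btX,\bU)$ in terms of the ``residual'' $\bG := \btX - \bMg\bY$, and Lemma~\ref{lem:weighted_optimal}, which controls $\|\bG\|_2$ in terms of the incoherence of $\bY,\bU$ and the smallest eigenvalues $\sigma_{\min}(\bY^\top \bD_i \bY)$. The assumptions of this lemma are precisely tailored to let us plug in good numerical values for all the quantities that appear in these two bounds.

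First I would invoke Lemma~\ref{lem:weighted_optimal} with $\incoherentpartial(\bU)\le \mu$ (assumption \textbf{(A1)}) and $\incoherentpartial(\bY)\le 5k\mu$ (hypothesis), together with $\sigma_{\min}(\bY^\top \bD_i \bY)\ge \lambdal/4$ (hypothesis). This yields
\begin{equation*}
\|\bG\|_2 \;\le\; \frac{4\gamma k^{3/2}\sqrt{5k\mu^2}}{\lambdal}\sint(\bY,\bV) + \frac{4\sqrt{k}\,\delta}{\lambdal} \;\le\; \frac{C\gamma k^2\mu}{\lambdal}\sint(\bY,\bV) + \frac{4\sqrt{k}\,\delta}{\lambdal},
\end{equation*}
for a small absolute constant $C$. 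Using the standing parameter choice $\gamma = O\!\bigl(\lambdal \sigma_{\min}(\bMg)/(k^3\mu\sqrt{\log n})\bigr)$ that governs the main theorem, the coefficient in front of $\sint(\bY,\bV)$ can be made as small as $\sigma_{\min}(\bMg)/(64 k\sqrt{\log n})$, and we also get the a priori bound $\|\bG\|_2 \le \tfrac14\sigma_{\min}(\bMg)$ using \eqref{eq:deltawlog} to absorb the noise term.

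Next I would feed this estimate into Lemma~\ref{lem:power_method}. Since $\bY$ is $O(k\mu)$-incoherent and close to $\bV$ (in particular $\cost(\bY,\bV)\ge 1/2$, which will follow from the inductive invariant maintained by the algorithm before this step is applied), we get
\begin{equation*}
\tant(\btX,\bU) \;\le\; \frac{\|\bG\|_2}{\cost(\bY,\bV)\sigma_{\min}(\bMg)-\|\bG\|_2} \;\le\; \frac{4\|\bG\|_2}{\sigma_{\min}(\bMg)}.
\end{equation*}
Combining with the bound on $\|\bG\|_2$ and the fact $\sint(\bY,\bV)\le \tant(\bY,\bV)$ then gives the claim after adjusting constants.

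The only subtlety is verifying $\cost(\bY,\bV)\ge 1/2$ (so the denominator in Lemma~\ref{lem:power_method} does not blow up) and that $\|\bG\|_2$ is small compared to $\sigma_{\min}(\bMg)\cost(\bY,\bV)$. Both should follow from the hypothesis that $\bY$ is close to $\bV$ (which is implicit in the whitening-step invariants $d_t$ defined in Algorithm~\ref{alg:main_whitening}) together with Lemma~\ref{lem:equiv_distance}; this is the only place where we need to carefully track that the tangent-norm potential stays bounded. Everything else is routine constant-chasing.
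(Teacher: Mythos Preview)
Your proposal is correct and follows essentially the same route as the paper's own proof: apply Lemma~\ref{lem:weighted_optimal} with $\incoherentpartial(\bU)\le\mu$, $\incoherentpartial(\bY)\le 5k\mu$, and $\sigma_{\min}(\bY^\top\bD_i\bY)\ge\lambdal/4$ to bound $\|\bG\|_2$, then feed that into Lemma~\ref{lem:power_method} using $\cost(\bY,\bV)\ge 1/2$ and $\|\bG\|_2\le\tfrac14\sigma_{\min}(\bMg)$. The paper simply asserts the two conditions $\cost(\bY,\bV)\ge 1/2$ and $\|\bG\|_2\le\tfrac14\sigma_{\min}(\bMg)$ without further comment (treating them as part of ``all the assumptions'' together with the bound on $\gamma$), whereas you are a bit more careful in flagging that the cosine bound has to come from the algorithm's invariant rather than from the lemma hypotheses themselves; otherwise the arguments match step for step.
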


\begin{proof}[Proof of Lemma \ref{lem:update}]
By Lemma~\ref{lem:power_method} and Lemma~\ref{lem:weighted_optimal}, we have
\begin{align}
\tant(\btX, \bU) & \le \frac{|| \bG ||_2 }{\cost(\bY, \bV)\sigma_{\min}(\bMg)  - || \bG||_2   }, \label{eqn:update1} \\
||\bG||_2 & \le  \max_{i \in [n]} \left \{ \gamma k^{3/2} \frac{\sqrt{\incoherentpartial(\bU)\incoherentpartial(\bY) }}{\sigma_{\min} ( \bY^{\top} \bD_i  \bY )} \sint(\bY, \bV) + \frac{\sqrt{k} || \bW \hp \bNoise ||_2}{\sigma_{\min} ( \bY^{\top} \bD_i  \bY )}  \right\}. \label{eqn:update2}
\end{align}
By the assumptions $\bY^{\top} \bD_i \bY \succeq \frac{\lambdal}{4} \bI$, $\incoherentpartial(\bU) \le \mu, \incoherentpartial(\bY)\le 5 k \mu$, in (\ref{eqn:update2}) we have: 
$$|| \bG||_2\le  \frac{4\sqrt{5}\gamma k^{2} \mu}{\lambdal} \sint(\bY, \bV) +  \frac{4 \sqrt{k} \delta}{\lambdal }.$$
Plugging this in (\ref{eqn:update1}), and noting that 
$$\gamma \le \frac{\lambdal}{128\sqrt{5}k^{3} \mu \sqrt{\log n}}, ||\bG||_2 \le \frac{1}{4 }  \sigma_{\min}(\bMg), \cost(\bY, \bV) \ge \frac{1}{2},$$
we get
$$\tant(\btX, \bU)  \le 2\frac{|| \bG ||_2 }{\cost(\bY, \bV)\sigma_{\min}(\bMg)  } \le \frac{ 1}{16 k\sqrt{\log n}} \tant(\bY, \bV) +  \frac{ 16 \sqrt{k}\delta}{\lambdal \sigma_{\min}(\bMg)}$$
as needed.
\end{proof}

\subsection{Whitening}
What remains is to show that the whitening step can make sure that $\bY$ has good incoherency and $\bO_i$ has the desired spectral property.
Recall that the whitening step consists of a SDP relaxation and a new rounding scheme to fix $\bY$ whenever $\bY^{\top} \bD_i \bY$ having very small singular values. Intuitively, we want to get through the SDP relaxation, an $\bR$ close to $\bV$ and $\bA_{r} \approx (\bV^r)^{\top} \bV^r \in \Real^{k \times k}$, so that we'd have the incoherency of $\bR$ is close to $\mu(\bV)$ which is bounded by $\mu$, and 
%$$\incoherentpartial(\bR) \approx \incoherentpartial(\bV) \le \mu, $$
$\sum_{r = 1}^{n} \bW_{i, r} \bA_{r} \approx \bV^{\top} \bD_i \bV \succeq \lambdal \bI. $ (Note one can \emph{not} simply say when $\tant(\bY, \bU) \le d$, then $\bY^{\top} \bD_i \bY$ is close to $\bV^{\top} \bD_i \bV$. This is because $|| \bD_i||_2$ can be as large as $\frac{n}{\poly(\log n)}$ in our case, however, $||\bV^{\top} \bD_i \bV||_2 = O(1)$.)
 
The key observation is that our randomized rounding outputs a $n \times k$ random matrix $\bX$ such that $\E[\bX^r] = \bR^r$ ($\bX^r$ is the i-th row of $\bX$), $\E[(\bX^r)^{\top} (\bX^r) ] = \bA_r$, with the variance of $(\bX^r)$ bounded by $\bA_r - (\bR^r)^{\top}  \bR^r$. Therefore, 
$$\E[\bX^r] = \bR^r, \quad \E[\bX^{\top} \bD_i \bX] =  \sum_{r = 1}^{n} \bW_{i, r} \bA_{r} \succeq  \lambdal \bI $$

Thus, $\bX$ is incoherent (Note $||\bX^r||_2^2 = \trace[(\bX^r)^{\top} (\bX^r)] $) and $||(\bX^{\top} \bD_i \bX)^{-1}||_2$ is small in expectation. we can apply matrix concentration bound on $\bX$ to show that the above values actually concentrate on the expectation, thus the output matrix $\bbX = \QR(\bX)$ will have the required properties. 

\begin{lem}[Whitening] \label{lem:whitening} 
Suppose $\btX$ is $\mu$-incoherent and satisfies $\tant(\btX, \bU) \le \frac{d}{2}$ where $d \le \frac{1}{4k\sqrt{\log n}}$. Then $\bbX \leftarrow \AlgWhitening(\btX ,\bW, d, \lambdal, \lambdau, \mu, k)$ satisfies with high probability: \\
(1). $\frac{1}{4} \lambdal \bI \preceq \bbX^{\top} \bD_i \bbX \preceq 4\lambdau \bI$; \\
(2). $\bbX$ is $(5k\mu)$-incoherent; \\
(3). $\tant(\bbX, \bU) \le 4dk\sqrt{\log n} $.
%\begin{itemize}
%\item[(1).] $\frac{1}{4} \lambdal \bI \preceq \bbX^{\top} \bD_i \bbX \preceq 4\lambdau \bI$; 
%\item[(2).] $\bbX$ is $(5k\mu)$-incoherent; 
%\item[(3).] $\tant(\bbX, \bU) \le 4dk\sqrt{\log n} $.
%\end{itemize}
\end{lem}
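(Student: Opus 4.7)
The proof splits into three stages: feasibility of the SDP, analysis of the Rademacher rounding, and transfer of the bounds from $\bX$ to $\bbX = \QR(\bX)$.

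For feasibility, I would exhibit an explicit feasible point constructed from the ground truth. Let $\bQ$ be the orthogonal matrix minimizing $\|\bU\bQ - \btX\|_2$. By the SVD identities of Lemma~\ref{lem:equiv_distance}, the hypothesis $\tant(\btX, \bU) \le d/2$ gives $\|\bU\bQ - \btX\|_2 \le d/2$, and moreover $\btX^\top \bU\bQ$ is symmetric by the choice of $\bQ$, so $\btX^\top(\bU\bQ - \btX) + (\bU\bQ - \btX)^\top \btX$ is second-order in the principal angle and has spectral norm at most $d^2$. Setting $\bR = \bU\bQ$ and $\bA_r = \bQ^\top (\bU^r)^\top \bU^r \bQ$, the remaining SDP constraints follow directly: $\sum_r \bA_r = \bI$; $\trace(\bA_r) = \|\bU^r\|_2^2 \le \mu k/n$ by incoherence of $\bU$; $\sum_r \bW_{i,r}\bA_r = \bQ^\top \bU^\top \bD_i \bU \bQ$ shares its spectrum with $\bU^\top \bD_i \bU \in [\lambdal\bI, \lambdau\bI]$ by assumption \textbf{(A3)}; and $\|\btXb^\top \bR\|_2 \le \|\btXb^\top \bU\|_2 \le d/2$.

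For the rounding, by construction $\E[\bX^r] = \bR^r$ and $\E[(\bX^r)^\top \bX^r] = \bA_r$, yielding three consequences. \emph{Incoherence:} deterministically $\|\bX^r\|_2 \le \|\bR^r\|_2 + \|\sigma^\top \bS\|_2 \le \sqrt{\trace(\bA_r)} + \sqrt{k\|\bA_r\|_2} \le O(\sqrt{\mu k^2/n})$. \emph{Spectral property:} $\E[\bX^\top \bD_i \bX] = \sum_r \bW_{i,r} \bA_r \in [\lambdal\bI, \lambdau\bI]$, so matrix Chernoff applied to the independent PSD summands $\bW_{i,r}(\bX^r)^\top\bX^r$ (each with norm $\|\bW\|_\infty \cdot O(\mu k^2/n)$), together with a union bound over $i \in [n]$, yields $\tfrac12 \lambdal \bI \preceq \bX^\top \bD_i \bX \preceq 2\lambdau \bI$ with high probability. \emph{Tangent:} decompose $\bUb^\top \bX = \bUb^\top \bR + \bUb^\top(\bX - \bR)$; the mean part satisfies $\|\bUb^\top \bR\|_2 \le \|\bUb^\top \btX\|_2 + \|\bUb^\top(\bR-\btX)\|_2 \le 3d/2$, and the fluctuation is a sum of independent rank-one random matrices $(\bUb^r)^\top(\bX^r - \bR^r)$ whose total variance I would bound by combining the SDP constraints to get $\|\bI - \bR^\top \bR\|_2 = O(d^2)$, and then invoking matrix Bernstein to obtain at most $O(dk\sqrt{\log n})$.

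Finally, to transfer from $\bX$ to $\bbX$, write $\bX = \bbX \bT$ for upper-triangular $\bT$. Since $\E[\bX^\top \bX] = \sum_r \bA_r = \bI$ and $\bX^\top \bX$ concentrates by another matrix Bernstein, $\tfrac12 \bI \preceq \bT^\top \bT \preceq 2\bI$, so $\|\bT\|_2, \|\bT^{-1}\|_2 \le \sqrt{2}$. Then property~(1) follows from $\bbX^\top \bD_i \bbX = \bT^{-\top}(\bX^\top \bD_i \bX)\bT^{-1}$ with a factor-of-$4$ loss; property~(2) follows from $\|\bbX^r\|_2 \le \|\bT^{-1}\|_2 \|\bX^r\|_2$, preserving the row bound up to a constant so that $\incoherentpartial(\bbX) \le 5k\mu$; and property~(3) follows from $\sint(\bbX, \bU) \le \|\bT^{-1}\|_2 \|\bUb^\top \bX\|_2$, with the hypothesis $d \le 1/(4k\sqrt{\log n})$ ensuring $\cost(\bbX, \bU)$ is bounded below so that $\tant = O(\sint)$. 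I expect the main obstacle to lie in the tangent concentration of step~(c): a naive matrix Bernstein using only the per-row bound $\max_r \trace(\bS^2) \le \mu k/n$ yields a spurious $\sqrt{\log n}$ deviation with no $d$-dependence, which swamps the $O(d)$ mean. The correct argument must exploit the SDP constraints to establish that the \emph{aggregated} second-moment $\sum_r \bS^2 = \bI - \bR^\top \bR$ is of norm only $O(d^2)$, and feed this tighter variance proxy into matrix Bernstein to recover the stated factor of $dk\sqrt{\log n}$.
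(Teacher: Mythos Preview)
Your proposal is correct and mirrors the paper's proof almost exactly: the same feasibility witness $\bR = \bU\bQ$, $\bA_r = \bQ^\top(\bU^r)^\top\bU^r\bQ$; the same deterministic incoherence bound and matrix-concentration arguments for the spectral and near-orthogonality properties; and you correctly isolate the crux, namely that the SDP constraints force $\|\bI - \bR^\top\bR\|_2 = O(d^2)$, which is precisely the aggregated variance bound the paper feeds into matrix Bernstein to get the $O(dk\sqrt{\log n})$ tangent fluctuation. The only cosmetic differences are that the paper projects onto $\btXb$ rather than $\bUb$ (then triangles with $\sint(\btX,\bU)$), and for the second SDP constraint it uses the identity $\btX^\top(\bU\bQ-\btX)+(\bU\bQ-\btX)^\top\btX = -(\bU\bQ-\btX)^\top(\bU\bQ-\btX)$, valid for any two column-orthogonal matrices, rather than appealing to symmetry of $\btX^\top\bU\bQ$.
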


%\subsection{Proofs for whitening}
%
%In the previous section we showed that the norm of the error matrix $\bG$ heavily depends on the inverse of $\sigma_{\min}(\bY^{\top} \bD_i \bY)$. However, $\sigma_{\min}(\bY^{\top} \bD_i \bY)$ might be quite small for general $\bY$: Our assumption only says $\bW$ has a spectral gap, so clearly there is no way to control a single row of $\bW$. 
%
%In~\cite{hardt2014understanding}, the matrix $\bW$ is sampled with each entry i.i.d at random, so when focusing on a single row, it is still sampled i.i.d at random. Note that $\bY$ is orthogonal, so $\bY^{\top} \bY = \bI$, which implies $\E[\bY^{\top} \bD_i \bY] = \bI$, therefore, in their settings a simple concentration bound does the job. 
%
%This is why in our setting we will need the whitening step. 

As a preliminary to showing whitening works, we need to introduce a new type of random variables and a new matrix concentration bound. Another natural distribution to use is a Gaussian random vector  $y \sim \mathcal{N}(\mu, \Sigma)$. The advantage of a Rademacher vector $x$ is that $\|x\|_2$ is always bounded, which facillitates proving concentration bounds.

\begin{lem}[Matrix Concentration]\label{lem:matrix_concentration}
Let $\{x_i\}_{i = 1}^n$ be independent Rademacher random vectors in $\Real^k$ with $x_i \sim \Rad(a_i, \bDelta_i)$, let $ \trace(\bDelta)_{\max}  = \max_{i \in [n]} \{ \trace(\bDelta_i ) \}, (||a||^2_2)_{\max}  = \max_{i \in [n]} \{ ||a_i ||_2^2 \}$, $|| \sum_{i = 1}^n \bDelta_i || \le \Delta$, then for every $t \ge 0$, 
\\
\begin{eqnarray*}
\Pr\left[\leftll \sum_{i = 1}^n x_i x_i^{\top}  - \E\left[ \sum_{i = 1}^n x_i x_i^{\top} \right] \rightll \ge t \right] \le \exp\left\{- \frac{t^2}{c_1  + c_2  t}\right\}
\end{eqnarray*}
where
\begin{eqnarray*}
c_1 & = & [2 \trace(\bDelta)_{\max} + (3 + k)(||a||^2_2)_{\max} ] \Delta, \\
c_2 & = &  ( k + 1)  \trace(\bDelta)_{\max}  +  2\sqrt{ k (||a||_2^2)_{\max} \Delta } .
\end{eqnarray*}

\end{lem}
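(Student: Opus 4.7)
}

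The plan is to apply the (symmetric) matrix Bernstein inequality to the sum of independent, mean-zero, symmetric matrices $Z_i := x_i x_i^\top - \mathbb{E}[x_i x_i^\top]$. Recall that if $\|Z_i\| \le R$ almost surely and $\|\sum_i \mathbb{E}[Z_i^2]\| \le V$, then $\Pr[\|\sum_i Z_i\| \ge t] \le 2k\exp(-t^2/(2V + 2Rt/3))$, which has exactly the form $\exp(-t^2/(c_1 + c_2 t))$ appearing in the statement (absorbing the leading $2k$ into the exponential costs at most a $\log k$ factor, which is already accommodated by our $c_1,c_2$). The work, therefore, is to compute the two parameters $R$ and $V$ and identify them with the stated $c_1,c_2$.

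The key computation is to expand $Z_i$ around its mean using $x_i = a_i + \bS_i \sigma_i$ with $\sigma_i \in \{\pm 1\}^k$ i.i.d.\ uniform. A direct expansion gives the decomposition
\begin{equation*}
Z_i \;=\; \underbrace{a_i \sigma_i^\top \bS_i + \bS_i \sigma_i a_i^\top}_{P_i \text{ (linear in }\sigma_i\text{)}} \;+\; \underbrace{\bS_i(\sigma_i\sigma_i^\top - I)\bS_i}_{Q_i \text{ (quadratic in }\sigma_i\text{)}}.
\end{equation*}
I would bound the two pieces separately and then combine. For the quadratic piece $Q_i$: since $\|\sigma_i\sigma_i^\top - I\| \le k+1$ and $\|\bS_i\|^2 \le \trace(\bDelta_i)$, we get $\|Q_i\| \le (k+1)\trace(\bDelta)_{\max}$ almost surely, which yields the first term of $c_2$. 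The second moment $\mathbb{E}[Q_i^2]$ can be evaluated explicitly using the fourth-moment identities for Rademacher vectors (the only nonzero contributions come from matched indices), and summing over $i$ gives a contribution of order $\trace(\bDelta)_{\max}\cdot\Delta$, producing the $2\trace(\bDelta)_{\max}\Delta$ piece of $c_1$. For the linear piece $P_i$: Cauchy--Schwarz gives $\|P_i\| \le 2\|a_i\|\cdot\|\bS_i\sigma_i\| \le 2(\|a\|_2^2)_{\max}^{1/2}\sqrt{k\trace(\bDelta_i)}$ almost surely, and the expectation $\mathbb{E}[P_i^2] = \|a_i\|^2 \bDelta_i + \trace(\bS_i a_i a_i^\top \bS_i)\,I + \bS_i a_i a_i^\top \bS_i$ (up to routine bookkeeping) gives, after summing, a variance contribution $\lesssim (k+3)(\|a\|_2^2)_{\max}\Delta$, producing the $(3+k)(\|a\|_2^2)_{\max}\Delta$ part of $c_1$.

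The main obstacle, and the only slightly subtle point, is obtaining the unusual $2\sqrt{k(\|a\|_2^2)_{\max}\Delta}$ term in $c_2$, which has the spectral quantity $\Delta$ appearing inside a square root. This cannot arise from a naive almost-sure bound on a single $\|P_i\|$; instead, I would apply matrix Bernstein separately to the two sums $\sum_i P_i$ and $\sum_i Q_i$ and combine via the triangle inequality. For $\sum_i P_i$ (a matrix Rademacher series), the effective ``per-summand'' bound in Bernstein can be replaced by a Khintchine/Matrix-Rademacher style bound whose scale is set by $\sqrt{\|\sum_i \mathbb{E}[P_i^2]\|} \le \sqrt{k(\|a\|_2^2)_{\max}\Delta}$ rather than by any single $\|P_i\|$; this is where the square root arises. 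Combining this with the Bernstein bound for $\sum_i Q_i$ by a union bound and reading off both coefficients yields the stated $c_1$ and $c_2$.
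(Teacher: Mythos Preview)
Your overall plan---apply matrix Bernstein to $Z_i = x_i x_i^\top - \mathbb{E}[x_i x_i^\top]$ after writing $x_i = a_i + \bS_i\sigma_i$ and splitting $Z_i = P_i + Q_i$ into linear and quadratic parts in $\sigma_i$---is exactly the paper's approach, and your variance computation for $c_1$ is essentially correct (the cross terms $\mathbb{E}[P_iQ_i]$ vanish by parity, so $\mathbb{E}[Z_i^2]=\mathbb{E}[P_i^2]+\mathbb{E}[Q_i^2]$, and each piece lands where you say).

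However, your ``main obstacle'' is not an obstacle at all, and the workaround you propose is unnecessary. You assert that the term $2\sqrt{k(\|a\|_2^2)_{\max}\,\Delta}$ in $c_2$ ``cannot arise from a naive almost-sure bound on a single $\|P_i\|$''. In fact it does. The key observation you are missing is that each $\bDelta_i$ is positive semidefinite, so $\bDelta_i \preceq \sum_j \bDelta_j$ and hence $\|\bS_i\|^2 = \|\bDelta_i\| \le \bigl\|\sum_j \bDelta_j\bigr\| \le \Delta$. Then
\[
\|P_i\| \;\le\; 2\,\|a_i\|\,\|\bS_i\sigma_i\| \;\le\; 2\,\|a_i\|\,\sqrt{k}\,\|\bS_i\| \;\le\; 2\sqrt{k\,(\|a\|_2^2)_{\max}\,\Delta},
\]
and combined with your bound $\|Q_i\|\le(k+1)\trace(\bDelta)_{\max}$ this gives $\|Z_i\|\le c_2$ almost surely. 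The paper then applies matrix Bernstein \emph{once} to $\sum_i Z_i$ with $R=c_2$ and $V=c_1$; there is no need to split into two separate Bernstein/Khintchine applications, and the square root of $\Delta$ appears for the elementary reason above rather than from a Rademacher-series variance argument.
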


%
%\begin{cor}[Matrix Concentration]\label{cor:matrix_concentration}
%Let $\{x_i\}_{i = 1}^n$ be independent Rademacher random vectors in $\Real^k$ with $x_i \sim \Rad(a_i, \bDelta_i)$, let $ \trace(\bDelta)_{\max}  = \max_{i \in [n]} \{ \trace(\bDelta_i ) \}, (||a||^2_2)_{\max}  = \max_{i \in [n]} \{ ||a_i ||_2^2 \}$, $|| \sum_{i = 1}^n \bDelta_i || \le \Delta$, then for every $t \ge 0$, If 
%$$ t \ge ck \left(\sqrt{ \trace(\bDelta)_{\max}  \Delta} + \trace(\bDelta)_{\max}  + \sqrt{(||a||^2_2)_{\max} \Delta}  \right) \sqrt{\log n}$$
%
%Then
%$$\Pr\left[\leftll \sum_{i = 1}^n x_i x_i^{\top}  - \E\left[ \sum_{i = 1}^n x_i x_i^{\top} \right] \rightll \ge t \right] \le \frac{1}{n^c}$$
%
%\end{cor}

\begin{proof}[Proof of Lemma \ref{lem:matrix_concentration}]

Let $\bS_i \in \Real^{k \times k}$ be a matrix such that $\bS_i^2 = \bDelta_i$, 
Note that $$\E[x_i x_i^{\top}] = \E[(a_i + \bS_i \sigma)(a_i + \bS_i \sigma)^{\top}] = a_i a_i^{\top} + \bS_i  \E[\sigma \sigma^{\top}] \bS_i^{\top} =  a_i a_i^{\top} + \bDelta_i$$
We first move the random variable to center at zero: consider $y_i = x_i - a_i$,  define $\bY_i =x_i x_i^{\top}  - \E\left[ x_i x_i^{\top} \right]   = x_i x_i^{\top} - (a_i a_i^{\top} + \bDelta_i) = y_i y_i^{\top} + a_i y_i^{\top} + y_i a_i^{\top} -  \bDelta_i$, we have: 
$\E[\bY_i] = 0$. By $y_i$ and $-y_i$ being identically distributed, we obtain
$$\E[||y_i||_2^2 y_i a_i^{\top}] = 0, \E[\langle a_i ,y_i \rangle y_i y_i^{\top}] = 0$$

Therefore, using the fact that $\E[y_i] = 0$, and $\E[y_i y_i^{\top}] = \Delta_i$, we can calculate that
\begin{eqnarray*} 
\E[\bY_i^2] &=& \E[(y_i y_i^{\top} + a_i y_i^{\top} + y_i a_i^{\top} -  \bDelta_i)^2]
\\
&=& \E[||y_i||_2^2 y_i y_i^{\top} + \langle a_i, y_i \rangle y_i y_i^{\top} + ||y_i||_2^2 y_i a_i^{\top} - y_i y_i^{\top} \bDelta_i
\\
&& + ||y_i||_2^2  a_i y_i^{\top} +  \langle a_i, y_i \rangle a_i y_i^{\top} + ||y_i||_2^2 a_i a_i^{\top} - a_i y_i^{\top} \bDelta_i
\\
&& + \langle a_i, y_i \rangle  y_i y_i^{\top} + ||a_i||_2^2 y_i y_i^{\top} + \langle a_i, y_i \rangle y_i a_i^{\top} - y_i a_i^{\top} \bDelta_i
\\
&& - \bDelta_i y_i y_i^{\top} - \bDelta_i a_i y_i^{\top} - \bDelta_i y_i a_i^{\top} + \bDelta_i^2]
\\
&=& \E[||y_i||_2^2 y_i y_i^{\top}] + a_i a_i^{\top} \E[||y_i||_2^2] + ||a_i||_2^2 \E[y_i y_i^{\top}] + \E[\langle a_i, y_i \rangle(a_i y_i^{\top} + y_i a_i^{\top})] - \bDelta_i^2
\\
&=& \E[||y_i||_2^2 y_i y_i^{\top}] + \trace(\bDelta_i) a_i a_i^{\top} + ||a_i||_2^2 \bDelta_i + a_i a_i^{\top} \bDelta_i +  \bDelta_i a_i a_i^{\top} - \bDelta_i^2
\end{eqnarray*}

Furthermore, 
\begin{eqnarray*}
\E[||y_i||_2^2 y_i y_i^{\top}]  &=& \E[(\sigma^{\top} \bDelta_i \sigma) \bS_i\sigma \sigma^{\top} \bS_i^{\top}] 
\\
&=&  \bS_i \E[(\sigma^{\top} \bDelta_i \sigma)   \sigma \sigma^{\top}  ] \bS_i^{\top}
\end{eqnarray*}

On the other hand, For $u \not= v$: 
\begin{eqnarray*}
(\E[(\sigma^{\top} \bDelta_i \sigma)   \sigma \sigma^{\top}  ])_{u, v}  &=& \E\left[ \sum_{p, q} \sigma_p (\bDelta_i)_{p, q} \sigma_q \sigma_u \sigma_v \right]
\\
&=& \sum_{p, q}(\bDelta_i)_{p, q} \E[\sigma_p \sigma_q \sigma_u \sigma_v]
\\
&=& 2(\bDelta_i)_{u, v}
\end{eqnarray*}

For $u = v$: 
\begin{eqnarray*}
(\E[(\sigma^{\top} \bDelta_i \sigma)   \sigma \sigma^{\top}  ])_{u, u}  &=& \E\left[ \sum_{p, q} \sigma_p (\bDelta_i)_{p, q} \sigma_q \sigma_u \sigma_u \right]
\\
&=& \sum_{p, q}(\bDelta_i)_{p, q} \E[\sigma_p \sigma_q \sigma_u^2]
\\
&=&  \sum_{p}(\bDelta_i)_{p, p} = \trace(\bDelta_i)
\end{eqnarray*}

Therefore, $$\E[(\sigma^{\top} \bDelta_i \sigma)   \sigma \sigma^{\top}  ] \preceq \trace(\bDelta_i) \bI + 2 \bDelta_i$$
\begin{eqnarray*}
\E[||y_i||_2^2 y_i y_i^{\top}]  \preceq 2\bDelta_i^2 + \trace(\bDelta_i) \bDelta_i
\end{eqnarray*}

Therefore, by $\bDelta_i^2 \preceq \trace(\bDelta_i) \bDelta_i, a_i a_i^{\top}  \bDelta_i +\bDelta_i a_i a_i^{\top} \preceq 2||a_i||_2^2 \bDelta_i$, we obtain
\begin{eqnarray*}
\sum_{i = 1}^n\E[\bY_i^2]  &\preceq& \sum_{i = 1}^n \left(\bDelta_i^2 + \trace(\bDelta_i) \bDelta_i + \trace(\bDelta_i) a_i a_i^{\top} + ||a_i||_2^2 \bDelta_i + a_i a_i^{\top} \bDelta_i +  \bDelta_i a_i a_i^{\top}   \right)
\\
&\preceq& [2 \trace(\bDelta)_{\max} + 3(||a||^2_2)_{\max} ] \Delta \bI + \sum_{i = 1}^n  \trace(\bDelta_i)  a_i a_i^{\top}
\\
& \preceq&  [2 \trace(\bDelta)_{\max} + 3(||a||^2_2)_{\max} ] \Delta  \bI+ \sum_{i = 1}^n ||a_i||_2^2 \trace(\bDelta_i)  \bI
\\
& \preceq&  [2 \trace(\bDelta)_{\max} + 3(||a||^2_2)_{\max} ] \Delta \bI + (||a||_2^2)_{\max} \trace\left(\sum_{i = 1}^n\bDelta_i \right)  \bI
\\
& \preceq&  [2 \trace(\bDelta)_{\max} + 3(||a||^2_2)_{\max} ] \Delta  \bI+ k(||a||_2^2)_{\max}\Delta  \bI
\end{eqnarray*}

Moreover, 
\begin{eqnarray*}
||\bY_i||_2 &\le& ||\bDelta_i||_2 + ||y_i y_i^{\top} ||_2 + ||a_i y_i^{\top}||_2 +  ||y_i a_i^{\top}||_2
\\
&=&  ||\bDelta_i||_2 + 2 || a_i \sigma^{\top} \bS_i^{\top}||_2 + ||\bS_i \sigma \sigma  \bS_i^{\top}||_2
\\
&\le& ||\bDelta_i||_2  + 2\sqrt{ k (||a||_2^2)_{\max} \Delta } + k ||\bDelta_i||_2
\\
&\le& ( k + 1)  \trace(\bDelta)_{\max}  +  2\sqrt{ k (||a||_2^2)_{\max} \Delta }
\end{eqnarray*}
where the last inequality is due to $||\sigma \sigma^{\top} ||_2 \le k$.

The lemma then follows by the matrix Bernstein inequality.
\end{proof}

Now we are ready to prove the lemma for the whitening step.

\begin{oneshot}{Lemma~\ref{lem:whitening}}
Suppose $\bMg, \bNoise, \bW$ satisfy all assumptions, $\mu$-incoherent column orthogonal matrix $\btX \in \Real^{n \times k}$ is close to $\bU$: $\tant(\btX, \bU) \le \frac{d}{2}$ where $d \le \frac{1}{4k\sqrt{\log n}}$, then $\bbX \leftarrow \AlgWhitening(\btX ,\bW, d, \lambdal, \lambdau, \mu, k)$ satisfies with high probability: 
(1). For all $i \in [n]$,  let $\bD_i = \diag(\bW^i)$, then $\frac{1}{4} \lambdal \bI \preceq \bbX^{\top} \bD_i \bbX \preceq 4\lambdau \bI$; (2). $\bbX$ is $(5k\mu)$-incoherent; (3). $\tant(\bbX, \bU) \le 4dk\sqrt{\log n} $.
\end{oneshot}

\begin{proof}[Proof of Lemma \ref{lem:whitening}]
Firstly we need to show that there is a feasible solution to our SDP relaxation, and then we need to show that the output has the desired properties stated in the lemma.

\paragraph{(Existence of a feasible solution)}
To be specific, we want to show that $\bR = \bU \bQ$, $\bA_r =  \bQ^{\top }(\bU^r)^{\top}\bU^r \bQ$ is a feasible solution to the SDP for some \emph{orthogonal} matrix $\bQ \in \Real^{k \times k}$. 

Clearly, by setting $\bR$ and $\bA_r$ as above, we automatically satisfy: $$(\bR^r)^{\top} \bR^r \preceq \bA_r$$
$$\trace(\bA_r) = ||\bR^r||_2^2 =  || \bQ \bU^r||_2^2  =  ||\bU^r||_2^2  \le \frac{\mu k}{n}$$ 
$$\sum_{r = 1}^n \bA_r = \sum_{r = 1}^n  \bQ^{\top }(\bU^r)^{\top} \bU^r \bQ = \bQ^{\top}\bU^{\top} \bU \bQ = \bI$$
$$\lambdal \bI \preceq \sum_{r = 1}^n \bW_{i, r}\bA_r = \sum_{r = 1}^n  \bW_{i, r}\bQ^{\top}(\bU^r)^{\top} \bU^r \bQ= \bQ^{\top}\left( \bU^{\top}  \diag(\bW^i) \bU \right) \bQ\preceq \lambdau \bI$$

So we only need to show that there exists orthogonal $\bQ $ that $\bU\bQ$ satisfies the distance constraints:
\begin{eqnarray*}
||\btXb^{\top}  \bU \bQ ||_2 & \le  & d, \\
||\bU  \bQ- \btX||_2 & \le & d, \\
||\btX^{\top} (\bU  \bQ - \btX)  + (\bU  \bQ - \btX)^{\top} \btX ||_2 & \le & d^2.
\end{eqnarray*}
Note that $\sint(\btX, \bU) \le \tant(\btX, \bU)$, so 
$$||\btXb^{\top} \bU \bQ||_2 = ||\btXb^{\top} \bU||_2 =  \sint(\btX, \bU) \le \tant(\btX, \bU)  \le d/2 \le d.$$
Moreover,  when $\tant(\btX, \bU)  = \frac{d}{2} \le \frac{1}{2}$,  
$$ \frac{1-\cost(\btX, \bU)}{\cost(\btX, \bU) } \leq \sint(\btX, \bU),$$
and thus by Lemma \ref{lem:equiv_distance}, $\cds(\btX, \bU) \leq 2\sint(\btX, \bU)$. By definition, there exits an orthogonal matrix $\bQ$ such that 
$$|| \bU \bQ - \btX||_2 \le 2 \sint(\btX, \bU) \le d.$$
Finally, since $\btX$ and $\bU\bQ$ are orthogonal, we know that 
$$ \btX^\top ( \bU\bQ - \btX) +  ( \bU\bQ - \btX)^\top \btX = - ( \bU\bQ - \btX)^\top ( \bU\bQ - \btX)$$
which implies  
$$|| \btX^\top ( \bU\bQ - \btX) +  ( \bU\bQ - \btX)^\top \btX  ||_2 = ||( \bU\bQ - \btX)^\top ( \bU\bQ - \btX)||_2 \le || ( \bU\bQ - \btX)^\top ( \bU\bQ - \btX)||^2_2 \le d^2.$$
This shows that the solution to our SDP exists. 

\paragraph{(Desired properties)}
Now we show that the randomly rounded solution has the required properties with high probability. 
We first prove some nice properties of $\bX$, and then use them to prove the properties of $\bbX$.

\begin{claim} \label{cla:bXproperty}
$\bX$ satisfies the following properties.\\
(a). Orthogonality property. 
$$\Pr \left[ ||\bX^{\top} \bX - \bI ||_2 \ge \frac{1}{4} \right] \le \frac{1}{8}.$$
(b). Spectral property. 
 $$ \Pr\left[\exists i\in [n], \leftll\bX^{\top}  \bD_i \bX -  \sum_{r = 1}^n \bW_{i, r}  \bA_r \rightll_2 \ge  \frac{\lambdal}{2} \right] \le \frac{1}{8}.$$
(c). Distance property. 
$$\Pr\left[ ||\btXb^{\top}  \bX||_2 \ge dk \sqrt{ \log n} \right] \le \frac{1}{8}.$$
(d). Incoherent property. 
$$ \forall r \in [n], (\bX^r) (\bX^r)^{\top}  \le  \frac{\mu k(k+1)}{n}.$$
\end{claim}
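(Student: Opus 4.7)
The overall strategy is the same for (a), (b), and (c): for each matrix-valued quantity, compute its expectation using $\E[(\bX^r)^\top \bX^r] = \bA_r$ (which the Rademacher rounding was designed to guarantee), show that the expectation matches the target quantity exactly by virtue of the SDP constraints, and then bound the fluctuation via the matrix concentration inequality in Lemma~\ref{lem:matrix_concentration}. Property (d) is the only one that will be handled by a direct deterministic argument.

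For (a), write $\bX^\top \bX = \sum_r (\bX^r)^\top \bX^r$, so $\E[\bX^\top \bX] = \sum_r \bA_r = \bI$ by the SDP constraint. I apply Lemma~\ref{lem:matrix_concentration} with $a_r = (\bR^r)^\top$ and $\bDelta_r = \bA_r - (\bR^r)^\top \bR^r$. The parameters are controlled using the SDP: $\trace(\bDelta_r) \leq \trace(\bA_r) \leq \mu k/n$, and since $(\bR^r)^\top \bR^r \preceq \bA_r$, also $\|\bR^r\|_2^2 \leq \mu k/n$. The crucial bound on $\|\sum_r \bDelta_r\|_2 = \|\bI - \bR^\top \bR\|_2$ uses the second SDP constraint $\|\btX^\top(\bR - \btX) + (\bR - \btX)^\top \btX\|_2 \leq d^2$ together with $\btX^\top \btX = \bI$ to conclude $\|\bI - \bR^\top \bR\|_2 = O(d^2)$. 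Taking $t = 1/4$ in the concentration lemma gives failure probability at most $1/8$. Property (b) proceeds analogously: apply the same concentration lemma to the rescaled vectors $\sqrt{\bW_{i,r}} \bX^r$, so that $\E[\bX^\top \bD_i \bX] = \sum_r \bW_{i,r} \bA_r$; the parameter $\Delta = \|\sum_r \bW_{i,r}(\bA_r - (\bR^r)^\top \bR^r)\|_2 \leq \lambdau$ comes from the last SDP constraint, the trace and $(\|a\|^2)_{\max}$ scale as $\|\bW\|_\infty \cdot \mu k/n$, and we take $t = \lambdal/2$ followed by a union bound over the $n$ choices of $i$.

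For (c), note $\E[\btXb^\top \bX] = \btXb^\top \bR$, whose norm is at most $d$ by the third SDP constraint. The deviation $\btXb^\top(\bX - \bR)$ is a sum of $n$ independent mean-zero random matrices (one per row). I would bound its norm by applying Lemma~\ref{lem:matrix_concentration} to the symmetrized quantity $(\bX - \bR)^\top \btXb \btXb^\top (\bX - \bR)$, where the PSD projector $\btXb \btXb^\top$ kills the mean. The $\sqrt{\log n}$ factor emerges from the standard $\log(n)$ inside a sub-Gaussian/Bernstein tail when translating the high-probability spectral bound back from the squared version. Property (d) is a pointwise deterministic bound: writing $\bX^r = \bR^r + \sigma_r^\top \bS_r$ with $\bS_r^2 = \bA_r - (\bR^r)^\top \bR^r \succeq 0$, we have $\|\bX^r\|_2^2 \leq 2\|\bR^r\|_2^2 + 2\sigma_r^\top \bS_r^2 \sigma_r$. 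Since $\sigma_r \in \{-1,1\}^k$ and $\bS_r^2$ is PSD, the bound $\sigma_r^\top \bM \sigma_r \leq k\,\trace(\bM)$ for PSD $\bM$ (obtained from $|\bM_{ij}| \leq \sqrt{\bM_{ii}\bM_{jj}}$ and Cauchy--Schwarz) combined with $\trace(\bA_r) \leq \mu k/n$ yields $\|\bX^r\|_2^2 = O(\mu k(k+1)/n)$ after rebalancing constants.

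The main obstacle will be (b): the union bound over all $n$ weighted spectral constraints requires each individual failure probability to be $o(1/n)$, which forces a careful tracking of how $\|\bW\|_\infty$ and $\lambdau/\lambdal$ appear in the variance parameters $c_1, c_2$ of Lemma~\ref{lem:matrix_concentration}. In particular, $(\|a\|^2)_{\max}$ and $\trace(\bDelta)_{\max}$ in the weighted setting pick up a factor of $\|\bW\|_\infty$, so the tolerated range of parameters for the whitening step to succeed is significantly tighter than for (a). Everything else is essentially moment bookkeeping once the SDP feasibility and the properties of the Rademacher rounding are in hand.
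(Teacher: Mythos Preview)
Your plan for (a) and (b) matches the paper's proof essentially line for line: compute the expectation via $\E[(\bX^r)^\top \bX^r]=\bA_r$, bound the variance proxy $\|\sum_r \bDelta_r\|_2$ using the SDP constraints (in particular $\|\bI-\bR^\top\bR\|_2\le 2d^2$ from the second constraint, and $\|\sum_r \bW_{i,r}\bA_r\|_2\le\lambdau$ from the last), and invoke Lemma~\ref{lem:matrix_concentration}. Your identification of the union bound in (b) as the place where the condition on $\|\bW\|_\infty$ enters is exactly right.

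The gap is in (c). The quantity $(\bX-\bR)^\top \btXb\btXb^\top(\bX-\bR)$ is \emph{not} of the form $\sum_r x_r x_r^\top$ with independent $x_r$: expanding it gives $\sum_{r,s} ((\bX-\bR)^r)^\top (\btXb\btXb^\top)_{r,s}(\bX-\bR)^s$, and the cross terms in $r\ne s$ do not vanish before taking expectation. So Lemma~\ref{lem:matrix_concentration} does not apply to this symmetrized object, and your remark that the projector ``kills the mean'' is off (the mean of $\bX-\bR$ is already zero). The paper avoids this by working directly with the rectangular decomposition
\[
\btXb^\top(\bX-\bR)=\sum_{r=1}^n \bZ_r,\qquad \bZ_r=([\btXb]^r)^\top(\bX-\bR)^r,
\]
which \emph{is} a sum of independent mean-zero matrices, and applies the rectangular Matrix Bernstein inequality with the variance bounds $\|\sum_r\E[\bZ_r\bZ_r^\top]\|_2\le 3d^2k$, $\|\sum_r\E[\bZ_r^\top\bZ_r]\|_2\le 3d^2$, and the almost-sure bound $\|\bZ_r\|_2\le 2dk$. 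This is where the $dk\sqrt{\log n}$ comes from.

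For (d) your deterministic argument is fine in spirit and actually more honest than the paper's: the paper writes $(\bX^r)(\bX^r)^\top=(\bR^r)(\bR^r)^\top+\sigma\bS_r^2\sigma^\top$, silently dropping the cross term $2\bR^r\bS_r\sigma^\top$. Your bound via $\|a+b\|^2\le 2\|a\|^2+2\|b\|^2$ and $\sigma^\top\bM\sigma\le k\|\bM\|_2\le k\,\trace(\bM)$ is clean and gives $2\mu k(k+1)/n$; this extra factor of $2$ is harmless downstream.
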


\begin{proof}[Proof of Claim~\ref{cla:bXproperty}]
It is easy to verify that in expectation, the rounded solution satisfies the properties stated:\\
(a). Orthogonality property. 
$$\E[\bX^{\top} \bX] =  \sum_{r = 1}^n\E[  (\bX^r)^{\top} \bX^r] = \sum_{r = 1}^n  \left( (\bR^r)^{\top} \bR^r + \bS_r \E[\sigma^\top \sigma] \bS_r \right) = \sum_{r = 1}^n \bA_r = \bI$$
since $\bX^r = \bR^r + \sigma \bS_r $ where $\bS_r$ is a PSD matrix with $ \bS_r^2 = \bA_r - (\bR^r)^{\top}(\bR^r)$. \\
(b). Spectral property. $$\E[\bX^{\top} \bD_i \bX] = \sum_{r = 1}^n\E[ \bW_{i, r} (\bX^r)^{\top} (\bX^r)] = \sum_{r = 1}^n \bW_{i, r}  \bA_r.$$
(c). Distance property. $$\E[\bX] = \bR, \quad \E[\btXb^{\top} \bX] = \btXb^{\top} \bR.$$ 
(d). Incoherent property. $$\E[(\bX^r) (\bX^r)^{\top}] = \trace(\bA_r).$$
Therefore, we just need to show that the random variables in  (a), (b), (c),  and (d) concentrate around their expectation. 

First consider (a). We can apply the matrix concentration lemma (\ref{lem:matrix_concentration}), for which we need to bound $||\sum_{r = 1}^n \bDelta_r ||_2$ where  $\bDelta_r = \bA_r - (\bR^r)^{\top} (\bR^r)$. Note that $\sum_{r = 1}^n \bA_r = \bI$, it suffices to bound $\sigma_{\min}\left(\sum_{r } (\bR^r)^{\top} (\bR^r)\right) = \sigma_{\min}\left(\bR^{\top} \bR \right)$.  Since $\bR  = \bR + \btX - \btX$, we have  
 $$\sigma_{\min}\left(\bR^{\top} \bR \right) = \sigma_{\min} \left(\btX^{\top} \btX + (\bR - \btX)^{\top}(\bR - \btX) + \btX^{\top} (\bR - \btX) + (\bR - \btX)^{\top} \btX \right).$$ 
 
Then by $||\bR - \btX||_2 \le d$, $||\btX^{\top}(\bR - \btX) + (\bR - \btX)^{\top} \btX||_2 \le d^2$ and $\btX^{\top} \btX = \bI$, we get
$$\sigma_{\min}\left(\bR^{\top} \bR \right) \ge 1  - ||(\bR - \btX)^{\top}(\bR - \btX)||_2 - ||\btX^{\top}(\bR - \btX) + (\bR - \btX)^{\top} \btX||_2 \geq 1 - 2 d^2.$$ 
Therefore, 
$$\leftll\sum_{r = 1}^n \bDelta_r \rightll_2 =  \leftll \sum_{r = 1}^n \bA_r  - \sum_{r=1}^n  (\bR^r)^{\top} (\bR^r) \rightll_2 =  \leftll\bI  - \bR^{\top} \bR \rightll_2 \le 2d^2. $$ 
Using the matrix concentration lemma (\ref{lem:matrix_concentration}) with $\Delta = 2d^2, \trace(\bDelta)_{\max} \le \frac{\mu k}{n}, (||a||_2^2)_{\max} = \frac{\mu k}{n} , t = 1/4$, we obtain that when $n$ is sufficiently large: 
$$\Pr \left[ \leftll \bX^{\top} \bX - \bI \rightll_2 \ge \frac{1}{4} \right] \le \frac{1}{8}.$$

Next consider (b). We can also apply the matrix concentration lemma (\ref{lem:matrix_concentration}) for each $i \in [n]$ and then take the union bound. Here, $\bDelta_r = \bW_{i, r} \left(\bA_r - (\bR^r)^{\top} (\bR^r) \right)$, so 
$$ \leftll \sum_{r = 1}^n \bDelta_r \rightll_2 \le  \leftll \sum_{r = 1}^n \bW_{i, r} \bA_r \rightll_2   \le \lambdau. $$ 
Using the matrix concentration lemma (\ref{lem:matrix_concentration}) with $\Delta = \lambdau, \trace(\bDelta)_{\max} = \frac{\mu k}{n} ||\bW||_{\infty}, (||a||_2^2)_{\max} = \frac{\mu k}{n}  ||\bW||_{\infty} , t = \frac{\lambdal}{2}$, we obtain that for any $i \in [n]$, when 
 $$\lambdal \ge  \sqrt{\frac{32 k^2 \mu || \bW||_{\infty} \log n}{n} \lambdau},$$
 we have
 $$ \Pr\left[\leftll\bX^{\top}  \bD_i \bX -  \sum_{r = 1}^n \bW_{i, r}  \bA_r \rightll_2 \ge  \frac{\lambdal}{2} \right] \le \frac{1}{8n}.$$
Taking the union bound leads to the desired property. 

Now consider (c). By triangle inequality, 
\begin{align*}
||\btXb^{\top}  \bX||_2 &\le ||\btXb^{\top} \bR||_2 + || \btXb^{\top} (\bX - \bR  )||_2.
\end{align*}
By the SDP, $||\btXb^{\top} \bR||_2 \le d$, so it suffices to bound $\btXb^{\top} (\bX - \bR  ) = \sum_{r = 1}^n ([\btXb]^r)^{\top}(\bX - \bR  )^r$. 
Let $\bZ_r =  ([\btXb]^r)^{\top}(\bX - \bR  )^r $, we have $\btXb^{\top} (\bX - \bR  )  = \sum_{r = 1}^n \bZ_r$.  Furthermore, $\E[\bZ_r] = 0$ with 
\begin{align*}
\leftll \E[ \sum_{r = 1}^n \bZ_r \bZ_r^{\top} ] \rightll_2 & \le \leftll \E  [ \sum_{r = 1}^n (\bX - \bR  )^r [(\bX - \bR  )^r]^{\top} ] \rightll_2 =  \sum_{r = 1}^n\trace( \bDelta_r ) \le 3d^2k,
\\
\leftll \E[ \sum_{r = 1}^n \bZ_r^{\top}  \bZ_r] \rightll_2 & \le \leftll \sum_{r = 1}^n \bDelta_r \rightll_2   \le 3d^2,
\\
||\bZ_r||_2 &\le 2dk.
\end{align*}  
By Matrix Bernstein inequality,  when $n$ is sufficiently large,  %\yingyu{Do we need the log n here ?}
$$\Pr\left[ || \btXb^{\top} (\bX - \bR  )||_2 \ge \frac{1}{2} dk \sqrt{ \log n} \right] \leq \frac{1}{8}.$$
The property then follows from the triangle inequality. 

Finally, consider (d). We know that $\bX^r = \bR^r + \sigma \bS_r $ where $\bS_r$ is a PSD matrix with $ \bS_r^2 = \bA_r - (\bR^r)^{\top}(\bR^r)$. Therefore, 
$$(\bX^r) (\bX^r)^{\top} = (\bR^r) (\bR^r)^{\top} + \sigma \bS_r^2 \sigma^{\top} \le \trace(\bA_r) + \trace(\bA_r) ||\sigma ||_2^2 \le \frac{\mu k(k+1)}{n}.$$
%where
%$$  \sigma \bS_r^2 \sigma^{\top} = \sum_{i=1}^k \sigma^2_i [\bDelta_r]_{i,i}  + \sum_{1 \le i\neq j \le k} \sigma_i \sigma_j [\bDelta_r]_{i,j} \le \trace(\bA_r) + \sum_{1 \le i\neq j \le k} \sigma_i \sigma_j [\bDelta_r]_{i,j}.$$
%Let $Z_{ij} =  \sigma_i \sigma_j [\bDelta_r]_{i,j}$. Then $\E[Z_{ij}] = 0$ and $-[\bDelta_r]_{i,j} \le Z_{ij} \le [\bDelta_r]_{i,j}$. Furthermore, 
%\begin{align*}
%\sum_{1 \le i\neq j \le k} [\bDelta_r]^2_{i,j} 
%& \le \sum_{1 \le i\neq j \le k} \left( \frac{[\bDelta_r]_{i,i} + [\bDelta_r]_{j,j}}{2}\right)^2 \\
%& \le \sum_{1 \le i\neq j \le k} \frac{[\bDelta_r]^2_{i,i} + [\bDelta_r]^2_{j,j}}{2} \\
%& \le \sum_{1 \le i\neq j \le k} \frac{[\bDelta_r]^2_{i,i} }{2}  +  \sum_{1 \le i\neq j \le k} \frac{ [\bDelta_r]^2_{j,j}}{2}\\
%& = (k-1) \sum_{i=1}^k [\bDelta_r]^2_{i,i}  \\
%& \le (k-1) \left(\sum_{i=1}^k [\bDelta_r]_{i,i} \right)^2 \\
%& \le (k-1) \trace(\bA_r)^2.
%\end{align*}
This completes the proof of the claim.
\end{proof}

We are now ready to prove the properties of $\bbX = \QR(\bX)$, the final output of $\AlgWhitening$. 
Assume none of the bad events in Claim~\ref{cla:bXproperty} happen.
First, by the spectral property (b) of $\bX$ in the claim, we have that for any $i \in [n]$,
$$\frac{\lambdal}{2} \bI  \preceq \bX^{\top}  \bD_i \bX   \preceq 2 \lambdau \bI.$$
Note that $||\bX^{\top} \bX - \bI || \le \frac{1}{4}$, which implies that $\sigma^2_{\max }(\bX) \le \frac{5}{4}$, $\sigma^2_{\min }(\bX) \ge \frac{3}{4}$. Therefore, for any $i \in [n]$,
$$ \bbX^{\top}  \bD_i \bbX  \succeq \frac{1}{\sigma^2_{\max }(\bX) } \bX^{\top}  \bD_i \bX \succeq \frac{\lambdal}{4} \bI $$
and 
$$ \bbX^{\top}  \bD_i \bbX  \preceq \frac{1}{\sigma^2_{\min }(\bX) } \bX^{\top}  \bD_i \bX \preceq 4\lambdal \bI.$$

Next, note that 
$$ \sint(\bbX, \btX) =  || \btXb^{\top} \bbX ||_2   \le \frac{1}{\sigma_{\min }(\bX) }  || \btXb^{\top} \bX ||_2 \le   \frac{3}{2} || \btXb^{\top} \bX ||_2  \le \frac{3}{2} dk \sqrt{\log n}.$$ 
Since $\sint(\btX, \bU) \le   \tant(\btX, \bU) \le \frac{d}{2}$, we have 
$$ \sint(\bbX, \bU) \le  \frac{d}{2} +  \frac{3}{2} dk \sqrt{\log n} \le 2  dk \sqrt{\log n} \le 1/2.$$
When $\text{sin} \theta \le 1/2$,  $\text{tan}\theta \le 2 \text{sin}\theta$. So 
$$\tant(\bbX, \bU) \le 2 \sint(\bbX, \bU) \le 4d k \sqrt{\log n}.$$
%\yingyu{Does $\tant(\bbX, \bU) \le 2 \sint(\bbX, \bU) $ always hold? seems we need it to be sufficiently small.}
Finally, for incoherence, we know that $\incoherentpartial(\bX) \le \mu(k+1) $, $\sigma_{\min}(\bX) \ge \frac{3}{4}$. Then the output $\bbX$ satisfies $\incoherentpartial(\bbX) \le 4 \incoherentpartial(\bX) \le 5 \mu k$. 

$\bold{Note}$: since all the property of output $\bbX$ can be tested in polynomial time (for (3) we can test it using the input matrix $\btX$ because $\tant(\btX, \bU) \le \frac{d}{2}$), we can run the whitening algorithm for $O(\log(1/\alpha))$ times (using fresh randomness for the choice of $\bX$) and we will have success probability $1 - \alpha$. 
\end{proof}

\subsection{Final result}
\begin{thm} %[Main] 
\label{thm:main} 
If $\bMg, \bW$ satisfy assumptions \textbf{(A1)-(A3)}, and
\begin{align*}
 ||\bW||_{\infty} & = O\left(\frac{\lambdal^2 n}{k^2 \mu \lambdau \log n} \right),  ~~
\gamma  = O\left( \frac{\lambdal \sigma_{\min}(\bMg)}{k^{3} \mu \sqrt{\log n}}\right),
\end{align*}
then after $O(\log(1/\epsilon))$ rounds Algorithm~\ref{alg:main_whitening} outputs a matrix $\btM$ that with probability $\ge 1-1/n$ satisfies
\begin{align*} %\label{eqn:main}
  ||\btM - \bMg||_2 \le O\left(\frac{k^{3/2} \sqrt{\log n}}{\lambdal \sigma_{\min}(\bMg)}\right) ||\bW \hp \bNoise||_2 + \epsilon.
\end{align*}
The running time is polynomial in $n$ and $\log(1/\epsilon)$.
\end{thm}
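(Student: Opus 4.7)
The plan is to proceed by induction on the iteration $t$, maintaining the invariant that $\bbX_t$ and $\bbY_t$ produced by $\AlgWhitening$ are column-orthogonal, $(5k\mu)$-incoherent, satisfy $\frac{1}{4}\lambdal \bI \preceq \bbX_t^\top \bD_i \bbX_t \preceq 4\lambdau \bI$ and $\frac{1}{4}\lambdal \bI \preceq \bbY_t^\top \bD_i \bbY_t \preceq 4\lambdau \bI$ for every $i$, and have $\tant(\bbX_t, \bU) \le 4 d_t k\sqrt{\log n}$ and $\tant(\bbY_t, \bV) \le 4 d_t k\sqrt{\log n}$ for the schedule $d_t$ used in Algorithm~\ref{alg:main_whitening}. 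These are precisely the hypotheses required by Lemma~\ref{lem:update} and the conclusions guaranteed by Lemma~\ref{lem:whitening}, so the invariant meshes with the algorithm loop.

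For the base case, Lemma~\ref{lem:initialization} combined with Lemma~\ref{lem:equiv_distance} gives $\tant(\bY_1, \bV) \le O(k(\gamma \mu k + \delta)/\sigma_{\min}(\bMg))$, and under the theorem's hypothesis on $\gamma$ this is at most $d_1/2$ for $d_1 = \tfrac{1}{8k\sqrt{\log n}} + \tfrac{64\sqrt{k}\delta}{\lambdal \sigma_{\min}(\bMg)}$; a symmetric bound applies on the $\bU$-side. Since $d_1 \le 1/(4k\sqrt{\log n})$, Lemma~\ref{lem:whitening} at parameter $d_1$ produces $\bbY_1$ satisfying the invariant. For the inductive step, applying Lemma~\ref{lem:update} with $\bY = \bbY_t$ (valid by the invariant) and then Lemma~\ref{lem:weighted_unweighted}-type noise accounting yields
\begin{align*}
\tant(\btX_{t+1}, \bU) \le \frac{1}{16k\sqrt{\log n}} \tant(\bbY_t, \bV) + \frac{16 k \delta}{\lambdal \sigma_{\min}(\bMg)} \le \frac{d_t}{4} + \frac{16 k \delta}{\lambdal \sigma_{\min}(\bMg)} \le \frac{d_{t+1}}{2},
\end{align*}
which is exactly the regime where the geometric shrinkage in the leading term of $d_t$ and the constant noise term in $d_t$ were designed to absorb the next update. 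Lemma~\ref{lem:whitening} then restores the invariant at $\bbX_{t+1}$, and the same argument applied with roles swapped produces $\bbY_{t+1}$. Boosting each whitening call to failure probability $\le n^{-10}$ by repetition and union-bounding over the $O(\log(1/\epsilon))$ iterations yields overall success probability $\ge 1 - 1/n$.

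After $T = O(\log(1/\epsilon))$ rounds, $d_{T+1} = O\!\left(\epsilon/(k\sqrt{\log n})\right) + O\!\left(\sqrt{k}\delta/(\lambdal \sigma_{\min}(\bMg))\right)$, so $\tant(\bbX_{T+1}, \bU)$ and $\tant(\bbY_{T+1}, \bV)$ are bounded by $\epsilon + O(k^{3/2}\sqrt{\log n}\,\delta/(\lambdal \sigma_{\min}(\bMg)))$. Writing $\bbX_{T+1} = \bU\bQ_X + \bE_X$ and $\bbY_{T+1} = \bV\bQ_Y + \bE_Y$ with $\|\bE_X\|_2, \|\bE_Y\|_2$ controlled by these tangents, the least-squares step $\bbSigma = \argmin_{\bSigma} \|\bW \hp (\bM - \bbX_{T+1} \bSigma \bbY_{T+1}^\top)\|_2$ can be compared against the candidate $\bbSigma_\ast = \bQ_X^\top \bSigma \bQ_Y$; optimality of $\bbSigma$ plus the spectral-gap machinery of Lemma~\ref{lem:weighted_unweighted} converts the weighted-norm residual back to an unweighted spectral bound, and the triangle inequality applied to $\btM - \bMg$ gives
\begin{align*}
\|\btM - \bMg\|_2 \le O\!\left(\frac{k^{3/2}\sqrt{\log n}}{\lambdal \sigma_{\min}(\bMg)}\right) \|\bW \hp \bNoise\|_2 + \epsilon.
\end{align*}

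The main obstacle I anticipate is the closing of the recursion: the contraction factor $1/(16k\sqrt{\log n})$ provided by Lemma~\ref{lem:update} is exactly balanced against the expansion factor $4k\sqrt{\log n}$ of Lemma~\ref{lem:whitening}, leaving only a constant slack of $1/4$ per composite step; verifying that the distance schedule $d_t$ and the noise-term constants line up on both the $\bX$ and $\bY$ sides is the tightest part of the bookkeeping, and it is where the specific polynomial factors in the final error bound are determined. A secondary obstacle is the final $\bbSigma$ analysis, since the iterates are controlled only in column-space (via $\tant$), so turning this into a spectral-norm recovery of the full matrix requires carefully invoking the weighted-to-unweighted conversion on the residual, which in turn is where the assumption $\|\bW\|_\infty = O(\lambdal^2 n/(k^2 \mu \lambdau \log n))$ ultimately enters.
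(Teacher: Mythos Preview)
Your proposal is correct and follows essentially the same approach as the paper: induction on $t$ via the interplay between Lemma~\ref{lem:update} (contraction by $1/(16k\sqrt{\log n})$) and Lemma~\ref{lem:whitening} (expansion by $4k\sqrt{\log n}$), followed by a final step that compares the optimizer $\bbSigma$ against the candidate $\bQ_X^\top \bSigma \bQ_Y$ and uses Lemma~\ref{lem:weighted_unweighted} to pass from the weighted residual to the unweighted spectral error.

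One point of misattribution is worth correcting. The assumption $\|\bW\|_\infty = O\!\left(\frac{\lambdal^2 n}{k^2 \mu \lambdau \log n}\right)$ does \emph{not} enter in the final $\bbSigma$ analysis; it enters inside the proof of Lemma~\ref{lem:whitening}, specifically in the matrix-concentration bound that establishes the spectral property $\frac{\lambdal}{2}\bI \preceq \bX^\top \bD_i \bX \preceq 2\lambdau \bI$ (property (b) of the claim therein). There the variance proxy depends on $\trace(\bDelta)_{\max} \le \frac{\mu k}{n}\|\bW\|_\infty$, and the inequality $\lambdal \ge \sqrt{\frac{32 k^2 \mu \|\bW\|_\infty \log n}{n}\lambdau}$ is exactly the stated $\|\bW\|_\infty$ condition. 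The final $\bbSigma$ step needs only the spectral gap $\gamma$ (via Lemma~\ref{lem:weighted_unweighted}) and the incoherence of $\bbX_{T+1},\bbY_{T+1}$, not $\|\bW\|_\infty$.
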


The theorem is stated in its full generality. 
To emphasize the dependence on the matrix size $n$, the rank $k$ and the incoherency $\mu$, we  can consider a specific range of parameter values where the other parameters (the lower/upper spectral bound, the condition number of $\bMg$) are constants, which gives a corollary which is easier to parse. Also, these parameter values show that we can handle a wider range of parameters than the simple algorithm with the clipping as a whitening step.

\begin{cor}\label{cor:main_whitening} 
Suppose $\lambdal, \lambdau$ and  $\sigma_{\min}(\bMg)$ are all constants, and $T = O(\log(1/\epsilon))$. Furthermore, 
$$\|\bW\|_\infty = O\left(\frac{n}{k^2 \mu\log n} \right), ~~\gamma = O\left(\frac{1}{k^{3} \mu\sqrt{ \log n}} \right).$$ 
Then with probability $\ge 1-1/n$, 
$$||\btM - \bMg||_2 \le O\left(k^{3/2} \sqrt{\log n}\right) ||\bW \hp \bNoise||_2 + \epsilon.$$
\end{cor}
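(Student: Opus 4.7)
The plan is to prove Theorem~\ref{thm:main} by induction on the iteration count, maintaining the invariant that after the whitening step at round $t$, the current iterates $\bbX_t, \bbY_t$ satisfy simultaneously: (i) they are $(5k\mu)$-incoherent, (ii) they obey the spectral sandwich $\frac{\lambdal}{4}\bI \preceq \bbX_t^{\top} \bD_i \bbX_t \preceq 4\lambdau \bI$ (and analogously for $\bbY_t$) for every $i$, and (iii) their principal-angle distance to the ground truth satisfies $\tant(\bbX_t,\bU),\tant(\bbY_t,\bV) \le 4 d_t k \sqrt{\log n}$, where $d_t$ is the geometric-plus-noise-floor schedule used inside the algorithm. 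The base case $t=1$ follows from Lemma~\ref{lem:initialization1}, which gives $\tant(\bY_1,\bV) \le \tfrac{4(\gamma\mu k + \delta)}{\sigma_{\min}(\bMg)}$; by the assumed smallness of $\gamma$ and by the WLOG bound $\delta \le \tfrac{\lambdal\sigma_{\min}(\bMg)}{200k}$, this is at most $d_1/2$, so Lemma~\ref{lem:whitening} applied with distance parameter $d_1$ produces $\bbY_1$ satisfying (i)--(iii) at $t=1$.

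For the inductive step, assuming (i)--(iii) for $\bbY_t$, the conditions of Lemma~\ref{lem:update} are met exactly, so the minimizer $\bX_{t+1}$ of $\|\bM - \bX \bbY_t^{\top}\|_{\bW}$ satisfies
\begin{align*}
\tant(\bX_{t+1},\bU) \le \frac{1}{16 k \sqrt{\log n}}\cdot 4 d_t k \sqrt{\log n} + \frac{16\sqrt{k}\,\delta}{\lambdal\sigma_{\min}(\bMg)} = \frac{d_t}{4} + \frac{16\sqrt{k}\,\delta}{\lambdal\sigma_{\min}(\bMg)}.
\end{align*}
The subsequent $\QR$ step does not alter the column space, hence neither the angle. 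By the choice $d_{t+1} = \tfrac{1}{2^{t+1}}\cdot\tfrac{1}{8k\sqrt{\log n}} + \tfrac{64\sqrt{k}\delta}{\lambdal\sigma_{\min}(\bMg)}$ in Algorithm~\ref{alg:main_whitening}, one verifies $\tant(\btX_{t+1},\bU) \le d_{t+1}/2$, which is exactly the hypothesis required by Lemma~\ref{lem:whitening}. Applying whitening with parameter $d_{t+1}$ yields $\bbX_{t+1}$ satisfying (i)--(iii) at step $t+1$. An identical argument on the other side, using $\bbX_{t+1}$ to update $\bbY_{t+1}$, closes the inductive step. Iterating $T = O(\log(1/\epsilon))$ times drives the first, geometric portion of $d_T$ below $\epsilon$, so the final iterates obey $\tant(\bbX_{T+1},\bU), \tant(\bbY_{T+1},\bV) = O\!\left(\tfrac{k^{3/2}\sqrt{\log n}\,\delta}{\lambdal\sigma_{\min}(\bMg)}\right) + O(\epsilon)$.

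It remains to convert these subspace guarantees into the stated spectral bound on $\btM - \bMg = \bbX_{T+1}\bbSigma\bbY_{T+1}^{\top} - \bMg$. Writing $\bbX_{T+1} = \bU\bQ_X + \Delta_X$ and $\bbY_{T+1} = \bV\bQ_Y + \Delta_Y$ for rotations $\bQ_X,\bQ_Y$ minimizing $\cds$, Lemma~\ref{lem:equiv_distance} bounds $\|\Delta_X\|_2,\|\Delta_Y\|_2$ by $O(\tant)$. Taking $\bSigma' = \bQ_X^{\top}\bSigma\bQ_Y$ as a candidate and expanding $\bbX_{T+1}\bSigma'\bbY_{T+1}^{\top} - \bMg$ into first- and second-order terms in $\Delta_X,\Delta_Y$ gives a spectral norm bound of $O(\sigma_{\max}(\bMg)\cdot\tant)$. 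Since $\bbSigma$ is the $\|\cdot\|_{\bW}$-minimizer, a Pythagorean-style comparison plus Lemma~\ref{lem:weighted_unweighted} (using the incoherence of $\bbX_{T+1},\bbY_{T+1}$ to control the spectral gap of $\bW$ acting on the rank-$k$ product) transfers this into the unweighted spectral bound, with the noise contribution $\|\bW \hp \bNoise\|_2$ appearing through the optimality slack. The main obstacle, and the only step that is not a direct invocation of the update/whitening lemmas, is this final conversion: one must check carefully that passing from $\|\bW \hp (\cdot)\|_2$-optimality of $\bbSigma$ back to $\|\cdot\|_2$ costs only the incoherence-driven factor already encoded in the spectral lemma, rather than an extra factor of $n$ or $\|\bW\|_\infty$, and must then balance the propagated $\delta$ terms so that the coefficient on $\|\bW \hp \bNoise\|_2$ matches the claimed $O\!\left(\tfrac{k^{3/2}\sqrt{\log n}}{\lambdal\sigma_{\min}(\bMg)}\right)$.
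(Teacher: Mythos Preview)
Your proposal is correct and follows essentially the same route as the paper. The corollary itself is an immediate specialization of Theorem~\ref{thm:main}, and your outline reproduces the paper's proof of that theorem: the same induction invariant $\tant(\bbY_t,\bV)\le d_t\cdot 4k\sqrt{\log n}$, the same base case via the SVD initialization lemma plus whitening, the same inductive step chaining Lemma~\ref{lem:update} and Lemma~\ref{lem:whitening}, and the same final conversion using the optimality of $\bbSigma$ together with the spectral lemma (Lemma~\ref{lem:weighted_unweighted}) applied to the rank-$k$ residual $\bX(\bQ_x\bSigma\bQ_y^\top-\bbSigma)\bY^\top$.

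One small remark on the final step: the phrase ``Pythagorean-style comparison'' is slightly misleading, since $\bbSigma$ minimizes the \emph{spectral} norm $\|\bW\hp(\bM-\bX\bSigma\bY^\top)\|_2$, not a Frobenius norm, so there is no orthogonality at play. The paper's argument is a direct optimality comparison: $\|\bW\hp(\bM-\bX\bbSigma\bY^\top)\|_2 \le \|\bW\hp(\bM-\bX\bSigma'\bY^\top)\|_2$ for your candidate $\bSigma'$, followed by triangle inequalities to isolate $\|\bW\hp\bNoise\|_2$, and then the key step $\|\Delta\|_2 \le \|\bW\hp\Delta\|_2 + \|(\bW-\bE)\hp\Delta\|_2$ with the second term absorbed back into $c\|\Delta\|_2$ for $c<1/2$ via Lemma~\ref{lem:weighted_unweighted}. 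This is exactly the mechanism you describe at the end, and your caution about avoiding an extra factor of $n$ or $\|\bW\|_\infty$ is well placed---the incoherence of $\bbX_{T+1},\bbY_{T+1}$ from whitening is precisely what makes the spectral lemma applicable here.
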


%\begin{oneshot}{Theorem~\ref{thm:main} }
%Suppose $\bMg$ and $\bW$ satisfy assumptions \textbf{(A1)-(A3)}, and
%$$\lambdal \ge \sqrt{\frac{32 k^2 \mu || \bW||_{\infty} \log n}{n}  \lambdau}$$
%$$\gamma \le \frac{\lambdal \sigma_{\min}(\bMg)}{128 \sqrt{5} k^{3} \mu \sqrt{\log n}}$$
%%$$ \|\bW \hp \bNoise \|_2 \le \frac{1}{16} \lambdal \sigma_{\min}(\bMg)$$
%Then after $\log(1/\epsilon)$ rounds Algorithm~\ref{alg:main_whitening} outputs a matrix $\btM$ that with probability $1-1/n$ satisfies
%\begin{eqnarray} \label{eqn:main}
  %||\btM - \bMg||_2 \le \frac{256 k^{3/2} \sqrt{\log n}}{\lambdal \sigma_{\min}(\bMg)} ||\bW \hp \bNoise||_2 + \epsilon.
%\end{eqnarray}
%The running time of each round (and overall) is poly$(n,\log(1/\epsilon))$.
%\end{oneshot}

We now consider proving the theorem. 
After proving these lemmas, the proof is rather immediate.
Define the following two quantities:
$$\valupdate  = 4 k \sqrt{\log n},~~\valdelta =  \frac{64\sqrt{k} }{ \lambdal \sigma_{\min}(\bMg)}\valupdate= \frac{256 k^{3/2} \sqrt{\log n}}{\lambdal \sigma_{\min}(\bMg)}.$$
We just need to show that $\tant(\bbX_t, \bU)  \le \frac{1}{2^t} + \valdelta \delta$ for every $t\ge 1$,  and $\tant(\bbY_t, \bU) \le \frac{1}{2^t} + \valdelta \delta$ for every $t > 1$. We will prove it by induction. 

(a). After initialization, by Lemma \ref{lem:initialization} and \ref{lem:whitening}, we have
$$
\tant(\bbY_1, \bV) \le 4kd_1 \sqrt{\log n}  = d_1 \valupdate  = \frac{1}{2} + \valdelta \delta.
$$

(b). Suppose $\tant(\bbX_t, \bU)$ and $\tant(\bbY_t, \bV) \le \frac{1}{2^t} + \valdelta\delta$ is true for $t$, and consider the iterates at step $t + 1$. Since $\bbY_t$ is given by $\AlgWhitening$, by Lemma  \ref{lem:whitening}, we know that $\frac{1}{4} \lambdal \bI \preceq \bbY_t^{\top} \bD_i \bbY_t \preceq 4\lambdau \bI$ and $\bbY_t$ is $(5k\mu)$-incoherent. Therefore, applying Lemma \ref{lem:update} we have 
\begin{align*} 
\tant(\btX_{t+1}, \bU) & \le \frac{\tant(\bbY_{t}, \bV)}{4 \valupdate} + \frac{16 \sqrt{k} \delta}{ \lambdal \sigma_{\min}(\bMg)} \\
& \le \frac{1}{2^{t+ 2} \valupdate} + \left(\frac{\valdelta}{4 \valupdate} + \frac{16 \sqrt{k}}{ \lambdal \sigma_{\min}(\bMg)} \right) \delta \\
& \le   \frac{1}{2^{t+ 2} \valupdate}  +\frac{32 \sqrt{k} }{ \lambdal \sigma_{\min}(\bMg)}\delta.
\end{align*} 
Now, we know that $\tant(\btX_{t+1}, \bU) \le \frac{d_{t + 1}}{2} $ for $d_{t+1} = \frac{1}{2^{t+1} \valupdate} + \frac{64\sqrt{k}}{ \lambdal \sigma_{\min}(\bMg)} \delta$. By Lemma \ref{lem:whitening}, 
\begin{align*}
\tant(\bbX_{t + 1}, \bU) & \le d_{t+1} \valupdate \\
& \le \left( \frac{1}{2^{t+ 1} \valupdate} + \frac{64\sqrt{k}}{ \lambdal \sigma_{\min}(\bMg)}  \delta \right) \valupdate  \\
& \le  \frac{1}{2^{t + 1}} +  \valdelta \delta.
\end{align*}
Using exactly the same argument we can show that $\tant(\bbY_{t + 1}, \bV)  \le  \frac{1}{2^{t + 1}} +  \valdelta \delta$.

Then the theorem follows by bounding $\|\bMg - \btM\|_2$ by $\tant(\bbY_{T + 1}, \bV), \tant(\bbX_{T + 1}, \bU)$ using the triangle inequality and the spectral property of $\bW$. For simplicity, let $\bX = \bbX_{T+1}$ and $\bY = \bbY_{T+1}$. 

By definition, we know that there exists $\bQ_x $ and $\bQ_y$ such that $\bX \bQ_x = \bU + \Delta_x$  and $\bX \bQ_y = \bV + \Delta_y$ where $\| \Delta_x\|_2 = O(\tant(\bX, \bU))$ and $\| \Delta_y\|_2 = O(\tant(\bY, \bV)) $.

\begin{align*}
\| \bW \hp (\bM - \bX \bbSigma \bY^\top) \|_2 \leq \| \bW \hp (\bM - \bX \bQ_x \bSigma \bQ_y \bY^\top) \|_2 \\ 
\leq \| \bW \hp (\bMg + \bNoise - \bX \bQ_x \bSigma \bQ_y \bY^\top) \|_2 \\ 
\leq \| \bW \hp (\bMg + - \bX \bQ_x \bSigma \bQ_y \bY^\top) \|_2  + \| \bW \hp \bNoise \|_2 \\ 
\leq \| \bW \hp (\bMg + - \bX \bQ_x \bSigma \bQ_y \bY^\top) \|_2  + \| \bW \hp \bNoise \|_2.
\end{align*}
On the other hand,
\begin{align*}
\| \bW \hp (\bM - \bX \bbSigma \bY^\top) \|_2 \geq \| \bW \hp (\bMg - \bX \bbSigma \bY^\top) \|_2 - \| \bW \hp \bNoise \|_2.
\end{align*}
Therefore, 
\begin{align*}
\| \bW \hp (\bMg - \bX \bbSigma \bY^\top) \|_2 \leq  \| \bW \hp (\bMg - \bX \bQ_x \bSigma \bQ_y \bY^\top) \|_2  + 2 \| \bW \hp \bNoise \|_2 \\
 = O(\tant(\bX, \bU) + \tant(\bY, \bV)) + O(\| \bW \hp \bNoise \|_2).
\end{align*}

Define $\Delta = \bMg - \bX \bbSigma \bY^\top$ and $\Delta' = \bX \bQ_x  \bSigma \bQ_y^\top \bY^\top - \bX \bbSigma \bY^\top$, and note that the difference between the two is $O(\tant(\bX, \bU) + \tant(\bY, \bV)).$
\begin{align*}
\| \Delta\|_2  & \le \| \bW \hp \Delta\|_2 + \| (\bW - \bE) \hp \Delta\|_2  \\ 
& \le \| \bW \hp \Delta\|_2 + \| (\bW - \bE) \hp \Delta'\|_2 + O(\tant(\bX, \bU) + \tant(\bY, \bV)).
\end{align*}

So now it is sufficient to show that $\| (\bW - \bE) \hp \Delta'\|_2 \le c \| \Delta\|_2$ for a small $c < 1/2$.
Now we apply Lemma~\ref{lem:weighted_unweighted}. Let $\bZ = \bQ_x  \bSigma \bQ_y^\top  - \bbSigma$.
\begin{align*}
\| (\bW - \bE) \hp \Delta'\|_2 & = \| (\bW - \bE) \hp (\bX \bQ_x  \bSigma \bQ_y^\top \bY^\top - \bX \bbSigma \bY^\top) \|_2\\
& = \| (\bW - \bE) \hp (\bX \bZ \bY^\top) \|_2 \\
& \leq c \| \bZ \|_2 
\end{align*}
for some small $c < 1/2$, since $\gamma$ is small and $\bX$ and $\bY$ are incoherent. Note that $\bX$ and $\bY$ are projections, so $\| \bZ \|_2 = \| \bX \bZ \bY^\top\|_2 $, then
\begin{align*}
\| (\bW - \bE) \hp \Delta'\|_2 \leq c \| \Delta\|_2.
\end{align*}
Combining all things we have $\| \Delta\|_2  = O(\tant(\bX, \bU) + \tant(\bY, \bV)) + O(\| \bW \hp \bNoise \|_2) = O(\tant(\bX, \bU) + \tant(\bY, \bV))$, which completes the proof.

\section{Empirical verification of the spectral gap property} \label{app:experiment}

\begin{figure}[t]
\centering
\includegraphics[width=0.4\textwidth]{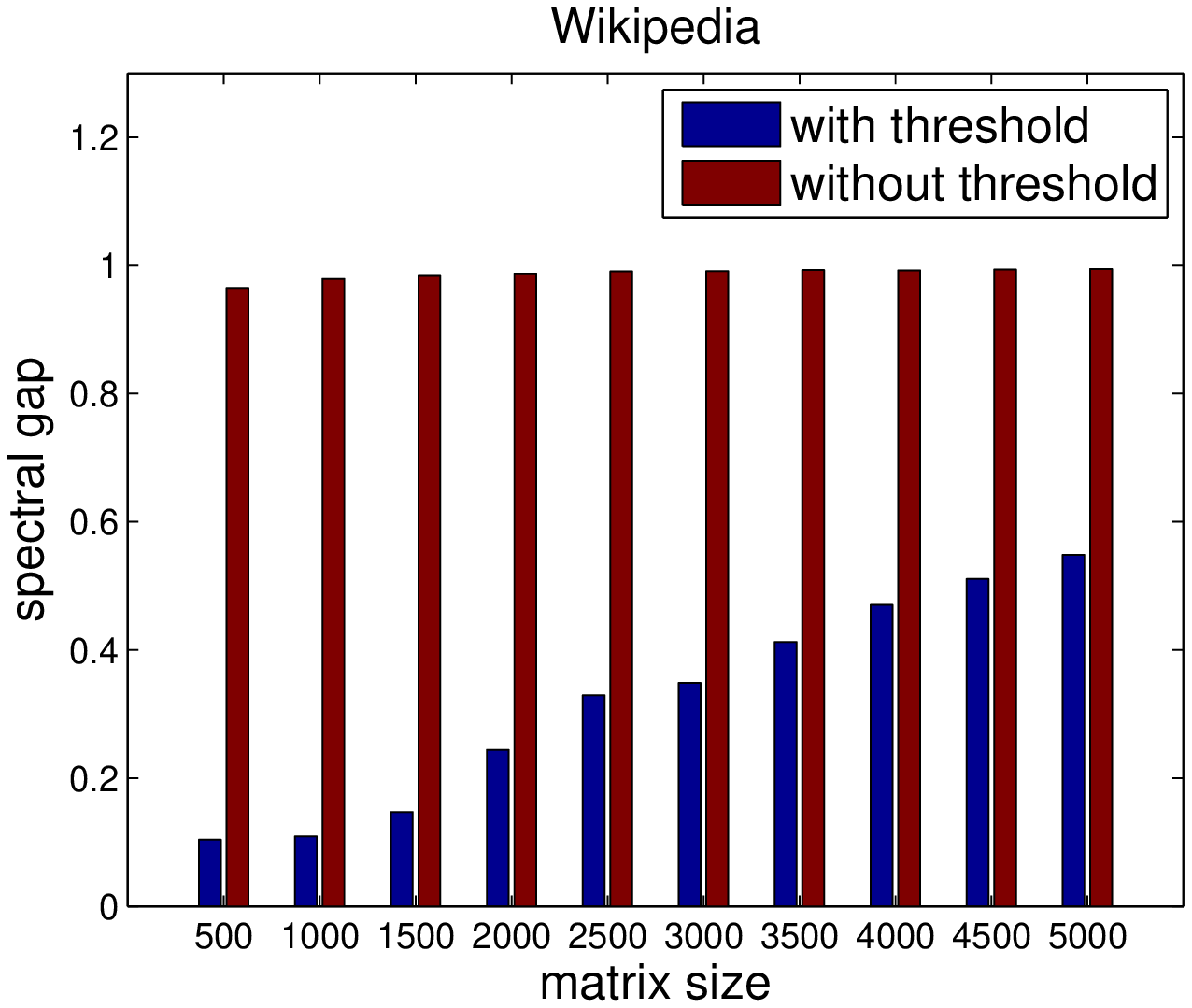}\includegraphics[width=0.4\textwidth]{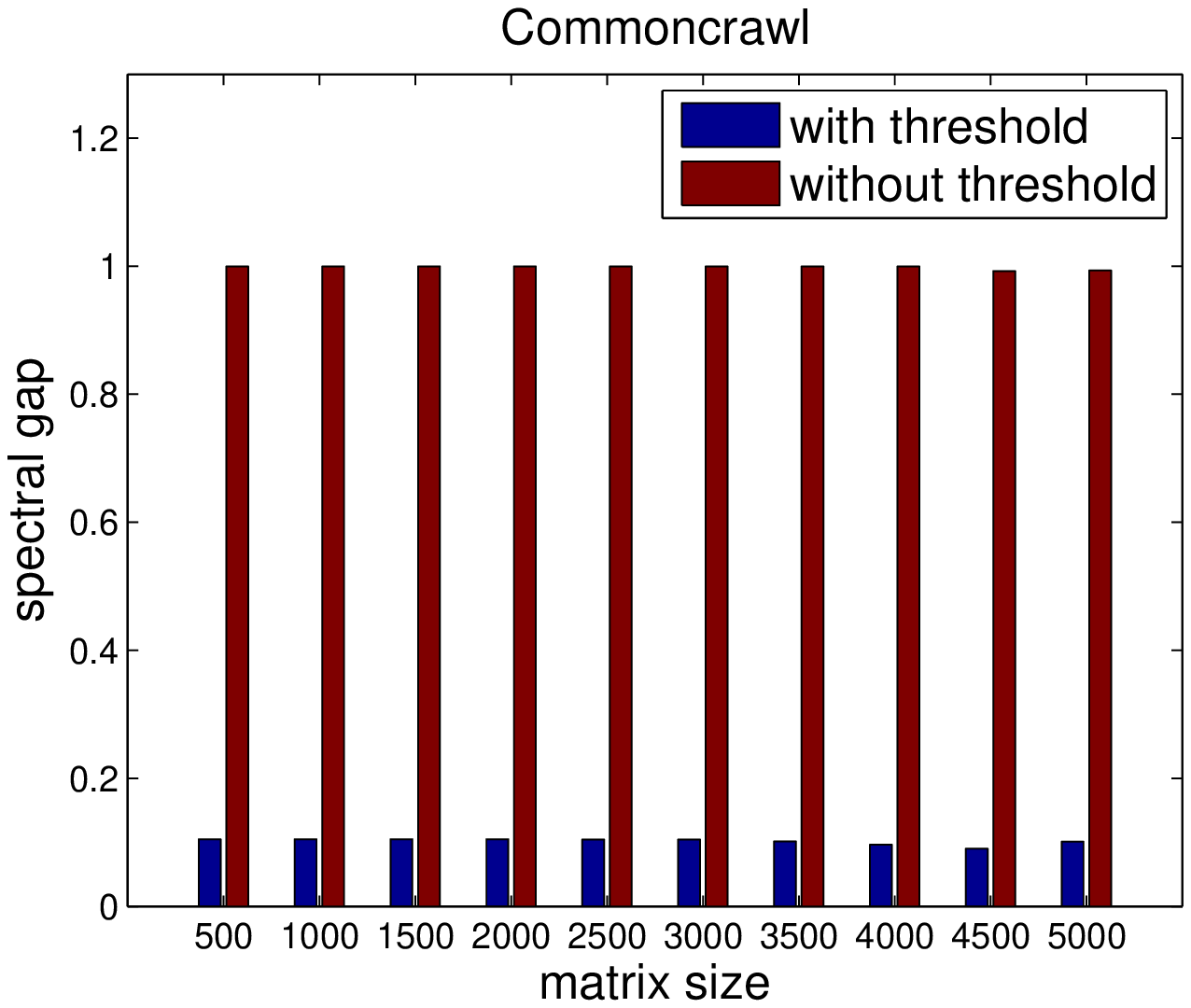}
\caption{Spectral gap of the weight matrix for word embeddings on two corpora. $x$-axis: number of words (size of the matrix); $y$-axis: the spectral gap $\|\bW - \bE\|_2$ where $\bE$ is the all-one matrix. } 
\label{fig:wlr}
\end{figure}

Experiments on the performance of the alternating minimization can be found in related work (e.g., \cite{lu1997weighted,srebro2003weighted}). Therefore, we focus on verifying the key assumption, i.e., the spectral gap property of the weight matrix (Assumption $\bold{(A2)}$). 

Here we consider the application of computing word embeddings by factorizing the co-occurrence matrix between the words, which is one of the state-of-the-art techniques for mapping words to low-dimensional vectors (about 300 dimension) in natural language processing. There are many variants (e.g., ~\cite{levy2014neural,pennington2014glove,arora2015random}); we consider the following simple approach. Let $X$ be the co-occurrence matrix, where $\bX_{i,j}$ is the number of times that word $i$ and word $j$ appear together within a window of small size (we use size $10$ here) in the given corpus.  Then the word embedding by weighted low rank problem is 
$$
  \min_{\bV} \sum_{i,j} f(\bX_{i,j}) \left(  \log\left(\frac{\bX_{i,j}}{X}\right) -  \langle \bV_i, \bV_j \rangle \right)^2 
$$
where $X = \sum_{i,j} \bX_{i,j}$, $\bV_i$'s are the vectors for the words, and $f(x) = \max\{\bX_{i,j}, 100\}$ for a large corpus and $f(x) = \max\{\bX_{i,j}, 10\}$ for a small corpus.

We focus on the weight matrix $\bW_{i,j} = f(\bX_{i,j})$. It has been observed that using $\bX_{i,j}$ as weights is roughly the maximum likelihood estimator under certain probabilistic model and is better than using uniform weights. It has also been verified that using the truncated  weight $f(\bX_{i,j})$ is better than using $\bX_{i,j}$. Our experiments suggest that $f(\bX_{i,j})$ is better partially due to the requirement that the weight matrix should have the spectral gap property for the algorithm to succeed.

We consider two large corpora (Wikipedia corpus~\cite{enwiki}, about 3G tokens; a subset of Commoncrawl corpus~\cite{Buck-commoncrawl}, about 20G tokens). For each corpus, we pick the top $n$ words ($n=500, 1000, \ldots, 5000$) and compute the spectral gap $\|\bW - \bE\|_2$ where $\bW$ is the weight matrix corresponding to the words, and $\bE$ is the all-one matrix. Note that a scaling of $\bW$ does not affect the problem, so we enumerate different scaling of $\bW$ (from $2^{-20}$ to $2^{10}$) and plot the best spectral gap.
We compare the two variants: with threshold ($\bW_{i,j} = f(\bX_{i,j})$), and without threshold ($\bW_{i,j} = \bX_{i,j}$).

The results are shown in Figure~\ref{fig:wlr}. Without threshold, there is almost no spectral gap. With threshold, there is a decent gap, though with the increase of the matrix size, the gap become smaller because larger vocabulary includes more uneven co-occurrence entries and thus more noise. This suggests that thresholding can make the weight matrix nicer for the algorithm, and thus leads to better performance.

\end{document}